\DeclareMathOperator{\softm}{Softmax}
\newcommand\p[2]{\frac{\partial #1}{\partial #2}}
\newcommand{\const}[2]{\frac{\sigma_{#1}}{\sqrt{d_{#2}}}}
\newcommand{\inR}[1]{\in \mathbb{R}^{#1}}
\DeclareMathOperator{\concat}{concat}
\DeclareMathOperator{\vect}{vec}
\DeclareMathOperator{\diag}{diag}
\DeclareMathOperator{\dropout}{dropout}
\DeclareMathOperator{\tr}{tr}
\DeclareMathOperator{\bm}{\mathrm{batchm}}
\newcommand{\WW}[1]{\overset{\scriptscriptstyle 1}{W} \vphantom{W}^{#1}}
\newcommand{\WWW}[1]{\overset{\scriptscriptstyle 2}{W} \vphantom{W}^{#1}}
\newcommand{\SSigma}[1]{\overset{\scriptscriptstyle 1}{\Sigma} \vphantom{\sigma}^{#1}}
\newcommand{\SSSigma}[1]{\overset{\scriptscriptstyle 2}{\Sigma} \vphantom{\sigma}^{#1}}
\newcommand\pv[2]{\frac{\partial\vect(#1)}{\partial\vect(#2)}}
\newcommand{\Exp}[1]{\mathbb{E}\bigl[ #1 \bigr]}
\DeclareTextFontCommand{\emph}{\boldmath\bfseries}
\newtheorem{theorem}{Theorem}[section]
\newtheorem{lemma}{Lemma}[section]
\newtheorem{corollary}{Corollary}[section]
\newenvironment{sproof}{%
\proof}{\endproof}
\title{Infinite Width Graph Neural Networks for Node Regression/ Classification}
\author{Yunus Cobanoglu\footnote{\href{mailto:yunus.cobanoglu@yahoo.de}{yunus.cobanoglu@yahoo.de}, work done as part of the Master's Thesis at the Department of Mathematics, Technical University of Munich}}
\date{} 
\begin{document}

\maketitle
\begin{abstract} \noindent
This work analyzes Graph Neural Networks, a generalization of Fully-Connected Deep Neural Nets on graph structured data, when their width, that is the number of nodes in each fully-connected layer is increasing to infinity. Infinite Width Neural Networks are connecting Deep Learning to \emph{Gaussian Processes} and \emph{Kernels}, both Machine Learning Frameworks with well-established theoretical foundations. Gaussian Processes and Kernels have less hyperparameters than Neural Networks and can be used for uncertainty estimation, making them more user-friendly for applications. This works extends the increasing amount of research connecting Gaussian Processes and Kernels to Neural Networks. 
The Kernel and Gaussian Process closed forms are derived for a variety of architectures, namely the standard Graph Neural Network, the Graph Neural Network with Skip-Concatenate Connections and the Graph Attention Neural Network. All architectures are evaluated on a variety of datasets on the task of transductive Node Regression and Classification. Additionally, a Spectral Sparsification method known as \textit{Effective Resistance} is used to improve runtime and memory requirements.
Extending the setting to inductive graph learning tasks (Graph Regression/ Classification) is straightforward and is briefly discussed in \ref{ind}.
\end{abstract}
\clearpage
\tableofcontents{}
\clearpage

\section{Introduction}\label{intro}
Graph Neural Networks (GNNs), introduced in the paper \citep{kipf2017semisupervised} demonstrated their effectiveness in tasks involving graph-structured data. GNNs have become a staple tool for data scientists and a prominent area of research in Machine Learning. In recent research, GNNs have been extended to different architectures such as Graph Attention Neural Networks \citep{veličković2018graph},
Transformer Graph Neural Networks \citep{shi2021masked} and to unsupervised learning tasks with a Graph (Variational) Autoencoder \citep{kipf2016variational}. \\
Another very active area of research are infinite width Neural Networks, also called Neural Tangent Kernels (NTK), introduced in the paper \citep{jacot2020neural}. NTK link infinite width Fully-Connected Deep Neural Nets (FCN) and Kernels. NTK theory has been expanded to various architectures \citep{yang2} and used to analyze theoretical properties like Generalization and Optimization \citep{arora2019exact, provably} and the importance of appropriate random initialization for successful training \citep{xiao2020disentangling, Seleznova}. \\
This work extends infinite width Neural Network theory to the setting of Graph Neural Networks. The main contributions\footnote{Code is available at 
\url{https://github.com/yCobanoglu/infinite-width-gnns} 
}
are:
closed form expressions for the GNN Gaussian Process (GNNGP) and Graph Neural Tangent Kernel (GNTK) for three different architectures on the task of Node Regression/ Classification:  the vanilla GNN, the Skip-Concatenate GNN and the Graph Attention Neural Network and 
evaluation of the GNNGP and GNTK to their Neural Network counterparts on a variety of datasets, including applying \textit{Effective Resistance} as a Spectral Graph Sparsification Method to improve memory and runtime requirements.

\subsection{Notation}
Superscript for matrices (i.e. $W^{L}$) are identifiers and not matrix powers. $I_{n} \inR{n \times n}$ is the identitiy Matrix.
We define the operator batchmultiply; $\bm(X, Y)_{ij} = \langle X, Y_{IJ} \rangle_{F}$ with $X \in \mathbb{R}^{n \times n} $ and $Y \in \mathbb{R}^{zn \times zn}$ for some integer $z$.
So $\bm(\cdot, \cdot)$ is taking the Frobenius Inner Product of $X$ and block $IJ$ of size $n \times n$ of $Y$ and $\bm(X, Y) \in \mathbb{R}^{z \times z}$. Concatenating vectors/matrices horizontally is denoted $\concat(\cdot, \cdot)$. The Hadamard Product is denoted with $\odot$. Convergence in probability is denoted as $\overset{P}{\longrightarrow}$.

\subsection{Graph Neural Networks generalize Fully-Connected Deep Nets}
GNNs are a powerful generalization of Fully-Connected Nets (FCN). GNNs are extending the FCNs by making use of dependent data samples, naturally incorporating unlabeled data, referred to as semi-supervised learning and allowing for transductive learning tasks such as Node Regression and Classification as well as inductive tasks like Graph Regression/ Classification.
We will start by establishing this connection between FCNs and GNNs.
A three layer FCN can be defined as follows:
\begin{align}
    f(x) &= W^{3} \sigma(W^{2}( \sigma (W^{1} x))  \inR{d_{3}} \text{ with $x  \inR{d_{0}}$ and weights $W^{l} \inR{d_{l} \times d_{l-1}}$.}
\end{align}
The single data point $x$ is a column vector of a data matrix $X\inR{d_{0} \times n}$.
Now defining a GNN, as in \cite{kipf2017semisupervised}. Given a graph adjacency matrix $A \in \mathbb{R}^{n \times n}$ and the data $X^{T} \in \mathbb{R}^{n \times d_{0}}$, the GNN is:
\begin{align}\label{axw}
g(X) = A\sigma(A\sigma(AX^{T}W^{1})W^{2})W^{3}  \inR{n  \times d_{L}} 
\end{align}
with $W^{l} \inR{d_{l-1} \times d_{l}}$.
There are different options for the adjacency matrix $A$. The adjacency matrix can be the 0-1 adjacency matrix of an undirected graph, denoted as $A_{0\textbf{-}1}$. $A$ can be $A_{0\textbf{-}1} + I$, which is the 0-1 adjacency matrix with self loops. $A$ can be the Laplacian $L = D - A$ with $D$ being the degree matrix, $D_{ii} = \sum_{ij} A_{ij}$. One can use row, column or row \& column normalization on the Laplacian.
$A$ can be Degree normalized according to \cite{kipf2017semisupervised} which is $A= (D+I)^{-\frac{1}{2}} (A + I)  (D+I)^{-\frac{1}{2}}$, with D being defined as above.
If we set $A$ as the Identity matrix $I$ and take the transpose of $g(X)$, we end up with:
\begin{align}
    g(X)^{T} =  W^{3^{T}} \sigma(W^{{2^{T}}}( \sigma (W^{1^{T}} X^{T})) \inR{d_{3} \times n} 
\end{align}
$g(X)^{T}$ is equivalent to a FCN $f(X)$ where we pass as input all the (training and test) data $X$ instead of a single data point $x$. 
In the case of Node Regression, the training loss is minimized as follows:
\begin{align}
    \underset{W}{\operatorname{argmin}}(||(Y^{T} - g(X)^{T}) I_{train} ||_{F}^{2}) 
    \text{  with $I_{train} = \diag([0,0,1,0,1,...,0,1])$}
\end{align}
$I_{train} \inR{n \times n}$ consists of randomly selected diagonal 0,1 entries which correspond to the training samples. $I_{test}$ is defined as $I_{test}= I_{n} - I_{train}$. We have labels Y $\inR{n}$ and $I_{train}$ is randomly selecting columns of $g(X)^{T}$ over which the training loss is minimized. Columns in $g(X)^{T}$ correspond to individual training samples when the GNN reduces to a FCN (i.e. $g(X)^{T} = f(X)$) or individual nodes in when $g(X)^{T}$ is a proper GNN, in which case we will call this setup Node Regression/ Classification.
The task of Node Regression/ Classification for the GNN with the Identity matrix is equivalent to training a FCN with Gradient descent.
Stochastic Gradient Descent (SGD) can be generalized to GNNs straightforwardly, too.
\begin{align}
    g(X)^{T} =  W^{3^{T}} \sigma(W^{{2^{T}}}( \sigma (W^{1^{T}} X^{T} \dropout(I))) 
\end{align}
Sampling from $I$ without replacement some subset with size k is equivalent to minimizing our objective using SGD with batch size k. This can be emulated using dropout.
\begin{align}
    g(X)^{T} =  W^{3^{T}} \sigma(W^{{2^{T}}}( \sigma (W^{1^{T}} X^{T} \dropout(A))A)A 
\end{align}
Dropout is commonly applied to both the adjacency matrix $A$ and weights in GNN layers. Recognizing GNNs as generalizations of FCNs, the same applies to GNNGP and NNGP, as well as GNTK and NTK. Substituting the Identity matrix $I$ for $A$ in GNNGP and GNTK reduces to the NNGP and the NTK. \\
\\
Node Regression/Classification with GNNs involves transductive learning, where the model can incorporate test data during training, see \citep[transductive learning Section 10.1, Setting 1]{Vapnik06} for a formal definition. The expression $AXW$ in \ref{axw} is nothing else $(AXW)_{i}= \sum_{k \in \text{Neighbors}(i)} x_{k} W$ (for standard the 0-1 adjacency matrix $A$). This summation includes neighboring nodes that could be part of the test set, making it a transductive learning task. Unlabeled nodes, similar to test nodes, can be used during training, hence Node Regression/Classification is sometimes called semi-supervised learning.

\section{Related Work}\label{related}
The GNTK formulation for inductive Graph Classification/Regression was initially established in \cite{DuHSPWX19}. Note that any Node Classification GNN architecture can be used for Graph Classification by simply incorporating an aggregation layer (such as Max Pooling or averaging) as the final layer (see Section \ref{ind}). Consequently, the GNTK for Node Classification serves as a fundamental building block for constructing the GNTK for Graph Classification.
In the derivation by \cite{DuHSPWX19}, the GNTK is established for inductive learning tasks related to Graph Classification/Regression. Their BLOCK Operation, which corresponds to the GNTK for Node Classification/Regression, differs from our approach in several aspects. Notably, their derivation relies on the assumption of 0-1 adjacency with added self-loops. Moreover, since the GNTK formula is derived only for two elements of the Kernel matrix $\Theta_{ij}$, the adjacency information is implicitly encoded in the final expression. Therefore, the closed form is not stated in the most general way. Consequently, no conclusive insights can be drawn regarding the impact on the final expression of the NTK, such as the positive definiteness of the Kernel. In contrast, the GNTK formula derived in our work imposes no assumptions on the adjacency matrix A. The derivation is done using basic tools from Linear Algebra. Notably, \cite{DuHSPWX19} focuses exclusively on inductive learning and does not conduct experiments related to Node Classification/Regression tasks. Furthermore, there are no simulations demonstrating whether the GNTK aligns with NTK Theory for GNNs. Simulations presented in Section \ref{sims} reveal that NTK assumptions do not hold when using the 0-1 adjacency but are satisfied when employing the Kipf \& Welling normalized adjacency.
The GNTK formulas in this work are also easily extended to the inductive learning setting, see \ref{ind}. \\
\\
\cite{niu2023} derived the GNNGP expression for the standard GNN with the 0-1 adjacency matrix known as GIN \citep{xu2019powerful}, the Kipf \& Welling normalized adjacency as defined in section \ref{intro} and two variants of GNNS with Skip-Connections, namely the GraphSage \citep{hamilton2018inductive} and GCNII \citep{pmlr-v119-chen20v}. These Skip-Connections differ from our Skip-Concatenate, as they involve summation, while ours entails concatenation. 
Work on GNTK for Node Classification/Regression is explored by \citep{gntk1}, although their derivation of the GNTK is incorrect.
\begin{theorem}[\textbf{incorrect}, from {\cite[Theorem 1 (NTK for Vanilla GCN)]{gntk1}}]\label{th: NTK for GCN}
For the vanilla GCN defined in \eqref{axw} (with the difference that S is the Adjacency Matrix and not A),
the NTK $\Theta$ is given by
\begin{align}
      \Theta &= \biggr[ \sum_{i=1}^{d+1} \Sigma_i \odot ( SS^T)^{ \odot (d+1-i)} \odot   \bigl( \odot^{d+1-i}_{j=i} \dot{E}_j \bigr) \biggr] \\
       &\odot \, \underset{f\sim \mathcal{N}(0,\Sigma_d)}{\mathbb{E}} [\dot{\Phi}( f ) \dot{\Phi} (f)^T]
\end{align}
Here $\Sigma_i\inR{n \times n}$ is the co-variance between nodes of the layer $f_i$, and is given by $\Sigma_1 := SXX^TS^T$, $\Sigma_i := S E_{i-1} S^T$ with $E_i:= c_\sigma \underset{f \sim \mathcal{N}(0, \Sigma_{i})}{\mathbb{E}}[\sigma(f) \sigma(f)^T]$ and $\dot{E}_i:= c_\sigma \underset{f \sim \mathcal{N}(0, \Sigma_{i})}{\mathbb{E}}[\dot{\sigma}(f) \dot{\sigma}(f)^T]$. $\Phi(x)  \approx \frac{1}{1+ e^{-x}}$ (i.e Sigmoid function).
\end{theorem}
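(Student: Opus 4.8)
The plan is to derive the kernel from first principles with the standard NTK machinery, viewing the vanilla GCN $g(X)=S\sigma(S\sigma(SX^{T}W^{1})W^{2})\cdots W^{d+1}$ as an alternation of three elementary operations: neighbourhood aggregation $M\mapsto SM$, a pointwise nonlinearity $\sigma$, and a learnable linear map $M\mapsto MW^{\ell}$. First I would fix the forward pass, naming the pre-activations $f_{i}$ and post-activations $\sigma(f_{i})$ at each layer, and take the width of every hidden layer to infinity. Invoking the Gaussian behaviour of the pre-activations, I would recover the NNGP recursion $\Sigma_{1}=SX^{T}XS^{T}$ and $\Sigma_{i}=SE_{i-1}S^{T}$ with $E_{i-1}=c_{\sigma}\,\mathbb{E}_{f\sim\mathcal{N}(0,\Sigma_{i-1})}[\sigma(f)\sigma(f)^{T}]$. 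This half is routine and agrees with the statement; note already that aggregation enters the covariance through the two-sided product $S(\cdot)S^{T}$, not through any elementwise operation.

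Next I would assemble the tangent kernel $\Theta=\sum_{\ell}\langle\partial_{W^{\ell}}g,\partial_{W^{\ell}}g\rangle$ by collecting, layer by layer, the gradient contribution of each weight block. The contribution of $W^{i}$ enters as the fresh covariance $\Sigma_{i}$ and is then transported upward through every subsequent layer $j>i$. Applying the chain rule through one nonlinearity and one aggregation, and passing to the infinite-width limit, each such step contributes a derivative factor $\dot{E}_{j}=c_{\sigma}\,\mathbb{E}[\dot\sigma(f)\dot\sigma(f)^{T}]$ acting by Hadamard product together with an aggregation acting again by the conjugation $M\mapsto SMS^{T}$. This produces the layerwise recursion $\Theta^{(1)}=\Sigma_{1}$ and $\Theta^{(j)}=\Sigma_{j}+S\bigl(\dot{E}_{j}\odot\Theta^{(j-1)}\bigr)S^{T}$, with the final sigmoid read-out supplying the trailing factor $\mathbb{E}_{f\sim\mathcal{N}(0,\Sigma_{d})}[\dot\Phi(f)\dot\Phi(f)^{T}]$.

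I expect the decisive step, and the precise point at which the claimed closed form becomes unreachable, to be the attempt to unroll this recursion into the single sum of Hadamard terms stated in the theorem. Unrolling $\Theta^{(j)}=\Sigma_{j}+S(\dot{E}_{j}\odot\Theta^{(j-1)})S^{T}$ yields nested conjugations $S\bigl(\cdots S(\cdots)S^{T}\cdots\bigr)S^{T}$, and these cannot be collapsed into Hadamard powers $(SS^{T})^{\odot(d+1-i)}$: the identity $SMS^{T}=M\odot(SS^{T})$ fails for general $M$ (it holds only in degenerate cases such as diagonal $S$), because matrix multiplication and the Hadamard product neither commute nor associate with one another. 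The $(SS^{T})^{\odot k}$ factors in the statement are exactly what one obtains by iterating the incorrect substitution $M\mapsto M\odot SS^{T}$ in place of $M\mapsto SMS^{T}$, which also explains the malformed index range in the inner product of $\dot{E}_{j}$'s. A faithful derivation therefore terminates at the nested recursion above rather than at the quoted formula, so the main obstacle is not a missing computation but the conflation of $S(\cdot)S^{T}$ with $\odot\,SS^{T}$, which is the error the label \textbf{incorrect} flags.
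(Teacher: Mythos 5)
Your derivation is essentially sound and arrives at the same correct recursion as the paper's Theorem \ref{th:gntk}, namely $\Theta^{L} = A\left( \Lambda^{L-1} + \left( \dot{\Lambda}^{L-1}\odot\Theta^{L-1}\right)\right) A^{T}$, and your diagnosis of the source of the error --- iterating the substitution $M\mapsto M\odot(SS^{T})$ where the chain rule actually produces $M\mapsto SMS^{T}$ --- is exactly the structural confusion that generates the Hadamard powers $(SS^{T})^{\odot(d+1-i)}$ in the quoted formula. However, your route is genuinely different from the paper's. The paper does not re-derive anything at this point; it refutes the theorem with a one-line sanity check: substitute $S=I$. The claimed formula must then reduce to the known NTK of a fully-connected network (Section \ref{ntk}), but since $SS^{T}=I$ and Hadamard multiplication by $I$ annihilates all off-diagonal entries, the quoted expression collapses to a matrix that is zero off the diagonal, which cannot equal the dense FCN NTK. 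That argument is shorter and fully rigorous as a refutation, because it exhibits a concrete evaluation at which the two sides provably disagree. Your argument instead shows that a faithful derivation \emph{terminates} at the nested-conjugation recursion and then asserts that this recursion ``cannot be collapsed'' into the quoted Hadamard form; strictly speaking, two syntactically different expressions could still coincide, so to close this gap you should evaluate both at a concrete $S$ --- and $S=I$ is the cheapest choice, at which your own recursion returns the dense FCN NTK while the quoted formula is diagonal. Two smaller remarks: your parenthetical that $SMS^{T}=M\odot(SS^{T})$ holds for diagonal $S$ is false (for diagonal $S$ with diagonal vector $s$ one has $SMS^{T}=M\odot(ss^{T})$, whereas $M\odot(SS^{T})$ is diagonal, so the identity fails there too; as linear maps on $M$ the two operations essentially never agree). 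On the other hand, what your approach buys that the paper's check does not is constructive content: it produces the correct kernel and explains the provenance of the mistake, rather than merely certifying that the published formula is wrong.
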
 \noindent
Replacing $S$ with the Identity matrix in the GNTK formula does not yield the NTK for Fully-Connected Deep Nets. The final result will be a Kernel matrix with all entries zero other than the entries on the diagonal which can not be correct. For comparison the correct GNTK is given in \ref{th:gntk}, while the NTK expression is given in \ref{ntk}).
\cite{sabanayagam2023representation} used this wrong GNTK formula in subsequent work to investigate representational properties. The derivation of the Graph Attention NTK and GP relies on results from \cite{pmlr-v119-hron20a, yang1}, as the NTK and GP for Attention Neural Networks closely relate to the Graph Attention Neural Network.
\section{Infinite Width Graph Neural Networks}\label{mains}
Before stating the main results of this work we are going to summarize the results on Infinite Width Fully-Connected Networks as our derivations for Infinite Width Graph Neural Networks will be reduced to the Fully-Connected derivations. 
\subsection{Recap: Fully-Connected Deep Nets}\label{recap}
In Lemma \ref{ntk:lemma} we prove how the output of an FCN during training with $l_{2}$ loss with Gradient Flow (i.e. infinitesimally step size for Gradient Descent) can be described by the following differential equation:
{\small
\begin{gather} \label{diffeq}
    \frac{du(\theta(t))}{dt} =\p{u(\theta(t))}{\theta}  \left( \p{u(\theta(t))}{\theta}  \right)^{T} 
    \left( u(\theta(t)) - \vect(Y) \right)
\end{gather}
}%
with $u(\theta(t)):=\vect(F(\theta(t), X))$
 and $\theta$ being all trainable parameters flattened to a vector. This equation nothing is else then Kernel Regression under Gradient Flow.
When all widths of the hidden Layers of $F(\theta(t), X))$ go to infinity, $\p{u(\theta(t))}{\theta}  \left( \p{u(\theta(t))}{\theta} \right)^{T}= \Theta$ has a closed form expression which stays constant during training and the equation reduces to Kernel Regression. $\Theta$ is referred to as the NTK.
For increased width during training we can observe 1) the training loss approaches zero 2) the weights of the GNN stay close to its initialization (measured in relative Frobenius norm) and the empirical NTK, (i.e. $\p{u(\theta(t))}{\theta}  \left( \p{u(\theta(t))}{\theta} \right)^{T}$) stays close to its initialization during training. All these observations could be proven in the case of Fully-Connected Deep Nets \citep{arora2019exact, jacot2020neural}.
The proof of 1) consists of showing that the empirical NTK stays positive definite during training \citep[Lemma 3.3]{du2018gradient}. Proof of 2) consists of first showing that during initialization the empirical NTK is close to the closed form formula (which is derived by letting the widths of each layer go to infinity). After that one is left with showing that during training the weights do not change by a lot using Gradient Flow Dynamics and interpreting training as a Perturbation on the Networks' weights. For a detailed proof, the reader is referred to \cite{arora2019exact}.
Simulations in \ref{sims} confirm 1), 2) and 3) for wide widths GNNs for Node Regression/ Classification using the Kipf \& Welling Normalized Adjacency Matrix and the GAT Model with the 0-1 Adjacency with self loops.
In the following sections we are stating the closed form expressions for the NNGP and the NTK for Fully-Connected Deep Neural Nets.
We are going to formally define FCNs and rederive the NNGP and NTK closed form expressions.
The Weight Matrices are $W^{h} \in \mathbb{R}^{d_{h} \times d_{h-1}}$ and the data Matrix is $X \in \mathbb{R}^{d_{0} \times n }$ where  $d_{0}$ is the feature dimension of the data and $n$ is the number of data samples. $B$ denotes the rank-one bias matrix.
$\sigma$ is a non-linearity (e.g. Relu) and is applied elementwise.

\begin{restatable}[Fully-connected Deep Neural Nets]{definition}
{deffcn}\label{fcn-ef}
\begin{align}
F^{h}(X) &= \const{w}{h-1} W^{h} G^{h-1}(X) + \sigma_{b} B^{h}  \in \mathbb{R}^{d_{h} \times n} \\
\text{with } B^{h}&= b^{h} \otimes \mathbf{1}_{n}^{T} \text{ and } b^{h} \in \mathbb{R}^{d_{h}} \\ 
G^{h}(X) &= \sigma(F^{h}(X)) \\
G^{0} &= X \inR{d_{0} \times n}
\end{align}
The Weights are initialized as $W^{h} \sim N(0,1)$ and $b^{h} \sim N(0, 1)$. \\
The final Network is $F^{L}(X) = \frac{\sigma_{w}}{\sqrt{d_{L-1}}} W^{L} G^{L-1}(X) + \sigma_{b} B^{L} \in \mathbb{R}^{d_{L} \times n}$
\end{restatable}

\subsubsection{Gaussian Process}
The FCN and NNGP equivalence first was shown by \citep{Neal1996} for a one Layer Neural Network $F^{1}(X)$ and later proven for Infinite Width by \citep{lee2018deep, gp1}. The simplest form of this proof requires the width of each layer of the FCN to go to infinity successively. \citep{gp1} could refine the proof by requiring all layers to go to infinity simultaneously. 

\begin{restatable}[Neural Network GP]{theorem}{thnngp}\label{th:nngp}\ \\
If all the Weight dimensions $d_{L-1},d_{L-1}, ..., d_{1}$ other than the input and the output dimensions of a Neural Network $F(X)^{L}$ successively go to infinity then $\vect(F(X)^{L}) \sim$ GP(0, $\Lambda^{L-1}\otimes I_{d_{L}}$ ), with \\
\begin{gather}
    \Lambda^{L} = \sigma^{2}_{w}\begin{pmatrix} 
    \mathbb{E} \bigl[ \sigma(u_{1}) \sigma(u_{1}) \bigr] & \mathbb{E} \bigl[ \sigma(u_{1}) \sigma(u_{2}) \bigr] \dots  & \mathbb{E} \bigl[ \sigma(u_{1}) \sigma(u_{n}) \bigr]\\
    \vdots & \ddots & \vdots\\
    \mathbb{E} \bigl[ \sigma(u_{n}) \sigma(u_{1}) \bigr] & \dots  & \mathbb{E} \bigl[ \sigma(u_{n}) \sigma(u_{n}) \bigr]
    \end{pmatrix} + \sigma_{b}^{2}\\[10pt]
    \text{ with } u \sim GP(0, \Lambda^{L-1}) \\
    \intertext{$\Lambda^{L} $ can be written as the outer product of the random vector $u$} \Lambda^{L}= \sigma^{2}_{w}\mathbb{E} \left[ \sigma(u) \sigma(u^{T}) \right] +\sigma_{b}^{2}\\
    \intertext{and the base case concludes the theorem}
    \Lambda^{0} = \frac{\sigma^{2}_{w}}{d_{0}} X^{T}X   + \sigma_{b}^{2}
\end{gather}
\end{restatable}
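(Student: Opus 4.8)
The plan is to induct on the layer index $h$, sending the hidden widths $d_1, \dots, d_{L-1}$ to infinity one at a time in increasing order, and to show at each stage that the rows of the pre-activation matrix $F^h(X)$ become independent zero-mean Gaussian vectors sharing the common covariance $\Lambda^{h-1}$ across the $n$ data columns. The engine of the argument is the Central Limit Theorem applied to each pre-activation entry, which is a normalized sum of $d_{h-1}$ contributions indexed by the neurons of the previous layer. Note that only $d_{h-1}$ needs to diverge to drive the CLT at layer $h$; the output width $d_L$ stays finite and merely labels independent copies, which is why the theorem does not require $d_L\to\infty$.

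\textbf{Base case.} For $h=1$ no limit is needed. Since $F^1(X) = \const{w}{0} W^1 X + \sigma_b B^1$ with $W^1$ having i.i.d. $N(0,1)$ entries, every entry $(F^1)_{ij}$ is a fixed linear combination of independent Gaussians, hence the collection is jointly Gaussian with mean zero. A direct computation gives
\[
\Cov\bigl((F^1)_{ij},(F^1)_{ij'}\bigr) = \frac{\sigma_w^2}{d_0}(X^TX)_{jj'} + \sigma_b^2 = \Lambda^0_{jj'},
\]
where the $\sigma_b^2$ appears in every entry because the bias $b^1_i$ is shared across columns; rows $i \neq i'$ are independent because they use disjoint rows of $W^1$. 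Stacking columns gives $\vect(F^1) \sim \mathrm{GP}(0, \Lambda^0 \otimes I_{d_1})$, so the $d_1$ rows are i.i.d. draws from $\mathrm{GP}(0,\Lambda^0)$.

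\textbf{Inductive step.} Suppose that after sending $d_1, \dots, d_{h-2}$ to infinity the rows of $F^{h-1}(X)$ are i.i.d. samples from $\mathrm{GP}(0,\Lambda^{h-2})$, so the rows of $G^{h-1} = \sigma(F^{h-1})$ are i.i.d. copies of $\sigma(u)$ with $u \sim \mathrm{GP}(0,\Lambda^{h-2})$. Writing
\[
(F^h)_{ij} = \const{w}{h-1} \sum_{k=1}^{d_{h-1}} W^h_{ik}(G^{h-1})_{kj} + \sigma_b b^h_i,
\]
I would condition on $G^{h-1}$; the $W^h_{ik}$ are then i.i.d. $N(0,1)$, so for each fixed neuron $i$ the vector $\bigl((F^h)_{ij}\bigr)_{j=1}^n$ is a normalized sum of the $d_{h-1}$ independent zero-mean vectors $\const{w}{h-1} W^h_{ik}(G^{h-1})_{k,:}$. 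The multivariate CLT forces this vector to a centered Gaussian as $d_{h-1}\to\infty$, whose covariance is the limit of the conditional covariance
\[
\frac{\sigma_w^2}{d_{h-1}} \sum_{k=1}^{d_{h-1}} (G^{h-1})_{kj}(G^{h-1})_{kj'} + \sigma_b^2.
\]
By the law of large numbers applied to the i.i.d. rows of $G^{h-1}$, this converges to the deterministic limit $\sigma_w^2\,\Exp{\sigma(u_j)\sigma(u_{j'})} + \sigma_b^2 = \Lambda^{h-1}_{jj'}$, while distinct neurons $i$ stay independent throughout. This promotes the inductive hypothesis from level $h-1$ to level $h$, and taking $h=L$ yields $\vect(F^L)\sim\mathrm{GP}(0,\Lambda^{L-1}\otimes I_{d_L})$.

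The main obstacle is making the conditional CLT rigorous, since the summands depend on the random matrix $G^{h-1}$ rather than being i.i.d. with a fixed law. The clean way around this is precisely the \emph{successive} (rather than simultaneous) limit: once $d_1,\dots,d_{h-2}$ have already been sent to infinity, the rows of $G^{h-1}$ are exactly i.i.d., so conditionally on $G^{h-1}$ the summands form a triangular array to which the Lindeberg CLT applies, and the random conditional covariance converges almost surely to the constant $\Lambda^{h-1}$ by the LLN. The conditional Gaussian limit and the covariance limit can then be combined (e.g. via characteristic functions together with dominated convergence) provided $\sigma$ obeys a mild linear-growth bound ensuring $\Exp{\sigma(u_j)^2}<\infty$; this is exactly the regularity needed for each $\Lambda^{h}$ to be well defined, so it costs nothing extra.
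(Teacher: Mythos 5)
Your proposal is correct and follows the same skeleton as the paper's proof: induction on depth with successive limits, an exact computation for the first layer, and conditioning on the previous layer's output $G^{h-1}$ in the inductive step, with the law of large numbers turning the random conditional covariance $\frac{\sigma_w^2}{d_{h-1}}\sum_{k}(G^{h-1})_{kj}(G^{h-1})_{kj'}+\sigma_b^2$ into the deterministic limit $\Lambda^{h-1}_{jj'}$. The one place you genuinely diverge is the source of Gaussianity in the inductive step: you invoke a conditional Lindeberg CLT for the triangular array of summands $\frac{\sigma_w}{\sqrt{d_{h-1}}}W^{h}_{ik}(G^{h-1})_{k\cdot}$, whereas the paper observes that, conditionally on $G^{h-1}$, the pre-activations are \emph{exactly} Gaussian at every finite width --- a linear combination of the i.i.d.\ $N(0,1)$ entries of $W^{h}$ plus an independent Gaussian bias is Gaussian with no limit theorem required. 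This makes the ``main obstacle'' you describe (rigorizing the conditional CLT) disappear entirely: the only stochastic convergence needed is the LLN for the random covariance, after which one passes to the unconditional limit exactly as you suggest, via characteristic functions and dominated convergence under the polynomial-growth assumption on $\sigma$. What your heavier route buys is generality: the CLT argument survives if the weights are i.i.d.\ non-Gaussian with unit variance, a case the exact-Gaussianity shortcut cannot handle. Within the theorem as stated (Gaussian initialization), the paper's argument is strictly simpler, and you may want to note that your CLT step can simply be deleted.
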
 \noindent
The Integral $\mathbb{E}\left[ \sigma(u) \sigma(u^{T}) \right] $ with $u$ being a multivariate Gaussian vector can be calculated in closed form for certain activation functions like Relu. 
\cite{asd123} present an overview with closed form expressions for activation functions like Relu, Error function, Leaky Relu, Exponential, RBF, and others.
For a list of efficient implementation see  \cite{neuraltangents2020}. The proof of Theorem \ref{th:nngp} is given in \ref{th:nngpproof}.
\begin{sproof}\renewcommand{\qedsymbol}{}\ 
The proof is done by induction on the depth $l$ for $F^{l}(X)$. The Base Case is established on the fact that the output of $\vect(Y^{1})=\vect(F(X)^{1})$ is a weighted sum of zero mean multivariate Gaussian. The induction step is based on the fact that the output of $\vect(F^{l+1}(X))$ conditioned on $\vect(G^{l}(X))$ is again a Zero Mean Multivariate Gaussian. When the output of the previous layer goes to infinity, so $d_{l} \rightarrow \infty$, we will have an infinite sum which will converge to its mean by the Law of Large Numbers.
\end{sproof}
\clearpage

\subsubsection{Neural Tangent Kernel}\label{ntk}
\begin{restatable}[NTK]{theorem}{thnntk}\label{th:ntk}\ \\
If all the Weight dimensions $d_{L-1},d_{L-1}, ..., d_{1}$ other than the input and the output dimensions of a Neural Network $F(X)^{L}$  successively go to infinity, the NTK has a closed form, namely $\Theta^{L} \otimes I_{d_{l}}$ with
\begin{gather}
\Theta^{L} = \Lambda^{L-1} +  \left( \dot{\Lambda}^{L-1}  \odot \Theta^{L-1} \right) \\
    \Lambda^{L} = \sigma^{2}_{w}\mathbb{E} \left[ \sigma(u) \sigma(u^{T}) \right] +\sigma_{b}^{2} \\
        \dot{\Lambda}^{L} = \sigma^{2}_{w}\mathbb{E} \left[ \dot{\sigma}(u) \dot{\sigma}(u^{T}) \right] +\sigma_{b}^{2} 
        \intertext{\centering with $u \sim N(0, \Lambda^{L-1})$}
        \Theta^{1} = \Lambda^{0} = \frac{\sigma^{2}_{w}}{d_{0}} X^{T}X + \sigma^{2}_{b}
\end{gather}
\end{restatable}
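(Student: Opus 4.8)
The plan is to prove the recursion by induction on the depth $L$, writing the empirical NTK as a sum of per-layer contributions and identifying the infinite-width limit of each. Recall that the NTK is the Gram matrix $\p{u}{\theta}\left(\p{u}{\theta}\right)^{T}$ with $u=\vect(F^{L}(X))$ and $\theta$ collecting all weights and biases. Since the parameters are disjoint across layers, this Gram matrix splits additively, $\p{u}{\theta}\left(\p{u}{\theta}\right)^{T}=\sum_{h=1}^{L}\left[\p{u}{\vect(W^{h})}\left(\p{u}{\vect(W^{h})}\right)^{T}+\p{u}{b^{h}}\left(\p{u}{b^{h}}\right)^{T}\right]$, and I would treat the last layer ($h=L$) separately from all earlier layers ($h<L$).

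For the base case $L=1$ the network $F^{1}(X)=\const{w}{0}W^{1}X+\sigma_{b}B^{1}$ is linear in its parameters, so the Jacobian is constant and a direct computation gives $\Theta^{1}=\frac{\sigma_{w}^{2}}{d_{0}}X^{T}X+\sigma_{b}^{2}=\Lambda^{0}$, matching the claim. For the last-layer contribution in the inductive step, $F^{L}$ is again linear in $W^{L}$ and $b^{L}$ given $G^{L-1}$, so $\p{\vect(F^{L})}{\vect(W^{L})}=\const{w}{L-1}\left((G^{L-1})^{T}\otimes I_{d_{L}}\right)$. The resulting Gram term equals $\frac{\sigma_{w}^{2}}{d_{L-1}}\left((G^{L-1})^{T}G^{L-1}\right)\otimes I_{d_{L}}$ plus a bias term contributing the constant $\sigma_{b}^{2}$; by the Law of Large Numbers (exactly as in the NNGP proof of Theorem \ref{th:nngp}) $\frac{1}{d_{L-1}}(G^{L-1})^{T}G^{L-1}\to\E[\sigma(u)\sigma(u^{T})]$ with $u\sim N(0,\Lambda^{L-2})$, so this contribution converges to $\Lambda^{L-1}\otimes I_{d_{L}}$, the first summand of the recursion.

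For the earlier layers ($h<L$) I would propagate gradients with backpropagation. Writing $\delta^{h}:=\p{u}{\vect(F^{h})}$ for the backprop Jacobian, the chain rule gives $\delta^{h}=\delta^{h+1}\,\const{w}{h}\,(I_{n}\otimes W^{h+1})\,\diag\big(\vect(\dot{\sigma}(F^{h}))\big)$, and the contribution of layer $h$ to the Gram matrix is $\frac{\sigma_{w}^{2}}{d_{h-1}}\,\delta^{h}\big((G^{h-1})^{T}G^{h-1}\otimes I_{d_{h}}\big)(\delta^{h})^{T}$. Every $\delta^{h}$ with $h<L$ factors through the single topmost step $\delta^{L-1}=\const{w}{L-1}(I_{n}\otimes W^{L})\diag\big(\vect(\dot{\sigma}(F^{L-1}))\big)$, so the whole sum over $h<L$ can be written as $\delta^{L-1}$ applied, on both sides, to the empirical NTK of the depth-$(L-1)$ sub-network, which by the induction hypothesis converges to $\Theta^{L-1}\otimes I_{d_{L-1}}$. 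It then remains to take the width limit of $\frac{\sigma_{w}^{2}}{d_{L-1}}(I_{n}\otimes W^{L})\diag(\vect(\dot{\sigma}(F^{L-1})))(\Theta^{L-1}\otimes I_{d_{L-1}})\diag(\vect(\dot{\sigma}(F^{L-1})))(I_{n}\otimes (W^{L})^{T})$; the two $\diag(\dot{\sigma})$ factors contribute the derivative kernel $\dot{\Lambda}^{L-1}=\sigma_{w}^{2}\E[\dot{\sigma}(u)\dot{\sigma}(u^{T})]+\sigma_{b}^{2}$ entrywise (again by the Law of Large Numbers), while the $W^{L}$-contraction concentrates, yielding exactly $\dot{\Lambda}^{L-1}\odot\Theta^{L-1}$ tensored with $I_{d_{L}}$. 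Adding the last-layer term $\Lambda^{L-1}$ gives the claimed recursion $\Theta^{L}=\Lambda^{L-1}+\dot{\Lambda}^{L-1}\odot\Theta^{L-1}$.

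The main obstacle is not the algebra but justifying these width-limit factorizations. The crux is the contraction $\frac{1}{d_{L-1}}(I_{n}\otimes W^{L})(\cdot)(I_{n}\otimes (W^{L})^{T})$: one must argue that the fresh weights $W^{L}$ appearing in the backward pass behave as if independent of the forward activations $G^{L-1}$ and of $\Theta^{L-1}$ against which they are contracted (the gradient-independence phenomenon), so that this contraction concentrates on its mean and produces a clean Hadamard factor rather than spurious correlations; equivalently, the cross-covariances between the forward quantities and the backward signal $\delta^{L-1}$ must vanish in the limit. These are precisely the places where the successive (one layer at a time) width limit together with the Gaussian-conditioning and Law-of-Large-Numbers machinery behind Theorem \ref{th:nngp} are needed. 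Carefully tracking that these convergences hold jointly and survive the Kronecker and Hadamard bookkeeping is the real content of the proof, whereas the chain-rule computations are routine.
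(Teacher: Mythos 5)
Your proposal is correct and follows essentially the same route as the paper's proof: induction on depth, splitting the Gram matrix into the last-layer contribution (the paper's term $B$, converging to $\Lambda^{L-1}$ by the Law of Large Numbers) and the earlier-layers contribution (the paper's term $\Gamma$, where the induction hypothesis $\Theta^{L-1}\otimes I_{d_{L-1}}$ is sandwiched between the topmost backprop Jacobians and the $W^{L}$-contraction with the two $\diag(\dot{\sigma})$ factors concentrates to $\dot{\Lambda}^{L-1}\odot\Theta^{L-1}$). The concentration and bookkeeping concerns you flag at the end are precisely the steps the paper handles with the same LLN-plus-Gaussian-process machinery from its NNGP proof.
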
 \noindent
The proof of Theorem \ref{th:ntk} is given in \ref{th:ntkproof}.
\begin{sproof}\renewcommand{\qedsymbol}{}
    The proof is based on Induction on the Network Depth $l$ and uses the NNGP derivations from above in addition with the Law of Large Numbers.
\end{sproof} \noindent
The NTK Closed Form can be simplified by expanding the recursion and reordering the terms.
\begin{corollary}[Non-recursive NTK Formula {\citep[Equation 9][]{arora2019exact}}]
\begin{align*}
   & \dot{\Lambda}^{L+1} :=  \mathds{1}_{n} \\
   & \Theta^{L} =    \sum_{h=2}^{L+1} \left( \Lambda^{h-1}  \odot  ( \dot{\Lambda}^{h} \odot \dot{\Lambda}^{h+1} \odot \cdots \odot   \dot{\Lambda}^{L+1}  )  \right) \otimes I_{d_{L}} \label{eq:29}
\end{align*}
\end{corollary}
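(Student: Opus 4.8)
The plan is to derive the closed (non-recursive) form directly from the recursion of Theorem~\ref{th:ntk} by induction on the depth $L$, simply unrolling $\Theta^{L} = \Lambda^{L-1} + \dot{\Lambda}^{L-1}\odot\Theta^{L-1}$ and collecting like terms. Since the Kronecker factor $\otimes I_{d_{L}}$ is common to every summand and acts only on the output index, not on the $n\times n$ Hadamard structure, I would first strip it off and establish the identity purely at the level of the $n\times n$ matrices $\Theta^{L}$, reattaching $\otimes I_{d_{L}}$ at the very end.

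For the base case I would check the lowest depth against the stated recursion base $\Theta^{1}=\Lambda^{0}$, verifying that the boundary convention $\dot{\Lambda}^{L+1}:=\mathds{1}_{n}$ makes the single surviving summand collapse to the correct $\Lambda$ term, since $\mathds{1}_{n}$ is the multiplicative identity for $\odot$. The inductive step is the heart of the argument: assuming the non-recursive expansion holds for $\Theta^{L-1}$, I would substitute it into $\Theta^{L}=\Lambda^{L-1}+\dot{\Lambda}^{L-1}\odot\Theta^{L-1}$ and use that the Hadamard product is commutative, associative, and distributes over matrix addition. Distributivity pushes the factor $\dot{\Lambda}^{L-1}$ inside the sum, lengthening each Hadamard chain $\bigodot_{j=h}^{L+1}\dot{\Lambda}^{j}$ by exactly one factor, while the free additive term $\Lambda^{L-1}$ supplies the new leading summand whose chain is empty, i.e. equal to $\mathds{1}_{n}$. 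A reindexing of the summation bounds then reproduces the claimed sum for $\Theta^{L}$.

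The only real difficulty here is bookkeeping rather than mathematics: one must keep the summation limits $h=2,\dots,L+1$ aligned with the endpoints of the product $\bigodot_{j=h}^{L+1}\dot{\Lambda}^{j}$ as the indices shift by one at each recursion step, and ensure the boundary term is absorbed consistently by the convention $\dot{\Lambda}^{L+1}:=\mathds{1}_{n}$. I would therefore write out the unrolled expressions for $\Theta^{1},\Theta^{2},\Theta^{3}$ explicitly first, in order to pin down the correct offset between the recursion indices and the summation indices, and only then state and verify the general inductive step. Once the $n\times n$ identity is in hand, tensoring both sides with $I_{d_{L}}$ yields the corollary as stated.
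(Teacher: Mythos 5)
Your method---stripping the Kronecker factor, unrolling the recursion of Theorem \ref{th:ntk} by induction, and using commutativity, associativity and distributivity of $\odot$ with $\mathds{1}_{n}$ as the Hadamard identity---is exactly the right one, and it is essentially all the paper itself offers (the corollary is cited from Arora et al.\ and justified only by ``expanding the recursion and reordering the terms''). But the check you defer to the end is precisely where the argument breaks: the corollary as printed is not consistent with the recursion it is supposed to follow from, so your claim that the boundary convention ``makes the single surviving summand collapse to the correct $\Lambda$ term'' is false. For $L=1$ the stated sum has the single term $\Lambda^{1}\odot\dot{\Lambda}^{2}=\Lambda^{1}$, while the recursion base is $\Theta^{1}=\Lambda^{0}$; for $L=2$ the stated sum gives $\Lambda^{2}+\Lambda^{1}\odot\dot{\Lambda}^{2}$, while unrolling gives $\Theta^{2}=\Lambda^{1}+\Lambda^{0}\odot\dot{\Lambda}^{1}$. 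Every index is off by one.

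What your induction actually proves, under the conventions of Theorem \ref{th:ntk}, is
\begin{align*}
\dot{\Lambda}^{L} := \mathds{1}_{n}, \qquad
\Theta^{L} = \sum_{h=1}^{L} \Lambda^{h-1} \odot \bigl( \dot{\Lambda}^{h} \odot \dot{\Lambda}^{h+1} \odot \cdots \odot \dot{\Lambda}^{L} \bigr),
\end{align*}
i.e.\ the sum must run from $h=1$ to $L$, with the all-ones matrix planted at index $L$, not $L+1$. The printed version is a transcription of Arora et al.'s Equation 9 under a different depth convention (their depth-$L$ network has $L+1$ weight matrices, so their $\Theta^{(L)}$ corresponds to this paper's $\Theta^{L+1}$); carrying their summation limits over unchanged produces the shift, and it also makes the stated formula reference quantities $\Lambda^{L}$ and $\dot{\Lambda}^{L}$ that the depth-$L$ recursion never generates. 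So keep your proof skeleton, carry out the explicit $\Theta^{1},\Theta^{2},\Theta^{3}$ computations you promised (they immediately expose the offset), and restate the corollary with corrected limits before proving it; as stated, no induction can establish it, because it is simply not equal to the $\Theta^{L}$ of Theorem \ref{th:ntk}.
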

\clearpage
\subsection{Graph Neural Networks}
We define the Graph Neural Net recursively similar to the FCN previously.
The matrix A is commonly the adjacency matrix with $A \in \mathbb{R}^{n \times n}$, but for the GP and NTK no assumptions about A are used.

\begin{restatable}[Graph Neural Network]{definition}{defgnn}\label{defi:gnn}
\begin{align}
F^{h}(X) &= \biggl( \frac{\sigma_{w}}{\sqrt{d_{h-1}}} W^{h} G^{h-1}(X) + \sigma_{b} B^{h}\biggr) A^{T}  \in \mathbb{R}^{d_{h} \times n} \\
\text{with } B^{h}&= b^{h} \otimes \mathbf{1}_{n}^{T} \text{ and } b^{h} \in \mathbb{R}^{d_{h}} \\ 
G^{h}(X) &= \sigma(F^{h}(X)) \\
G^{0} &= X \inR{d_{0} \times n}
\end{align}
The Weights are initialized as $W^{h} \sim N(0,1)$ and $b^{h} \sim N(0, 1)$ \\
The final Network of depth $L$ is \\
$F^{L}(X) = \biggl( \frac{\sigma_{w}}{\sqrt{d_{L-1}}} W^{L} G^{L-1}(X) + \sigma_{b} B^{L} \biggr) A^{T} \in \mathbb{R}^{d_{L} \times n}$ \\
\end{restatable}

\subsubsection{Graph Neural Network Gaussian Process}
\begin{restatable}[GNN Gaussian Process]{theorem}{thngnngp}\label{th:gnngp} 
If all the weight dimensions of the hidden layers $d_{L-1},d_{L-1}, ..., d_{1}$  of a GNN, $F(X)^{L}$ successively go to infinity, the NTK has a closed form expression $\Theta^{L} \otimes I_{d_{L}}$, with
\begin{gather}
\Theta^{L} =  A \left( \Lambda^{L-1} +  \left( \dot{\Lambda}^{L-1}  \odot \Theta^{L-1} 
\right) \right)  A^{T} \text{ with } \\
    \Lambda^{L} = \sigma^{2}_{w}\mathbb{E} \left[ \sigma(u) \sigma(u^{T}) \right] +\sigma_{b}^{2}  \\
        \dot{\Lambda}^{L} = \sigma^{2}_{w}\mathbb{E} \left[ \dot{\sigma}(u) \dot{\sigma}(u^{T}) \right] +\sigma_{b}^{2}
        \intertext{ with $u \sim N(0, A\Lambda^{L-1}A^{T})$} 
        \Theta^{1} = \Lambda^{0} = \sigma^{2}_{w} X^{T}X + \sigma^{2}_{b}
\end{gather}
\end{restatable}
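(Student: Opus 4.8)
The plan is to exploit the fact that each GNN layer is an ordinary fully-connected layer followed by a fixed right-multiplication by $A^{T}$, and thereby reduce the whole computation to the fully-connected results already recorded in Theorems \ref{th:nngp} and \ref{th:ntk}. Concretely, write $\tilde{F}^{h}(X) = \frac{\sigma_{w}}{\sqrt{d_{h-1}}} W^{h} G^{h-1}(X) + \sigma_{b} B^{h}$, so that $F^{h}(X) = \tilde{F}^{h}(X) A^{T}$ and $G^{h} = \sigma(\tilde{F}^{h} A^{T})$. The map $\tilde{F}^{h}$ is exactly a fully-connected layer applied to $G^{h-1}$, while the multiplication by $A^{T}$ acts only on the column (node) index and is deterministic and independent of all weights. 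In vectorized form $\vect(F^{h}) = (A \otimes I_{d_{h}}) \vect(\tilde{F}^{h})$, which is the identity that will let me pull every $A$ outside the fully-connected machinery.

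First I would settle the Gaussian-process ingredient, since the closed form for $\Theta$ is stated in terms of the covariances $\Lambda$ and $\dot\Lambda$. Conditioned on $G^{h-1}$, the rows of $\tilde{F}^{h}$ are asymptotically i.i.d.\ Gaussian with $n\times n$ covariance $\Lambda^{h}$ given by the fully-connected recursion; since each row of $F^{h}$ is the image of the corresponding row of $\tilde{F}^{h}$ under the fixed linear map $A$, the rows of $F^{h}$ are Gaussian with covariance $A \Lambda^{h} A^{T}$. Because $G^{h} = \sigma(F^{h})$ feeds the next layer, the moment integrals defining $\Lambda^{h+1}$ and $\dot\Lambda^{h+1}$ must be evaluated at $u \sim N(0, A\Lambda^{h}A^{T})$, which is precisely the graph-propagated covariance appearing in the statement. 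This step is a direct transcription of the induction in Theorem \ref{th:nngp} with the extra linear map carried along.

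For the tangent kernel itself I would induct on the depth $L$, mirroring the proof of Theorem \ref{th:ntk}. Using $\vect(F^{L}) = (A\otimes I_{d_{L}})\vect(\tilde{F}^{L})$, the Jacobian with respect to all parameters factors as $(A\otimes I_{d_{L}})$ times the Jacobian of $\tilde{F}^{L}$, so the empirical NTK equals $(A\otimes I_{d_{L}})\,\tilde{\Theta}^{L}\,(A^{T}\otimes I_{d_{L}})$, where $\tilde{\Theta}^{L}$ is the NTK of the fully-connected top map $\tilde{F}^{L}$. I would then split $\tilde{\Theta}^{L}$ into (a) the contribution of the top-layer parameters $W^{L}, b^{L}$, which in the infinite-width limit concentrates to $\Lambda^{L-1}$ exactly as in the fully-connected case, and (b) the contribution of all lower-layer parameters, obtained by the chain rule through $G^{L-1}=\sigma(F^{L-1})$. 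The backpropagation Jacobian is $W^{L}$ composed with the diagonal factor $\dot\sigma(F^{L-1})$; conjugating the depth-$(L-1)$ GNN kernel $\Theta^{L-1}$ by this Jacobian and averaging over $W^{L}$ and the hidden width $d_{L-1}$ yields, by the Law of Large Numbers, the Hadamard term $\dot\Lambda^{L-1} \odot \Theta^{L-1}$. Adding (a) and (b) gives $\tilde{\Theta}^{L} = \Lambda^{L-1} + \dot\Lambda^{L-1}\odot\Theta^{L-1}$, and restoring the sandwiching maps yields $\Theta^{L} = A\bigl(\Lambda^{L-1} + \dot\Lambda^{L-1}\odot\Theta^{L-1}\bigr)A^{T}$; the base case reduces to the fully-connected base case $\Theta^{1}=\Lambda^{0}$, with the $A$-conjugation bookkeeping for $L=1$ checked separately.

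The main obstacle I anticipate is step (b): the clean Hadamard structure in the fully-connected recursion comes from the Jacobian of the elementwise nonlinearity being exactly diagonal in the node index, whereas here the deterministic map $A^{T}$ mixes the node indices, so one must commute the $A$ and $A^{T}$ factors past the diagonal $\dot\sigma(F^{L-1})$ and past the Hadamard product without disturbing the limit. The crux is to show that the off-diagonal-in-hidden-unit cross terms of the Jacobian conjugation vanish as $d_{L-1}\to\infty$, so that the $W^{L}$-average produces exactly $\dot\Lambda^{L-1}$ and no spurious coupling, and that the deterministic $A^{T}$ does not interfere with the Gaussian-conditioning argument underlying each inductive step. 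Once the concentration of this Jacobian product to $\dot\Lambda^{L-1}\odot\Theta^{L-1}$ is established, which is exactly where the width hypothesis is used, the remaining work is the routine bookkeeping of factoring $A$ and $A^{T}$ out of each layer's contribution.
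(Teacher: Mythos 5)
Your proposal is correct and takes essentially the same route as the paper: the paper's proofs (\ref{th:gnngp1} and \ref{th:gntkproof}) likewise use the Kronecker identity $(A\otimes B)(C\otimes D)=(AC\otimes BD)$ to factor $(A\otimes I)$ out of every covariance and Jacobian, reducing the $\Lambda$-recursion to the NNGP proof and the two per-layer NTK contributions (top-layer parameters giving $\Lambda^{L-1}$, backpropagated lower layers giving $\dot{\Lambda}^{L-1}\odot\Theta^{L-1}$ via the Law of Large Numbers) to the fully-connected NTK proof. The only cosmetic difference is bookkeeping --- you sandwich the whole kernel once as $(A\otimes I)\,\tilde{\Theta}^{L}\,(A^{T}\otimes I)$ whereas the paper pulls the $A$ factors out of $B$ and $\Gamma$ separately --- and your anticipated obstacle about commuting $A$ past $\dot{\sigma}(F^{L-1})$ never materializes, since the $A$'s are already absorbed inside $F^{L-1}$ and inside the inductive kernel $\Theta^{L-1}$.
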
 \noindent
\begin{sproof}\renewcommand{\qedsymbol}{}
Applying a property of the Kronecker Product $(A \otimes B)(C \otimes D) = (AC \otimes BD)$ and the bilinearity of the Covariance reduces the proof to the Neural Network Gaussian Process (NNGP) proof (see \ref{th:nngp}).
\end{sproof} \noindent
The proof of Theorem \ref{th:gnngp} is given in \ref{th:gnngp1}. 
The expression $\mathbb{E}\left[ \sigma(u) \sigma(u^{T}) \right] $ with $u$ being a multivariate Gaussian vector can be calculated  in closed form for certain activation functions (see \ref{th:nngp} for a detailed discussion).

\subsubsection{Graph Neural Tangent Kernel}
\begin{restatable}[GNN NTK]{theorem}{thngntk}\label{th:gntk}\ \\
If all the weight dimensions of the hidden layers $d_{L-1},d_{L-1}, ..., d_{1}$  of a GNN, $F(X)^{L}$ successively go to infinity, the NTK has a closed form expression $\Theta^{L} \otimes I_{d_{L}}$, with
\begin{gather}
\Theta^{L} =  A \left( \Lambda^{L-1} +  \left( \dot{\Lambda}^{L-1}  \odot \Theta^{L-1} 
\right) \right)  A^{T} \text{ with } \\
    \Lambda^{L} = \sigma^{2}_{w}\mathbb{E} \left[ \sigma(u) \sigma(u^{T}) \right] +\sigma_{b}^{2}  \\
        \dot{\Lambda}^{L} = \sigma^{2}_{w}\mathbb{E} \left[ \dot{\sigma}(u) \dot{\sigma}(u^{T}) \right] +\sigma_{b}^{2}
        \intertext{ with $u \sim N(0, A\Lambda^{L-1}A^{T})$} 
        \Theta^{1} = \Lambda^{0} = \sigma^{2}_{w} X^{T}X + \sigma^{2}_{b}
\end{gather}
\end{restatable}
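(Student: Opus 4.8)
The plan is to prove Theorem~\ref{th:gntk} by induction on the network depth $L$, reducing the computation at each step to the Fully-Connected NTK recursion of Theorem~\ref{th:ntk} by the same Kronecker-product device that was used for the Gaussian Process in Theorem~\ref{th:gnngp}. Recall that the NTK is the Gram matrix $\Theta = \p{u}{\theta}\left(\p{u}{\theta}\right)^{T}$ of the Jacobian of $u := \vect(F^{L}(X))$ with respect to the flattened parameters $\theta = (W^{1}, b^{1}, \dots, W^{L}, b^{L})$, and that in the infinite-width limit this Gram matrix becomes deterministic. The guiding observation is that the graph operator $A^{T}$ acts only on the node (column) index of every feature matrix, whereas both the Law of Large Numbers and the Gaussian-conditioning arguments underlying Theorem~\ref{th:ntk} act on the width (row) index; these two indices occupy different factors of a Kronecker product, so the graph operator should pass through the width limit untouched.

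Concretely, I would first vectorize the layer recursion. Writing $F^{h} = \left(\const{w}{h-1} W^{h} G^{h-1} + \sigma_{b} B^{h}\right)A^{T}$ and using $\vect(MA^{T}) = (A \otimes I_{d_{h}})\vect(M)$, the right-multiplication by $A^{T}$ becomes left-multiplication by $(A \otimes I_{d_{h}})$ after vectorization. The chain rule then decomposes the empirical NTK into a sum of per-layer contributions $\p{u}{\vect(W^{h})}\left(\p{u}{\vect(W^{h})}\right)^{T}$ together with the analogous bias terms. Because the last operation producing $F^{L}$ is right-multiplication by $A^{T}$, the factor $(A \otimes I)$ of the top layer sits outermost in the backward Jacobian $\p{\vect(F^{L})}{\vect(F^{h})}$ for every $h$, and likewise in the $h=L$ term. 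Hence, by the identity $(A \otimes B)(C \otimes D) = (AC \otimes BD)$ together with $(A \otimes I)^{T} = (A^{T} \otimes I)$, the conjugation $A(\cdot)A^{T}$ factors out of the entire sum, leaving an inner quantity that is exactly the object computed in the Fully-Connected case. The forward covariances needed to evaluate the limit are supplied by Theorem~\ref{th:gnngp}, which already established that the pre-activations converge to a Gaussian Process governed by $u \sim N(0, A\Lambda^{h-1}A^{T})$, so the $\Lambda^{L}$ and $\dot\Lambda^{L}$ appearing here are the same objects.

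The recursion itself then mirrors the Fully-Connected proof. The top layer contributes the term $A\Lambda^{L-1}A^{T}$, arising from differentiating with respect to $W^{L}, b^{L}$, whose Gram is the forward covariance of $G^{L-1}$. Differentiating with respect to the earlier parameters propagates the inductive kernel $\Theta^{L-1}$ backward through the top nonlinearity, which introduces the diagonal Jacobian $\diag(\dot\sigma(F^{L-1}))$ on each side; by the Law of Large Numbers the width-averaged product of these diagonals converges to $\dot\Lambda^{L-1}$, and sandwiching $\Theta^{L-1}$ between two diagonal matrices produces precisely the Hadamard product $\dot\Lambda^{L-1} \odot \Theta^{L-1}$. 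The outer $(A \otimes I)$ from the final graph multiplication then wraps both terms into $A\left(\Lambda^{L-1} + \dot\Lambda^{L-1} \odot \Theta^{L-1}\right)A^{T}$, and the base case $\Theta^{1} = \Lambda^{0}$ reproduces the NTK of the first affine layer exactly as in Theorem~\ref{th:ntk}.

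The step I expect to be the main obstacle is justifying the interchange of the infinite-width limit with the graph operator and the backward pass simultaneously: one must verify that the gradient signals flowing backward remain asymptotically independent of the forward activations even though $A$ couples distinct nodes, and that this coupling — a fixed finite-dimensional mixing on the node index — genuinely commutes with the Law-of-Large-Numbers averaging over the diverging width index. Making this rigorous amounts to checking that every application of the Kronecker identity keeps the width factor ($I_{d}$, to which the Law of Large Numbers applies) cleanly separated from the node factor (on which $A$ acts), so that the diagonal-Jacobian-to-Hadamard collapse is unaffected by the presence of $A$. Once this separation is in place, the remainder is the bookkeeping already carried out for Theorem~\ref{th:ntk}.
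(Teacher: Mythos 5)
Your proposal matches the paper's proof in both structure and substance: induction on depth, vectorization via $\vect(MA^{T}) = (A \otimes I)\vect(M)$, splitting the empirical NTK into the top-layer term and the backward-propagated term, factoring out the conjugation $A(\cdot)A^{T}$ with the identity $(A \otimes B)(C \otimes D) = (AC \otimes BD)$, and invoking Theorem~\ref{th:gnngp} so that the limiting covariances are taken with $u \sim N(0, A\Lambda^{L-1}A^{T})$, reducing everything else to the Fully-Connected NTK argument of Theorem~\ref{th:ntk}. The concern you flag about interchanging the width limit with the fixed node-index mixing by $A$ is handled in the paper exactly as you anticipate: the Kronecker structure keeps the node factor and the width factor separate, so the Law-of-Large-Numbers collapse to $\Lambda^{L-1}$ and $\dot{\Lambda}^{L-1} \odot \Theta^{L-1}$ goes through unchanged.
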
 \noindent
The proof of \ref{th:gntk} is given in \ref{th:gntkproof}.
\begin{sproof}\renewcommand{\qedsymbol}{}
    The NTK for Graph Neural Nets can be reduced to the standard NTK by using a property of the  Kronecker Product.
\end{sproof} \noindent
The proof of \ref{th:gntk} is given in \ref{th:gntkproof}.
$\Lambda^{L-1} + (\dot{\Lambda}^{L-1} \odot \Theta^{L-1})$ is positive definite \citep[A.4]{jacot2020neural} or \citep[Theorem 3.1]{provably}. In the context of the GNTK, the NTK's positive definiteness relies on the characteristics of $A$. This might explain why using the Kipf \& Welling normalized adjacency matrix, with eigenvalues in the range $[0,2]$ \citep{kipf2017semisupervised}, aligns with NTK theory (i.e. loss is going to zero for increases width, weights are close to initialization and the empirical NTK stays close to initialization during training, see \ref{recap} for details and definition of \emph{empirical} NTK), whereas the 0-1 adjacency matrix, usually not positive semidefinite is not consistent with NTK theory. For simulations confirming this fact, see \ref{sims}.
The empirical NTK's positive definiteness during training, aligns with key insights about FCNs. For instance, \cite[Lemma 3.3]{provably} links achieving zero training loss to the positive smallest eigenvalue of the empirical NTK. Additionally, \cite{smalleste} demonstrates that the empirical NTK of wide Neural Networks remains positive definite during training under mild assumptions on the data distribution. 
The smallest eigenvalue of the Kernel is related to Generalization Bounds \citep{finegrained} and memorization capacity \citep{montanari2022interpolation}
The GNTK formula may provide insights into the significance of different graph adjacency matrices for GNNs. Also note that the expression reduces to the standard NTK formula for FCNs (see \ref{ntk}) when replacing A with the Identity matrix.
\subsection{Skip-Concatenate Graph Neural Network}\label{sgnn-def}
Skip-Concatenate Connections \citep{chen2020supervised} consist of concatenating the output of the non-linearity and pre-nonlinearity. The first layer is not affected but starting from the second layer, $W^{h}$ with $h>1$ has dimensions $W^{h} \in \mathbb{R}^{d_{l} \times 2d_{l-1}}$.

\begin{restatable}[Skip-Concatenate Graph Neural Network]{definition}{defsgnn}\label{sgnn-definintion}
\begin{align}
F^{h}(X) &= \biggl( \frac{\sigma_{w}}{\sqrt{2d_{h-1}}} W^{h} G^{h-1}(X) + \sigma_{b} B^{h}\biggr) A^{T} \text{ for }h>1 \\
F^{1}(X) &= \biggl( \frac{\sigma_{w}}{\sqrt{d_{1}}} W^{1} G^{0}(X) + \sigma_{b} B^{1}\biggr) A^{T} \\
\text{with } B^{h}&= b^{h} \otimes \mathbf{1}_{n}^{T} \text{ and } b^{h} \in \mathbb{R}^{d_{h}}\\ 
G^{h}(X) &= \begin{pmatrix}
    \sigma(F^{h}(X)) \\
    F^{h}(X)
\end{pmatrix}  \text{ so  } G^{h}(X) \in \mathbb{R}^{2d_{l} \times n} \\
G^{0} &= X \inR{d_{0} \times n}
\end{align}
The Weights are initialized as $W^{h} \sim N(0,1)$ and $b^{h} \sim N(0, 1)$. \\
The final Network of depth $L$ \\
$F^{L}(X) = \biggl( \frac{\sigma_{w}}{\sqrt{d_{L-1}}} W^{L} G^{L-1}(X) + \sigma_{b} B^{L} \biggr) A^{T} \in \mathbb{R}^{d_{L} \times n}$. \\
\end{restatable}

\subsubsection{Skip-Concatenate Graph Neural Network Gaussian Process}
\begin{restatable}[Skip-Concatenate GNN Gaussian Process]{theorem}{thnsgnnp}\label{th:sgnngp} 
$ $\newline
If all the weight dimensions of the hidden layers $d_{L-1},d_{L-1}, ..., d_{1}$ of a Skip-GNN $F(X)^{L}$ successively go to infinity, then $\vect(F(X)^{L}) \sim $ GP(0, ($A \Lambda^{L-1} A^{T}) \otimes I_{d_{L}}$ ), with \\
\begin{gather}
 \Lambda^{L} = \sigma^{2}_{w} \frac{1}{2} \left( \mathbb{E} \left[ \sigma(u) \sigma(u^{T}) \right] + A\Lambda^{L-1}A^{T}\right) +\sigma_{b}^{2} 
        \intertext{  with $u \sim N(0, A\Lambda^{L-1}A^{T})$} 
    \Lambda^{0} = \frac{\sigma^{2}_{w}}{d_{0}} X^{T}X +  \sigma^{2}_{b}
\end{gather}
\end{restatable}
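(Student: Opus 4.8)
The plan is to run an induction on depth, establishing $\vect(F^h)\sim GP(0,(A\Lambda^{h-1}A^T)\otimes I_{d_h})$ for every $h$, of which the theorem is the $h=L$ instance. I would reuse the two mechanisms already established for the vanilla GNNGP (Theorem \ref{th:gnngp}): conditional Gaussianity combined with the Law of Large Numbers, and the Kronecker mixed-product identity $(A\otimes B)(C\otimes D)=(AC\otimes BD)$ used to propagate the right-multiplication by $A^T$. Only the features specific to the Skip-Concatenate architecture need separate attention. For the base case $h=1$ nothing changes, because $G^0=X$ is not concatenated: by Gaussianity of $W^1,b^1$ the term $\tilde F^1:=\frac{\sigma_w}{\sqrt{d_0}}W^1X+\sigma_b B^1$ has iid rows with covariance $\frac{\sigma_w^2}{d_0}X^TX+\sigma_b^2=\Lambda^0$ across the $n$ data points, and since $\vect(F^1)=(A\otimes I_{d_1})\vect(\tilde F^1)$ the covariance is $(A\Lambda^0A^T)\otimes I_{d_1}$.

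For the inductive step I would assume $\vect(F^{h-1})\sim GP(0,(A\Lambda^{h-2}A^T)\otimes I_{d_{h-1}})$, so that once $d_1,\ldots,d_{h-2}$ have been sent to infinity the rows of $F^{h-1}$ are exactly iid draws $u\sim N(0,A\Lambda^{h-2}A^T)$. Conditioning on $G^{h-1}$, the term $\tilde F^h:=\frac{\sigma_w}{\sqrt{2d_{h-1}}}W^hG^{h-1}+\sigma_b B^h$ is again mean-zero Gaussian in $W^h,b^h$, with conditional per-channel covariance $\frac{\sigma_w^2}{2d_{h-1}}(G^{h-1})^TG^{h-1}+\sigma_b^2\mathbf{1}_n\mathbf{1}_n^T$. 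The one genuinely new step is to exploit the vertical block form $G^{h-1}=(\sigma(F^{h-1});\,F^{h-1})$, which makes the Gram matrix split with no cross term, $(G^{h-1})^TG^{h-1}=\sigma(F^{h-1})^T\sigma(F^{h-1})+(F^{h-1})^TF^{h-1}$. Applying the Law of Large Numbers to each block separately over the $d_{h-1}$ iid rows gives $\frac{1}{d_{h-1}}\sigma(F^{h-1})^T\sigma(F^{h-1})\to\mathbb{E}[\sigma(u)\sigma(u^T)]$ and $\frac{1}{d_{h-1}}(F^{h-1})^TF^{h-1}\to A\Lambda^{h-2}A^T$; together with the factor $\frac12$ surviving from the doubled-width normalization, the conditional covariance converges to $\frac{\sigma_w^2}{2}(\mathbb{E}[\sigma(u)\sigma(u^T)]+A\Lambda^{h-2}A^T)+\sigma_b^2=\Lambda^{h-1}$. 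A final application of the Kronecker identity to $F^h=\tilde F^hA^T$ then yields $\vect(F^h)\sim GP(0,(A\Lambda^{h-1}A^T)\otimes I_{d_h})$, closing the induction.

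The conditional-Gaussian-plus-LLN argument and the $A(\cdot)A^T$ conjugation are inherited essentially verbatim from Theorems \ref{th:nngp} and \ref{th:gnngp}; the substantive content is the block split of the Gram matrix and the bookkeeping of the factor $\frac12$ coming from the $2d_{h-1}$-dimensional concatenated input. I expect the main obstacle to be the usual rigor of the successive infinite-width limit: one must verify that the two block limits may be taken together and that, although $\sigma(F^{h-1})$ and $F^{h-1}$ are built from the same Gaussian and are therefore dependent, both empirical second moments still converge almost surely to the stated limits. Finiteness of the second moments of $\sigma(u)$ and of the Gaussian rows of $F^{h-1}$ suffices here, exactly as in the vanilla GNNGP, so everything downstream reduces to identical bookkeeping.
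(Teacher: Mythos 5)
Your proof is correct and follows essentially the same route as the paper's: induction on depth, conditional Gaussianity of each layer given the previous one, the Law of Large Numbers in the successive width limit, and the Kronecker identity $(A\otimes B)(C\otimes D)=(AC\otimes BD)$ to conjugate the limiting covariance by $A$. The only cosmetic difference is in how the concatenation is handled: the paper splits each weight row $W^{l+1}_{u}$ into two halves and shows the cross terms vanish in expectation, whereas you split the Gram matrix directly as $(G^{h-1})^{T}G^{h-1}=\sigma(F^{h-1})^{T}\sigma(F^{h-1})+(F^{h-1})^{T}F^{h-1}$, which is the same computation organized so that cross terms never appear.
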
 \noindent
The proof of Theorem \ref{th:sgnngp} is given in \ref{th:sgnn-proof}.

\subsubsection{Skip-Concatenate Graph Neural Tangent Kernel}
\begin{restatable}[Skip-Concatenate GNN NTK]{theorem}{thnsgntk}\label{th:sgntk}
    Having weight dimensions of the hidden layers $d_{L-1},d_{L-1}, ..., d_{1}$  of a Skip-GNN $F(X)^{L}$, approaching infinity, the NTK is $\Theta^{L} \otimes I_{d_{l}}$
\begin{gather}
\Theta^{L} =  A \left( \Lambda^{L-1} +  \left( \dot{\Lambda}^{L-1}  \odot \Theta^{L-1} \right) \right) A^{T}  \\
  \Lambda^{L} = \sigma^{2}_{w} \frac{1}{2} \left( \mathbb{E} \left[ \sigma(u) \sigma(u^{T}) \right] + A\Lambda^{L-1}A^{T}\right) +\sigma_{b}^{2}  \\
\dot{\Lambda}^{L} = \sigma^{2}_{w}\frac{1}{2} \left( \mathbb{E} \left[ \dot{\sigma}(u) \dot{\sigma}(u^{T}) \right] + 1\right) +\sigma_{b}^{2}
\intertext{ with $u \sim N(0, A\Lambda^{L-1}A^{T})$} 
\Theta^{1} = \Lambda^{0} = \sigma^{2}_{w} X^{T}X +  \sigma^{2}_{b}
\end{gather}
\end{restatable}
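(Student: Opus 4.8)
The plan is to prove the statement by induction on the network depth $L$, closely mirroring the vanilla GNN NTK proof (Theorem \ref{th:gntk}) while importing the covariance recursion established for the Skip-Concatenate Gaussian Process (Theorem \ref{th:sgnngp}). By definition the NTK is the Jacobian outer product $\frac{\partial u}{\partial \theta}\left(\frac{\partial u}{\partial \theta}\right)^{T}$, where $u=\vect(F^{L}(X))$ and $\theta$ collects all weights $W^{h}$ and biases $b^{h}$ flattened into one vector. Since the claimed recursion $\Theta^{L}=A(\Lambda^{L-1}+\dot{\Lambda}^{L-1}\odot\Theta^{L-1})A^{T}$ has exactly the same outer form as the vanilla GNN recursion, the entire effect of the skip-concatenate connection is confined to how the covariance kernel $\Lambda^{h}$ and its derivative analogue $\dot{\Lambda}^{h}$ propagate through the concatenated hidden state $G^{h}=(\sigma(F^{h})^{T},(F^{h})^{T})^{T}$.

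First I would strip off the graph operator. Vectorizing $F^{h}(X)=(\frac{\sigma_w}{\sqrt{2d_{h-1}}}W^{h}G^{h-1}+\sigma_b B^{h})A^{T}$ turns the trailing $A^{T}$ into a Kronecker factor acting on the left-hand quantity, and applying $(A\otimes B)(C\otimes D)=(AC\otimes BD)$ exactly as in the proof of Theorem \ref{th:gntk} shows that every backpropagated Jacobian carries a matching $A$ on the left and $A^{T}$ on the right, while the output dimension contributes the trailing $\otimes I_{d_{L}}$. This reduces the claim to the scalar recursion for the inner kernel with the graph structure pulled outside the induction step.

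The core new ingredient, and where I expect the main difficulty, is the treatment of the concatenation in both the forward and backward passes. For the forward covariance, the normalization $\frac{\sigma_w}{\sqrt{2d_{h-1}}}$ contributes a prefactor $\frac{\sigma_w^{2}}{2d_{h-1}}$ multiplying the inner product between columns $i$ and $j$ of $G^{h-1}$, and this inner product splits into two length-$d_{h-1}$ sums, one over the activated block $\sigma(F^{h-1})$ and one over the skip block $F^{h-1}$. As each block width goes to infinity, the Law of Large Numbers sends the first sum to $\mathbb{E}[\sigma(u)\sigma(u^{T})]$ and the second to the Gaussian covariance $A\Lambda^{h-1}A^{T}$ of the pre-activation $F^{h-1}$, so that the shared factor $\frac{1}{2}$ produces exactly the averaged recursion $\Lambda^{h}=\sigma_w^{2}\frac{1}{2}(\mathbb{E}[\sigma(u)\sigma(u^{T})]+A\Lambda^{h-1}A^{T})+\sigma_b^{2}$ of Theorem \ref{th:sgnngp}. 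For the backward pass, the Jacobian $\frac{\partial\vect(G^{h})}{\partial\vect(F^{h})}$ is a stack of two blocks, namely $\diag(\dot{\sigma}(F^{h}))$ from the activated part and the identity from the skip part; propagating the gradient through both and again invoking the Law of Large Numbers over the two half-width sums yields $\dot{\Lambda}^{h}=\sigma_w^{2}\frac{1}{2}(\mathbb{E}[\dot{\sigma}(u)\dot{\sigma}(u^{T})]+1)+\sigma_b^{2}$, the constant $1$ being the contribution of the identity Jacobian of the skip branch, whose elementwise derivative is identically one. Keeping the two half-width LLN arguments and their $\frac{1}{2}$ weights straight, and confirming that the skip branch contributes $A\Lambda^{h-1}A^{T}$ in the forward pass and $1$ in the backward pass rather than their activated analogues, is the delicate bookkeeping step.

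Finally I would assemble the induction. The first layer is unaffected by the skip connection, so the base case matches the vanilla GNN and the statement's own initialization $\Theta^{1}=\Lambda^{0}=\sigma_w^{2}X^{T}X+\sigma_b^{2}$. For the step, differentiating $u=\vect(F^{L})$ with respect to the weights and bias of the final layer yields the $\Lambda^{L-1}$ term, while differentiating with respect to all earlier parameters routes the gradient through $G^{L-1}$ and, by the chain rule together with the induction hypothesis on $\Theta^{L-1}$, contributes $\dot{\Lambda}^{L-1}\odot\Theta^{L-1}$, the Hadamard product arising because the block Jacobian acts elementwise on the backpropagated kernel. Conjugating by the $A$ and $A^{T}$ supplied by the Kronecker reduction then gives $\Theta^{L}=A(\Lambda^{L-1}+\dot{\Lambda}^{L-1}\odot\Theta^{L-1})A^{T}$, completing the proof.
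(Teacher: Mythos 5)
Your proposal is correct and follows essentially the same route as the paper's proof: induction on depth with the Jacobian outer product split into the last-layer contribution (yielding $\Lambda^{L-1}$) and the earlier-layer contribution (yielding $\dot{\Lambda}^{L-1}\odot\Theta^{L-1}$), the graph operator factored out via $(A\otimes B)(C\otimes D)=(AC\otimes BD)$, and the Law of Large Numbers applied separately to the activated block and the skip block of the concatenation to produce the averaged $\tfrac{1}{2}\left(\mathbb{E}\left[\sigma(u)\sigma(u^{T})\right]+A\Lambda^{L-1}A^{T}\right)$ and $\tfrac{1}{2}\left(\mathbb{E}\left[\dot{\sigma}(u)\dot{\sigma}(u^{T})\right]+1\right)$ recursions. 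The ``delicate bookkeeping'' you flag is exactly what the paper carries out by splitting each row of $W^{l+1}$ into two half-width pieces and observing that the cross terms between the two independent halves vanish in expectation, leaving only the two diagonal block contributions.
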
 \noindent
The proof of Theorem \ref{th:sgntk} is given in \ref{th:sgnnntk-proof}.
\subsection{Graph Attention Neural Network}

In this section we recap the Graph Attention Model (GAT) Model from \cite{veličković2018graph}.
We will start by defining a single Attention Layer.
We have $X \inR{d_{0} \times n}$, $W \inR{d_{1} \times d_{0}}$ and $A \inR{n \times n}$.
A standard GNN Layer without bias is $GCN(A, X) = \sigma(WXA^{T})$.
A GAT Layer is $GAT(A, X) = \sigma_{2}(WX \alpha(c, A, WX)))$. The attention matrix commonly used for the GAT is the standard 0-1 Attention Matrix $A_{0\text{-}1}$ with added self loops, so $A=A_{0\text{-}1} + I$. We compute $\alpha(c, A, WX)$ as follows:
\begin{gather}
    H := XW \inR{n \times d_{out}} \\
    M(H, c)_{ij} =  \langle c, \concat(H_{\cdot i}, H_{\cdot j}) \rangle =c_{1}^{T}H_{\cdot i} + c_{2}^{T}H_{\cdot j} 
    \text{ with $c_{1} := c_{[1, ..., d_{out}]}$ and $c_{2} := c_{[d_{out}, ..., 2d_{out}]}$} \\
    \alpha(c, A, G)_{ij} = \frac{\exp \left(\sigma_{1} \left( M(H,c)_{ij} \right)\right)}{\sum_{k \in Neighbors(i) \mathop{\cup} \{i\}} \exp( \sigma_{1} \left( M(H,c)_{ik} \right)}
\end{gather}
The final step consists of computing the row wise $\softm$ only over neighboring nodes of node $i$ and node $i$ itself. The attention weight vector $c \inR{2d_{0}}$ is a learnable parameter. $\sigma_{1}$ is an elementwise nonlinearity like the LeakyRelu.
Multiple GAT Layers' outputs can be stacked vertically for multiheaded attention, either by concatenation or averaging over different heads ($\frac{1}{H}\sum_{h}GAT_{h}(A,X)$).
The GAT model we are using to derive the GP and NTK is a simplified version. We replace one of the $W$s by a copy $\tilde{W}$, simplifying the layer to $GAT(A, X) = \sigma_{2}(WX \alpha(c, A, \tilde{W}X))$. This results in $M(H,c)_{ij} = c_{1}^{T}(\tilde{W}X)_{\cdot i} + c_{2}^{T}(\tilde{W}X)_{\cdot j} = \tilde{c}_{1}^{T}X_{\cdot i} + \tilde{c}_{2}^{T}X_{\cdot j}
$ so we can just disregard $\tilde{W}$ altogether
Disregarding $\tilde{W}$ allows us to apply results from \cite{yang1} and \cite{pmlr-v119-hron20a}. Our simplified GAT model (denoted as GAT*) exhibits no apparent performance loss (see Table \ref{tab:regression} and \ref{tab:classification}).
We replace $\softm$ with an element-wise nonlinearity since there's no closed-form expression for $\mathbb{E}\left[\softm(u)\softm( u^{T} ) \right] $with $u \sim N(0, \Lambda)$.
For the recursive definition of GAT*, we omit the bias term, intending to include it in the final infinite width limit formula without proof (see Corollary \ref{GAT-NTK123}).

\begin{restatable}[GAT*]{definition}{defgat}\label{defi:gat}
We have two elementwise nonlinearities $\sigma_{1}$ and $\sigma_{1}$ which are polynomially bounded, (i.e. $\sigma(x) = c + m|x| \text{ for some } c,m \in \mathbb{R}_{+} $) and $c, c_{1}, c_{2}$ are defined as above, i.e $c^{T} =\concat(c_{1}^{T}, c_{2}^{T})$.
\begin{align}
G^{0} &= X \text{ with $X \inR{ d_{0} \times n}$} \\
L^{l,h}_{i,j} &= A_{ij} \frac{\sigma_{c}}{\sqrt{2{d_{l-1}}}}(c_{1}^{l,h^{T}}G^{l-1}_{\cdot i} + c_{2}^{l,h^{T}}G^{l-1}_{\cdot j}) 
\intertext{ with $c_{1}^{l,h}, c_{2}^{l,h} \inR{d_{l-1}}$} 
F^{l,h} &= G^{l-1} \sigma_{1}(L^{l,h}) \\
F^{l} &= \frac{\sigma_{w}}{\sqrt{H d_{l-1}}} W^{l} \left[ F^{l,1}, F^{l,2}, \cdots, F^{l,H} \right]^{T} \intertext{ with $W \inR{d_{l} \times Hd_{l-1}}$. $F^{l}$ can also be written as}
F^{l}&= \frac{\sigma_{w}}{\sqrt{Hd_{l-1}}}\sum_{h}^{H}  W^{l,h}   F^{l,h} \\
G^{l} &= \sigma_{2}(F^{l}) \\
\intertext{and $W^{l},c^{l,h} \sim N(0,1)$.}
\intertext{The final Network of depth $l$ and heads $H$ is}
F^{l}&=\frac{\sigma_{w}}{\sqrt{Hd_{l-1}}} W^{l} \left[F^{l,1}, F^{l,2}, \cdots, F^{l,H} \right]^{T}
\end{align}
\end{restatable}   \noindent

\subsubsection{Gaussian Process}
\begin{restatable}[GAT* GP]{theorem}{thmgattt}\label{thmgattt}\ \\
If min$\{H,d_{l-1}\}\to \infty$ then $F^{l+1}$ (defined as above) is a Gaussian Process with $\vect(F^{l+1}) \sim \text{GP} \left(0,  \Lambda^{l} \otimes I_{d_{l+1}} \right)$ 
\begin{align}
    \Lambda^{l} &:= \Exp{\sigma_{2}(u)\sigma_{2}(u^{T})} 
        \text{ with u $\sim \text{GP}\left(0, \bm\left(\sigma_{w}^{2} \Lambda^{l-1}, \psi(\Lambda^{l-1}) \right)  \right) $}  \\
   \psi(\Lambda^{l}) &:= \Exp{\sigma_{1}(v)\sigma_{1}(v^{T})} 
   \text{ with  $v \sim \text{GP}\left(0, \sigma_{c}^{2} \gamma_{A}(\Omega) \right) $}
    \intertext{and $ \gamma_{A}(\Omega) := J_{A}
    \begin{psmallmatrix}
    \Omega & \Omega \\
    \Omega & \Omega \\
    \end{psmallmatrix}   J_{A}^{T}    \text{ and $J_{A} = \diag(\vect(A))  \concat \left( (\mathbf{1}_{n} \otimes I_{n}) , (I_{n} \otimes \mathbf{1}_{n}) \right)  $}    $}  
      \Lambda^{0} &= \frac{1}{d_{1}}  X^{T}X 
\end{align}
\end{restatable}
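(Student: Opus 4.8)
\begin{sproof}\renewcommand{\qedsymbol}{}
The plan is to argue by induction on the depth $l$, reducing the claim at each layer to a conditional Gaussian-limit argument of the same flavour as the NNGP proof of Theorem \ref{th:nngp}, but now with two independent sources of averaging: the value width $d_{l-1}$ and the number of heads $H$. Throughout I condition on the previous activation $G^{l-1}$ and treat the fresh parameters $c^{l,h}$ and $W^{l,h}$, which are independent across heads, as the only randomness of the current layer, so that the $H$ heads are conditionally i.i.d. The base case is the input Gram matrix $\Lambda^{0}$, and the inductive hypothesis is that $\vect(F^{l})$ converges to a centred GP whose column covariance is $\bm(\sigma_{w}^{2}\Lambda^{l-1},\psi(\Lambda^{l-1}))$, so that $\tfrac{1}{d_{l-1}}(G^{l-1})^{T}G^{l-1}=\tfrac{1}{d_{l-1}}\sigma_{2}(F^{l-1})^{T}\sigma_{2}(F^{l-1})\overset{P}{\longrightarrow}\Lambda^{l-1}=\Exp{\sigma_{2}(u)\sigma_{2}(u^{T})}$ by the law of large numbers across the width.

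First I would handle the attention logits. Conditioned on $G^{l-1}$, each entry $L^{l,h}_{ij}=A_{ij}\tfrac{\sigma_{c}}{\sqrt{2d_{l-1}}}\bigl((c_{1}^{l,h})^{T}G^{l-1}_{\cdot i}+(c_{2}^{l,h})^{T}G^{l-1}_{\cdot j}\bigr)$ is a linear image of the Gaussian vector $(c_{1}^{l,h},c_{2}^{l,h})$, so $\vect(L^{l,h})$ is exactly centred Gaussian. Writing $\vect(L^{l,h})=\tfrac{\sigma_{c}}{\sqrt{2d_{l-1}}}J_{A}\,((c_{1}^{l,h})^{T}G^{l-1},\,(c_{2}^{l,h})^{T}G^{l-1})^{T}$ exhibits $J_{A}=\diag(\vect(A))\concat((\mathbf{1}_{n}\otimes I_{n}),(I_{n}\otimes\mathbf{1}_{n}))$ as precisely the operator that broadcasts a per-source and a per-target score over the $n^{2}$ edges and masks out non-edges. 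Computing the covariance of the inner scores and replacing $\tfrac{1}{d_{l-1}}(G^{l-1})^{T}G^{l-1}$ by its limit $\Omega=\Lambda^{l-1}$ then gives $\Cov(\vect(L^{l,h}))\to\sigma_{c}^{2}\gamma_{A}(\Omega)$. Pushing the fixed, polynomially bounded elementwise $\sigma_{1}$ through this Gaussian and taking expectations yields $\psi(\Lambda^{l-1})=\Exp{\sigma_{1}(v)\sigma_{1}(v^{T})}$ with $v\sim\text{GP}(0,\sigma_{c}^{2}\gamma_{A}(\Omega))$, exactly as stated.

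Next I would combine values and attention. Each column satisfies $F^{l,h}_{\cdot i}=\sum_{k}G^{l-1}_{\cdot k}\,\sigma_{1}(L^{l,h})_{ki}$, and $F^{l}=\tfrac{\sigma_{w}}{\sqrt{Hd_{l-1}}}\sum_{h}W^{l,h}F^{l,h}$ with fresh $W^{l,h}$. Since $\E_{W}[(W^{l,h}F^{l,h}_{\cdot i})(W^{l,h}F^{l,h}_{\cdot j})^{T}]=((F^{l,h}_{\cdot i})^{T}F^{l,h}_{\cdot j})\,I_{d_{l}}$, the head-sum is a central-limit average of conditionally i.i.d. terms and, as $H\to\infty$, converges to a centred Gaussian with $(i,j)$ column covariance $\tfrac{\sigma_{w}^{2}}{Hd_{l-1}}\sum_{h}(F^{l,h}_{\cdot i})^{T}F^{l,h}_{\cdot j}$ times $I_{d_{l}}$. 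Expanding $(F^{l,h}_{\cdot i})^{T}F^{l,h}_{\cdot j}=\sum_{k,m}\bigl((G^{l-1})^{T}G^{l-1}\bigr)_{km}\sigma_{1}(L^{l,h})_{ki}\sigma_{1}(L^{l,h})_{mj}$ and sending $\min\{H,d_{l-1}\}\to\infty$ replaces $\tfrac{1}{d_{l-1}}(G^{l-1})^{T}G^{l-1}$ by $\Lambda^{l-1}$ and the head average of $\sigma_{1}(L)_{ki}\sigma_{1}(L)_{mj}$ by the corresponding block entry of $\psi(\Lambda^{l-1})$; the resulting double sum is exactly the Frobenius pairing $\langle\sigma_{w}^{2}\Lambda^{l-1},[\psi(\Lambda^{l-1})]_{ij}\rangle_{F}=\bm(\sigma_{w}^{2}\Lambda^{l-1},\psi(\Lambda^{l-1}))_{ij}$. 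Hence $\vect(F^{l})$ is a centred GP with the claimed covariance, and applying $\sigma_{2}$ together with one more width LLN produces $\Lambda^{l}=\Exp{\sigma_{2}(u)\sigma_{2}(u^{T})}$, closing the induction.

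The hard part will be making the joint limit $\min\{H,d_{l-1}\}\to\infty$ rigorous: the attention weights $\sigma_{1}(L^{l,h})$ and the value features $G^{l-1}$ are built from the same quantities, so one must show that the per-head fluctuations of the scores and their coupling to the values wash out rather than contaminate the Gaussian limit, and the two averages (CLT over heads, LLN over width) must be controlled simultaneously rather than contributing lower-order cross terms. I would handle this either by taking the limits successively --- first $d_{l-1}\to\infty$, so the conditional Gram matrix and $\psi$ converge to their deterministic limits, then $H\to\infty$ for the head CLT --- or, for the genuine joint limit, by invoking the Tensor-Programs master theorem and the attention-specific Gaussian-conditioning arguments of \cite{yang1} and \cite{pmlr-v119-hron20a}, whose applicability is exactly what the reduction to the $\tilde{W}$-free GAT* model was designed to secure; the polynomial bounds on $\sigma_{1},\sigma_{2}$ supply the moment control needed for both convergence statements.
\end{sproof}
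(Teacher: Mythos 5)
Your proposal is correct and takes essentially the same route as the paper's own proof: an induction that first establishes the conditional Gaussianity of the logits $L^{l,h}$ and the width-limit of their covariance to $\sigma_{c}^{2}\gamma_{A}(\Lambda^{l-1})$ (the paper's Lemma \ref{lemmagatgp}), then combines values and heads so the covariance of $\vect(F^{l})$ becomes the Frobenius pairing $\bm\left(\sigma_{w}^{2}\Lambda^{l-1},\, \psi(\Lambda^{l-1})\right)$, arguing the limits sequentially ($d_{l-1}\to\infty$ then $H\to\infty$) and deferring the genuine joint limit $\min\{H,d_{l-1}\}\to\infty$ to the master theorems of \cite{yang1} and \cite{pmlr-v119-hron20a}. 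This matches the paper's proof sketch step for step, including the appeal to polynomial boundedness of $\sigma_{1},\sigma_{2}$ to satisfy those theorems' assumptions.
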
 \noindent
The proof of Theorem \ref{thmgattt} is given in \ref{gatgpproof}.

\subsubsection{Neural Tangent Kernel}
\begin{restatable}[GAT* NTK]{theorem}{thmgatt}\label{thm:gat2} 
The NTK (of a GAT* of depth $L$) is $\Theta^{L} \otimes I_{d_{l}}$
{\small
\begin{align}
    \Lambda^{l} &:= \Exp{\sigma_{2}(v)\sigma_{2}(v^{T})}), \ \dot{\Lambda}^{l} := \Exp{\dot{\sigma}_{2}(v)\dot{\sigma}_{2}(v^{T})}
    \text{ with  $v \sim \text{GP}\left(0,   \Lambda^{l-1} \right) $}\\
    \dot{\psi}(\Lambda^{l}) &:= \Exp{\dot{\sigma}_{1}(v)\dot{\sigma}_{1}(v^{T})}
    \text{ with  $v \sim \text{GP}\left(0,  \sigma_{c}^{2}\gamma_{A}(\Omega) \right) $}\\
\Theta^{1}&=\Lambda^{0} = \sigma_{w}^{2} \frac{1}{d_{1}} X^{T}X \\
\Theta^{L} &= 
\bm\left[\sigma_{w}^{2} \Lambda^{L-1}, \psi(\Lambda^{L-1})  \right] \\
&+ \bm \left[\sigma_{w}^{2} \Lambda^{L-1} ,   \sigma_{c}^{2} \left(\gamma_{A}(\Lambda^{L-1})   \odot  \dot{\psi}(\Lambda^{L-1})  \right) \right] \\
&+ \bm \left[ \sigma_{w}^{2} \Lambda^{L-1}, \sigma^{2}_{c}\left( \gamma_{A}(\Theta^{L-1} \odot \dot{\Lambda}^{L-1})   \odot  \dot{\psi}(\Lambda^{L-1}) \right) \right] \\
&+ \bm \left[ \sigma_{w}^{2} \left(
            \Theta^{L-1} \odot \dot{\Lambda}^{L-1}\right), \psi(\Lambda^{L-1}) \right]
\end{align}
}%
\end{restatable}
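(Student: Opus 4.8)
The plan is to compute the NTK directly from its definition as the Gram matrix of parameter gradients, $\Theta = \p{u}{\theta}\left(\p{u}{\theta}\right)^{T}$ with $u = \vect(F^{L})$, and to proceed by induction on the depth $L$, reducing each inductive step to the Gaussian Process limit already established in Theorem \ref{thmgattt}. The trainable parameters of the GAT* of Definition \ref{defi:gat} split into two families: the mixing weights $W^{l}$ and the attention vectors $c^{l,h}$ (equivalently $c_{1}^{l,h}, c_{2}^{l,h}$). The gradient of $F^{L}$ with respect to a parameter in a layer $l < L$ flows into the last layer through the two distinct occurrences of $G^{L-1}$ in the identity $F^{L,h} = G^{L-1}\sigma_{1}(L^{L,h})$, namely as the feature factor $G^{L-1}$ and as the argument of the attention logits $L^{L,h}$. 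Together with the two last-layer parameter families this produces a clean $2\times 2$ decomposition of $\Theta^{L}$ into four gradient-outer-products, which I expect to match the four \bm terms in the statement.

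First I would treat the base case. At $L=1$ the output $F^{1}$ depends on the data only linearly through $W^{1}$, so the $W^{1}$-gradient dominates and, after the Law of Large Numbers over the hidden width, yields the seed $\Theta^{1} = \Lambda^{0} = \sigma_{w}^{2}\frac{1}{d_{1}}X^{T}X$, while the attention contribution is folded into the recursion that applies for $L \geq 2$. For the inductive step I would differentiate $F^{L} = \frac{\sigma_{w}}{\sqrt{Hd_{L-1}}} W^{L}\left[F^{L,1}, \dots, F^{L,H}\right]^{T}$ term by term. Differentiating with respect to $W^{L}$ gives an outer product of the stacked features $\left[F^{L,1}, \dots, F^{L,H}\right]$; recognizing from Theorem \ref{thmgattt} that the feature covariance is $\sigma_{w}^{2}\Lambda^{L-1}$ and the attention-weight covariance is $\psi(\Lambda^{L-1})$, this converges to $\bm\left[\sigma_{w}^{2}\Lambda^{L-1}, \psi(\Lambda^{L-1})\right]$ (the first term). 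Differentiating with respect to $c^{L}$ passes the chain rule through $\sigma_{1}(L^{L,h})$, producing the derivative kernel $\dot\psi(\Lambda^{L-1})$ together with the logit-covariance operator $\gamma_{A}$ applied to the current feature kernel $\Lambda^{L-1}$, giving the second term. The two backpropagation paths into the sub-network $F^{L-1}$ each contribute a factor $\dot\Lambda^{L-1}$ (from differentiating $G^{L-1} = \sigma_{2}(F^{L-1})$) multiplied by the sub-network NTK $\Theta^{L-1}$: the feature path reproduces $\psi(\Lambda^{L-1})$ and yields the fourth term, while the logit path carries the $\gamma_{A}(\cdot)\odot\dot\psi$ structure with argument $\Theta^{L-1}\odot\dot\Lambda^{L-1}$ and yields the third term. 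Summing the four contributions and factoring out $I_{d_{L}}$ via the Kronecker identity $(A\otimes B)(C\otimes D) = AC\otimes BD$, exactly as in Theorems \ref{th:ntk} and \ref{th:gntk}, gives the stated recursion.

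The hard part will be the attention mechanism's multiplicative and bilinear structure. Because $F^{L,h}$ is a product of two parameter-dependent factors, the product rule generates cross terms, and the logit $L^{L,h}_{ij}$ couples features at pairs of nodes $i,j$ through the adjacency entry $A_{ij}$ and the bilinear form $c_{1}^{T}G_{\cdot i}^{L-1} + c_{2}^{T}G_{\cdot j}^{L-1}$. Correctly bookkeeping this coupling is precisely what the operator $\gamma_{A}(\Omega) = J_{A}\begin{psmallmatrix}\Omega & \Omega \\ \Omega & \Omega\end{psmallmatrix}J_{A}^{T}$ with $J_{A} = \diag(\vect(A))\concat\left((\mathbf{1}_{n}\otimes I_{n}),(I_{n}\otimes\mathbf{1}_{n})\right)$ accomplishes, and the delicate step is verifying that the empirical logit Gram converges to $\sigma_{c}^{2}\gamma_{A}(\Lambda^{L-1})$ in the width limit. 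A second subtlety is the double limit $\min\{H, d_{l-1}\}\to\infty$: the operator $\bm[\cdot,\cdot]$ arises exactly from summing per-head contributions and invoking the Law of Large Numbers over both the head index $h$ and the hidden units, so one must ensure these limits interact as required. Here I would lean on the Tensor-Programs framework of \cite{yang1} and the attention-NTK analysis of \cite{pmlr-v119-hron20a}, which the reduction to GAT* (dropping the redundant $\tilde{W}$) was designed to make applicable. Finally, I would verify that the mixed $W$–$c$ cross terms and the product-rule cross terms vanish in the limit, since gradients of distinct parameter families become asymptotically independent, leaving exactly the four surviving contributions.
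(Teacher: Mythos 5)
Your proposal follows essentially the same route as the paper's proof: the identical decomposition of the NTK into the last-layer $W^{L}$ gradient term, the attention-vector $c^{L,h}$ term, and the backpropagation term carrying $\Theta^{L-1}$, with the latter split via the product rule on $F^{L,h}=G^{L-1}\sigma_{1}(L^{L,h})$ into a feature path and a logit path (the paper's $B^{l,h,h'}$ and $A^{l,h,h'}$), the cross terms (the paper's $C^{l,h,h'}$) shown to vanish --- in the paper via a Chebyshev second-moment argument --- and the same reliance on \cite{yang2} and \cite{pmlr-v119-hron20a} for convergence under $\min\{H,d_{l-1}\}\to\infty$. One minor remark: the mixed $W$--$c$ cross terms you propose to rule out never arise at all, since the NTK is by definition a sum of per-parameter-family gradient outer products; only the product-rule cross terms within the backpropagation path require an argument.
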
 \noindent
The proof of Theorem \ref{thm:gat2} is given in \ref{gatntkproof}.

\begin{corollary}
The NTK (of a GAT* of depth $L$ with $\sigma_{1}$  being the Identity Function ) is $\Theta^{L} \otimes I_{d_{l}}$, with
\begin{align}
    \Lambda^{l} &:= \Exp{\sigma_{2}(v)\sigma_{2}(v^{T})}), \ \dot{\Lambda}^{l} := \Exp{\dot{\sigma}_{2}(v)\dot{\sigma}_{2}(v^{T})}
    \text{ with  $v \sim \text{GP}\left(0,   \Lambda^{l-1} \right) $}\\
\Theta^{1}&=\Lambda^{0} = \sigma_{w}^{2} \frac{1}{d_{1}} X^{T}X \\
\Theta^{L} &= 
2 \cdot \bm\left[\sigma_{w}^{2} \Lambda^{L-1}, \sigma_{c}^{2}\gamma_{A}(\Lambda^{L-1})  \right] \\
&+ \bm \left[ \sigma_{w}^{2} \Lambda^{L-1}, \sigma^{2}_{c} \gamma_{A} \left( \Theta^{L-1} \odot \dot{\Lambda}^{L-1} \right) \right] \\
&+ \bm \left[ \sigma_{w}^{2} \left(
            \Theta^{L-1} \odot \dot{\Lambda}^{L-1}\right), \sigma_{c}^{2}\gamma_{A}(\Lambda^{L-1}) \right]
\end{align}
\end{corollary}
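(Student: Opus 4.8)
The plan is to obtain this corollary as a direct specialization of the general GAT* NTK recursion in Theorem~\ref{thm:gat2}: every quantity in that recursion is left untouched except the two $\sigma_{1}$-dependent objects $\psi$ and $\dot{\psi}$, so I only need to re-evaluate those two under the hypothesis $\sigma_{1}=\mathrm{id}$ and substitute back.

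First I would evaluate $\dot{\psi}$. Since $\sigma_{1}(x)=x$ has constant derivative $\dot{\sigma}_{1}\equiv 1$, for any Gaussian vector $v$ we get $\dot{\sigma}_{1}(v)=\mathbf{1}$, the all-ones vector, hence
\[
\dot{\psi}(\Lambda^{L-1}) = \Exp{\dot{\sigma}_{1}(v)\dot{\sigma}_{1}(v^{T})} = \mathbf{1}\mathbf{1}^{T},
\]
the all-ones matrix of the appropriate size. Next I would evaluate $\psi$: with $\sigma_{1}=\mathrm{id}$ we have $\sigma_{1}(v)=v$ for $v\sim\mathrm{GP}\!\left(0,\sigma_{c}^{2}\gamma_{A}(\Lambda^{L-1})\right)$, and since $v$ is zero-mean Gaussian its second moment coincides with its covariance, so $\psi(\Lambda^{L-1})=\sigma_{c}^{2}\gamma_{A}(\Lambda^{L-1})$.

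I would then substitute these two identities into the four batchmultiply terms of Theorem~\ref{thm:gat2}, using the elementary fact that the Hadamard product with the all-ones matrix acts as the identity, $M\odot\mathbf{1}\mathbf{1}^{T}=M$. The first term becomes $\bm\left[\sigma_{w}^{2}\Lambda^{L-1},\,\sigma_{c}^{2}\gamma_{A}(\Lambda^{L-1})\right]$; the second term, after absorbing $\dot{\psi}=\mathbf{1}\mathbf{1}^{T}$, collapses to the very same expression, and the two combine into the factor $2$ in the statement. In the third term $\dot{\psi}$ is again absorbed, leaving $\bm\left[\sigma_{w}^{2}\Lambda^{L-1},\,\sigma_{c}^{2}\gamma_{A}(\Theta^{L-1}\odot\dot{\Lambda}^{L-1})\right]$, while the fourth term becomes $\bm\left[\sigma_{w}^{2}(\Theta^{L-1}\odot\dot{\Lambda}^{L-1}),\,\sigma_{c}^{2}\gamma_{A}(\Lambda^{L-1})\right]$, matching the claim; the base case $\Theta^{1}=\Lambda^{0}$ is $\sigma_{1}$-free and carries over verbatim.

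There is no genuine obstacle here — granting Theorem~\ref{thm:gat2}, the argument is pure bookkeeping. The only point warranting a moment of care is confirming that $\dot{\psi}$ collapses to the rank-one all-ones matrix $\mathbf{1}\mathbf{1}^{T}$ (arising from the constant derivative of the identity) rather than to an identity matrix, since it is precisely this all-ones structure that makes the Hadamard products in the second and third terms disappear and thereby produces the coefficient $2$.
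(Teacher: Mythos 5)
Your proposal is correct and is exactly the intended derivation: the paper states this corollary without proof as an immediate specialization of Theorem~\ref{thm:gat2}, and your substitution $\psi(\Lambda^{L-1})=\sigma_{c}^{2}\gamma_{A}(\Lambda^{L-1})$, $\dot{\psi}(\Lambda^{L-1})=\mathbf{1}\mathbf{1}^{T}$ (so that the Hadamard products vanish and the first two terms merge into the factor $2$) reproduces each of the three displayed terms. Your closing remark is also the right point of care: it is the all-ones matrix, not the identity matrix, that $\dot{\psi}$ collapses to, and this is what makes the bookkeeping go through.
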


\begin{corollary}[GAT* NTK with bias (without proof)]\label{GAT-NTK123}\ \\
    To include a bias term for the GAT NTK one can replace every  occurrence of $\sigma_{w}^{2} \Lambda^{l-1}$ by $\sigma_{w}^{2} \Lambda^{l-1} + \sigma_{b}^{2}$.
\end{corollary}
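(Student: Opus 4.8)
The plan is to reinstate the bias term that Definition \ref{defi:gat} deliberately dropped and to retrace the two infinite-width arguments (Theorem \ref{thmgattt} and Theorem \ref{thm:gat2}) while tracking exactly where the new parameter contributes. Concretely, I would augment the pre-activation by a rank-one bias, writing $F^{l} = \frac{\sigma_{w}}{\sqrt{Hd_{l-1}}} W^{l}[F^{l,1},\dots,F^{l,H}]^{T} + \sigma_{b}B^{l}$ with $B^{l} = b^{l}\otimes\mathbf{1}_{n}^{T}$ and $b^{l}\sim N(0,1)$ drawn independently of the $W^{l,h}$ and of the attention weights $c^{l,h}$. Because the bias is shared across the $n$ columns and independent of everything else, its only role is to add an independent Gaussian offset to every node's pre-activation.

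First I would redo the Gaussian-process step. Conditioning on $G^{l-1}$ and on the attention matrices $\sigma_{1}(L^{l,h})$, the vector $\vect(F^{l})$ is Gaussian, and by independence of $W^{l}$ and $b^{l}$ its conditional covariance splits additively. The weight contribution is unchanged and, after taking the expectation over the attention, yields $\bm(\sigma_{w}^{2}\Lambda^{l-1},\psi(\Lambda^{l-1}))$ exactly as in Theorem \ref{thmgattt}. The bias contribution is $\sigma_{b}^{2}\,\mathbb{E}[b^{l}(b^{l})^{T}]$ per output coordinate; since $b^{l}$ is shared across nodes this is a constant $\sigma_{b}^{2}$ added to every entry of the $n\times n$ node-covariance and carries the $\otimes I_{d_{l}}$ factor. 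Hence the GP covariance of the pre-activation becomes $\bm(\sigma_{w}^{2}\Lambda^{l-1},\psi(\Lambda^{l-1})) + \sigma_{b}^{2}$.

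Next I would propagate this through the tangent-kernel recursion. The NTK is the sum over all trainable parameters of $\partial_{\theta}F\,(\partial_{\theta}F)^{T}$; adding $b^{l}$ at each layer introduces the extra terms $\partial_{b^{l}}F\,(\partial_{b^{l}}F)^{T}$. Exactly as in the fully-connected recursion of Theorem \ref{th:ntk}, where the bias turns $\sigma_{w}^{2}\mathbb{E}[\sigma(u)\sigma(u^{T})]$ into $\Lambda = \sigma_{w}^{2}\mathbb{E}[\sigma(u)\sigma(u^{T})] + \sigma_{b}^{2}$, the last-layer bias gradient contributes a flat $\sigma_{b}^{2}$ to the GP-type term $\bm(\sigma_{w}^{2}\Lambda^{L-1},\psi(\Lambda^{L-1}))$, while the earlier-layer biases feed into the same recursion that already produces $\Lambda^{l}$ and $\dot{\Lambda}^{l}$. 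Collecting terms, every place where the bare GP covariance $\bm(\sigma_{w}^{2}\Lambda^{l-1},\psi(\Lambda^{l-1}))$ appears acquires an additive $+\sigma_{b}^{2}$.

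The main obstacle, and the reason the statement is naturally recorded without proof, is the final rewriting $\bm(\sigma_{w}^{2}\Lambda^{l-1},\psi) + \sigma_{b}^{2} = \bm(\sigma_{w}^{2}\Lambda^{l-1} + \sigma_{b}^{2},\psi)$. Unlike the fully-connected and plain-GNN cases, where $\Lambda$ is never wrapped in a $\bm(\cdot,\cdot)$ and the substitution is immediate, here the batch-multiply mixes its two arguments: by bilinearity of the Frobenius inner product, $\bm(\sigma_{b}^{2}\mathbf{1}_{n\times n},\psi)_{ij} = \sigma_{b}^{2}\sum_{kl}\psi_{IJ,kl}$, so pushing $\sigma_{b}^{2}$ inside the first argument is exact only when the mass of each $n\times n$ block of $\psi(\Lambda^{l-1})$ sums to one. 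I would therefore finish by isolating this condition: it holds whenever the attention is genuinely normalized, i.e. each block of $\psi$ integrates to one (the original softmax case, where $\sum_{k}\alpha_{ki}=1$), whereas for the elementwise-$\sigma_{1}$ surrogate adopted in Definition \ref{defi:gat} it holds only approximately. This is precisely why the corollary is best read as a formal prescription rather than an identity valid for every choice of $\sigma_{1}$ and $A$.
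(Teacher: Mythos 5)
First, note that the paper itself states Corollary \ref{GAT-NTK123} explicitly \emph{without proof}, so there is no reference argument to compare you against; your proposal has to stand on its own. Your two covariance computations are individually correct: with the bias added \emph{after} the attention aggregation, i.e.\ $F^{l} = \frac{\sigma_{w}}{\sqrt{Hd_{l-1}}}W^{l}[F^{l,1},\dots,F^{l,H}]^{T} + \sigma_{b}b^{l}\otimes\mathbf{1}_{n}^{T}$, the pre-activation node covariance is $\bm\left(\sigma_{w}^{2}\Lambda^{l-1},\; \psi(\Lambda^{l-1})\right) + \sigma_{b}^{2}\mathbf{1}_{n\times n}$, whereas the corollary's substitution produces $\bm\left(\sigma_{w}^{2}\Lambda^{l-1},\; \psi(\Lambda^{l-1})\right) + \sigma_{b}^{2}\,\bm\left(\mathbf{1}_{n\times n},\; \psi(\Lambda^{l-1})\right)$, and these agree only when every $n\times n$ block of $\psi(\Lambda^{l-1})$ sums to one. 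The gap is in the conclusion you draw from this: the mismatch is not evidence that the corollary is merely a ``formal prescription''; it is an artifact of where you chose to put the bias.

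Everywhere else in this paper the bias sits \emph{inside} the parentheses and is multiplied by the aggregation matrix: Definition \ref{defi:gnn} and Definition \ref{sgnn-definintion} both read $F^{h} = \bigl(\frac{\sigma_{w}}{\sqrt{d_{h-1}}}W^{h}G^{h-1} + \sigma_{b}B^{h}\bigr)A^{T}$, and it is exactly this placement that turns the GNN bias into the ``$+\sigma_{b}^{2}$ inside $A(\cdot)A^{T}$'' of Theorems \ref{th:gnngp} and \ref{th:gntk}. The GAT*-consistent model is therefore $F^{l} = \frac{1}{\sqrt{H}}\sum_{h}\bigl(\frac{\sigma_{w}}{\sqrt{d_{l-1}}}W^{l,h}G^{l-1} + \sigma_{b}B^{l,h}\bigr)\sigma_{1}(L^{l,h})$ with $B^{l,h}=b^{l,h}\otimes\mathbf{1}_{n}^{T}$, $b^{l,h}\sim N(0,1)$ independent across heads. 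With this placement the bias contribution to $\Cov(\vect(F^{l}))$ at block $IJ$, entries $uv$, is $\delta_{uv}\,\sigma_{b}^{2}\,\frac{1}{H}\sum_{h}\Exp{\textstyle\sum_{st}\sigma_{1}(L^{l,h})_{si}\,\sigma_{1}(L^{l,h})_{tj}} \longrightarrow \delta_{uv}\,\sigma_{b}^{2}\sum_{st}\psi(\Lambda^{l-1})_{IJ_{st}} = \delta_{uv}\,\bm\left(\sigma_{b}^{2}\mathbf{1}_{n\times n},\; \psi(\Lambda^{l-1})\right)_{ij}$, and the same computation shows the bias-gradient Gram term $\sum_{h}\pv{F^{l}}{b^{l,h}}\bigl(\pv{F^{l}}{b^{l,h}}\bigr)^{T}$ in the NTK converges to $\bm\left(\sigma_{b}^{2}\mathbf{1}_{n\times n},\; \psi(\Lambda^{l-1})\right)\otimes I_{d_{l}}$. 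Since $\bm(\cdot,\cdot)$ is linear in its first argument, $\bm\left(\sigma_{w}^{2}\Lambda^{l-1},\; \psi\right) + \bm\left(\sigma_{b}^{2}\mathbf{1}_{n\times n},\; \psi\right) = \bm\left(\sigma_{w}^{2}\Lambda^{l-1}+\sigma_{b}^{2},\; \psi\right)$ holds \emph{exactly} --- no normalization of $\psi$, no softmax, no condition on $A$ is needed, and the same identity disposes of every occurrence of $\sigma_{w}^{2}\Lambda^{l-1}$ in Theorems \ref{thmgattt} and \ref{thm:gat2}. So the replacement rule of the corollary is an exact identity for the paper-consistent bias placement; your argument would go through verbatim once you move the bias inside the attention multiplication, but as written it only proves a weaker, conditional statement.
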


\subsection{Infinite Width Graph Neural Networks for Graph Regression/ Classification} \label{ind}

In the setting of inductive learning, the task is to train a GNN on multiple graphs, e.g.: \\ ${((A_{1}, X_{1}, y_{1}), (A_{2}, X_{2}, y_{2}), ... ,(A_{n}, X_{n}, y_{n})}$. Where each triple consists of a graph, its features and a label (like a class assignment for classification).
A two layer GNN with a final sum pooling layer, can be written down as: $F^{3}(A_{i},X_{i})_{h} =  \sum_{h} F^{2}(A_{i},X_{i})_{\cdot h}$. So $F^{3}(A_{i},X_{i}) \inR{d_{3}}$ is a vector and $F^{2}(A_{i},X_{i}) \inR{d_{2} \times d_{3}}$ is defined like the GNN for Node Regression/ Classification.
The infinite GP in that case will be $\vect(F^{3}) \sim GP(0, \Sigma) $ with $ \Sigma_{ij} =  \sum_{st} \left(A_{i} \Lambda^{1}A_{j}^{T} \right)_{st}$.
\begin{align*}
    \Lambda^{1} &= \sigma^{2}_{w}\mathbb{E} \left[ \sigma(u) \sigma(u^{T}) \right] +\sigma_{b}^{2} \\
        \intertext{with $u \sim N(0, A_{i}\Lambda^{0}A_{j}^{T})$}
    \Lambda^{0} &= \frac{\sigma^{2}_{w}}{d_{0}} X_{i}^{T}X_{j} + \sigma^{2}_{b}
\end{align*}
The GNTK extends similarly to the setting of Graph Induction. The NTK is $\sum_{st} \Theta^{L}(i,j)_{st}$ with
\begin{gather*}
\Theta^{L}(i,j) = \sum_{st}\left( A_{i} \left( \Lambda^{L-1} +  \left( \dot{\Lambda}^{L-1}  \odot \Theta^{L-1} \right) 
\right) A_{j}^{T} \right)_{st} \text{ with } \\
    \Lambda^{L} = \sigma^{2}_{w}\mathbb{E} \left[ \sigma(u) \sigma(u^{T}) \right] +\sigma_{b}^{2}  \\
        \dot{\Lambda}^{L} = \sigma^{2}_{w}\mathbb{E} \left[ \dot{\sigma}(u) \dot{\sigma}(u^{T}) \right] +\sigma_{b}^{2}
        \intertext{ with $u \sim N(0, A_{i}\Lambda^{L-1}A_{j}^{T})$} 
        \Theta^{1} = \Lambda^{0} = \sigma^{2}_{w} X_{i}^{T}X_{j} + \sigma^{2}_{b}
\end{gather*}
$\Theta^{L}(i,j)$ is just the NTK for Node Classification/ Regression for different data samples $(A_{i},X_{i})$ and $(A_{j},X_{j})$.
The proof can be conducted using the same approach as in \ref{gnn123}.

\clearpage
\section{Experiments}
All models were evaluated on a variety of transductive Node Classification/ Regression Tasks.
We are going to first summarize the datasets, proceed to the simulations showing how GNNs are behaving for increasing width and finally specify the models in detail and present the performance.

\subsection{Simulations} \label{sims}
In this section we empirically confirm the NTK findings from \ref{recap}. 1) consists of showing the for increased width the loss converges to zero, 2) consists of showing that the weight barely change for increased width and 3) consists of showing that the NTK is close to its initialization during training.
We use the KarateClub dataset \citep{karate} with 34 nodes, of which we used 27 for training with four classes. We now show for FCNs, GNNs, Skip-GNNs and the GAT that 1), 2) and 3) are valid (see Fig. \ref{fig:ntksim}). They were initialized randomly as defined in \ref{fcn-ef}, \ref{defi:gnn}, \ref{sgnn-definintion} and \ref{defi:gat}.
For the GNN and Skip-GNN we use the Kipf \& Welling normalized adjacency matrix and for the GAT the 0-1 adjacency with added self loops.
Finally, using a GNN with the 0-1 adjacency with self loops is not consistent with NTK theory. Figure \ref{fig:ntksimfails} shows cleary that with increased width neither the weights are changing less nor does the empirical NTK stay close to its initialization. For a detailed discussion on why, see \ref{th:gntk}.

\begin{table}[htb!]
\centering
    \begin{tabular}{|c|c|c|c|} \hline
         &  Wiki &  Cora& Citeseer\\ \hline
         GCN-8&  0.09, 0.025&  0,14, 0.22& 0.17, 0.16\\ \hline
         GCN-32&  0.07, 0.32&  0.14, 0.25& 0.18, 0.2\\ \hline
         GCN-128&  0.07, 0.45&  0.15, 0.13& 0.17, 0.16\\ \hline
         GCN-512&  0.08, 0.49&  0,16, 0.1& 0.17, 0.16\\ \hline
    \end{tabular}
    \caption{Overfitting with Gradient Descent, comma-seperated loss \& accuracy}
    \label{tab:overf1}
    \centering
    \begin{tabular}{|c|c|c|c|} \hline
         &  Wiki &  Cora& Citeseer\\ \hline
         GCN-8&  0.1, 0.15&  0, 0.75& 0.017, 0.57\\ \hline
         GCN-32&  0.06, 0.08&  0, 0.73& 0, 0.6\\ \hline
         GCN-128&  0.07, 0.45&  0.15, 0.13& 0.17, 0.16\\ \hline
         GCN-512&  0.01, 0.73&  0, 0.76& 0, 0.625\\ \hline
    \end{tabular}
    \caption{Overfitting with Adam Optimizer, comma-seperated loss \& accuracy}
    \label{tab:overf2}
\end{table} \noindent
Table \ref{tab:overf1} displays the training loss and accuracy comma-seperated for a two layer GNN with increasing width for different datasets. In table \ref{tab:overf1} the GNN was trained with Gradient Descent for with a learning rate of 0.001 and in table \ref{tab:overf2} Adam Optimizer with the same learning rate. For a FCN we would expect a very wide Neural Net to achieve zero training loss and \emph{not} overfit \citep{arora2019exact}. For GNNs we can observe that Gradient Descent fails completely, although the loss in each case is close to zero. For the adam optimizer
increased width, improves accuracy with an outlier when we set the width to 128. The loss is already near zero for setting the width to 8.  GNNs seem to be very sensitive to the optimizer, step size and width more than would be expected for FCNs.

\begin{figure*}
 \begin{subfigure}{0.33\textwidth}
                \includegraphics[width=\linewidth]{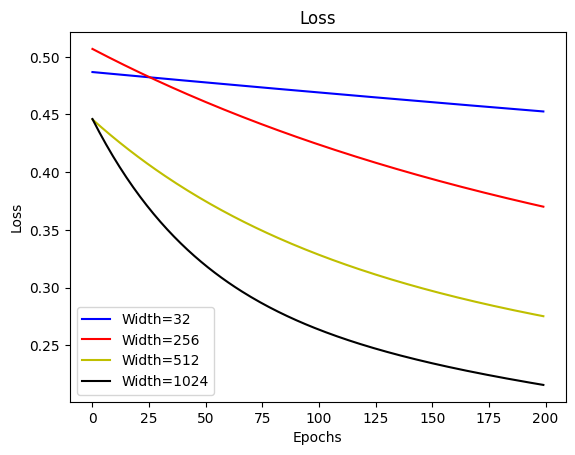}
        \end{subfigure}%
         \begin{subfigure}{0.33\textwidth}
                \includegraphics[width=\linewidth]{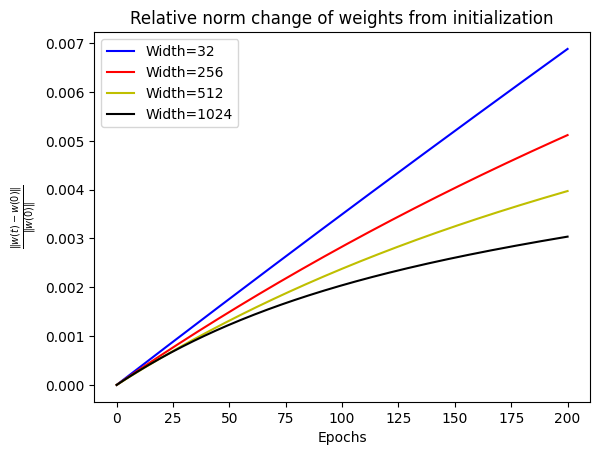}
        \end{subfigure}%
         \begin{subfigure}{0.335\textwidth}
                \includegraphics[width=\linewidth]{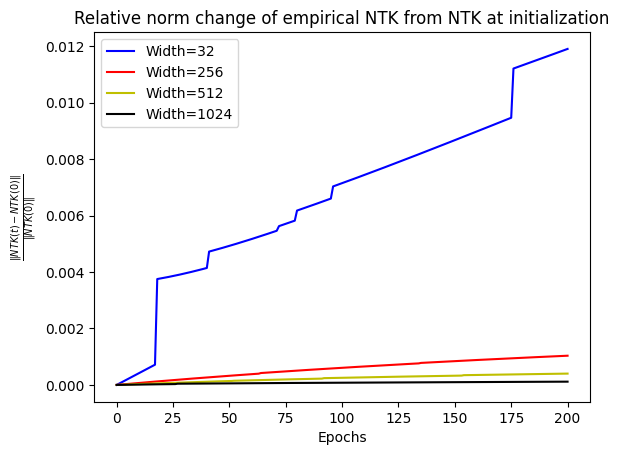}
        \end{subfigure}%

     \begin{subfigure}{0.33\textwidth}
                \includegraphics[width=\linewidth]{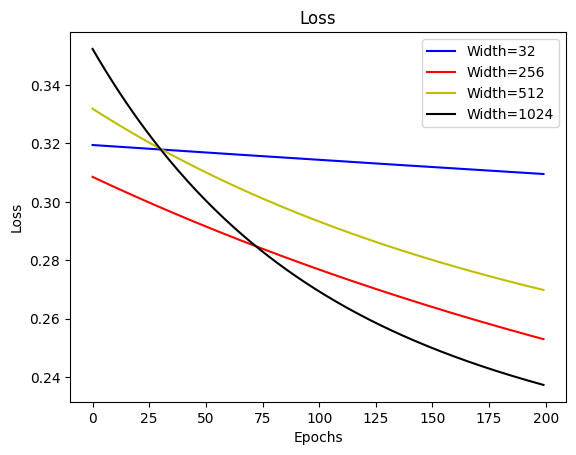}
        \end{subfigure}%
         \begin{subfigure}{0.33\textwidth}
                \includegraphics[width=\linewidth]{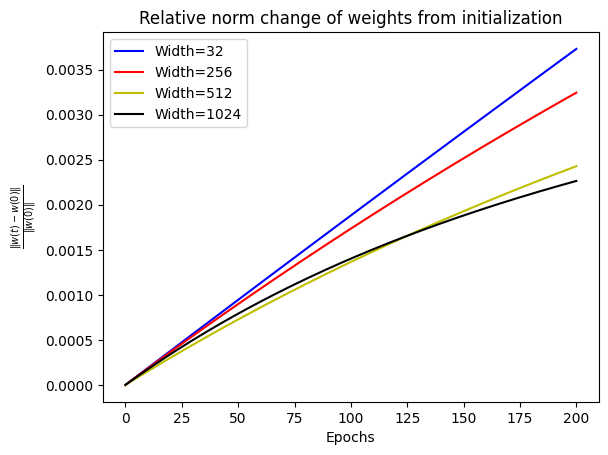}
        \end{subfigure}%
         \begin{subfigure}{0.335\textwidth}
                \includegraphics[width=\linewidth]{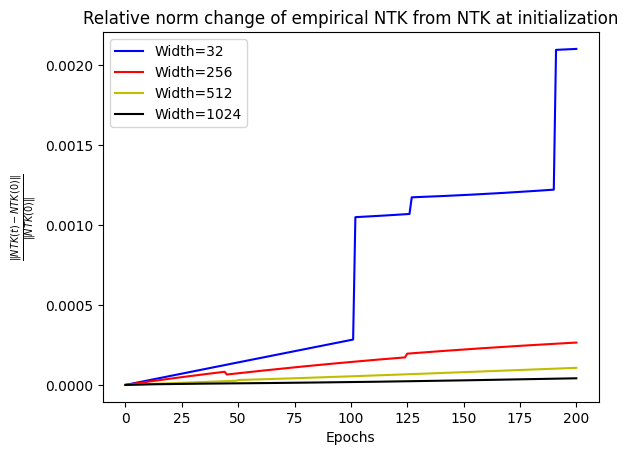}
        \end{subfigure}%
        
             \begin{subfigure}{0.33\textwidth}
                \includegraphics[width=\linewidth]{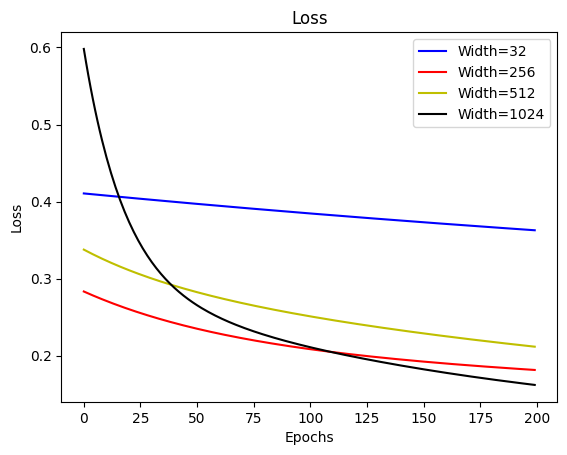}
        \end{subfigure}%
         \begin{subfigure}{0.33\textwidth}
                \includegraphics[width=\linewidth]{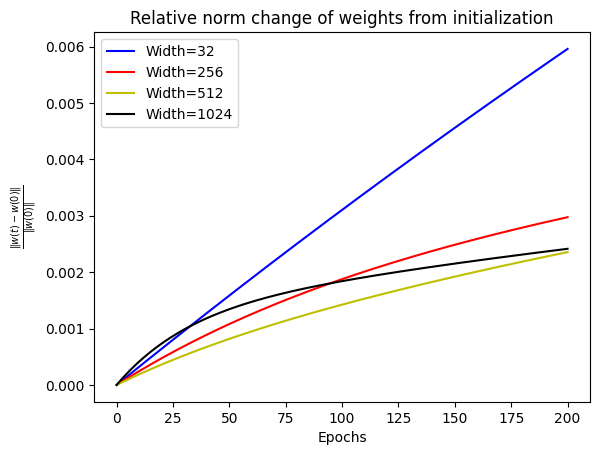}
        \end{subfigure}%
         \begin{subfigure}{0.335\textwidth}
                \includegraphics[width=\linewidth]{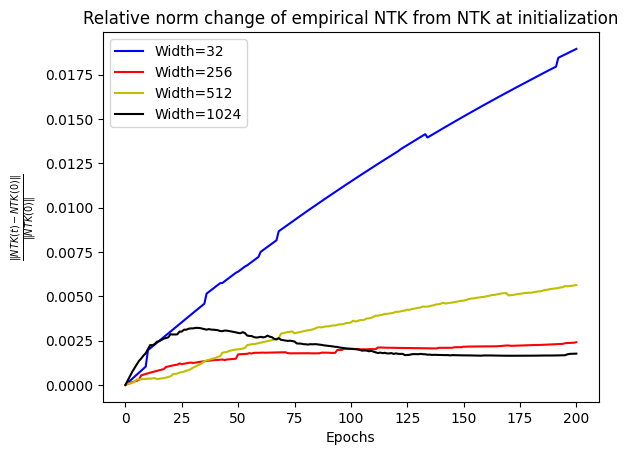}
        \end{subfigure}%

\begin{subfigure}{0.33\textwidth}
                \includegraphics[width=\linewidth]{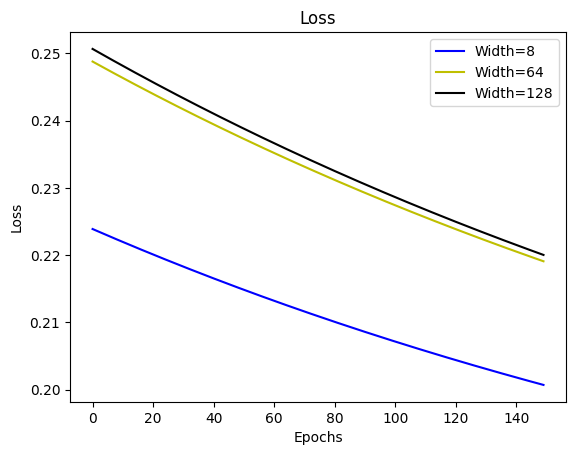}
        \end{subfigure}%
         \begin{subfigure}{0.33\textwidth}
                \includegraphics[width=\linewidth]{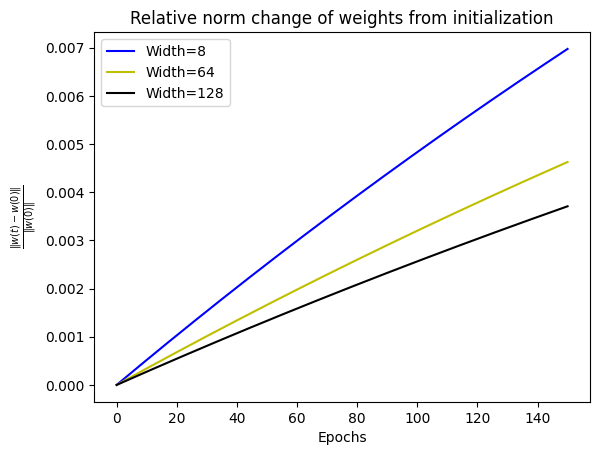}
        \end{subfigure}%
         \begin{subfigure}{0.335\textwidth}
                \includegraphics[width=\linewidth]{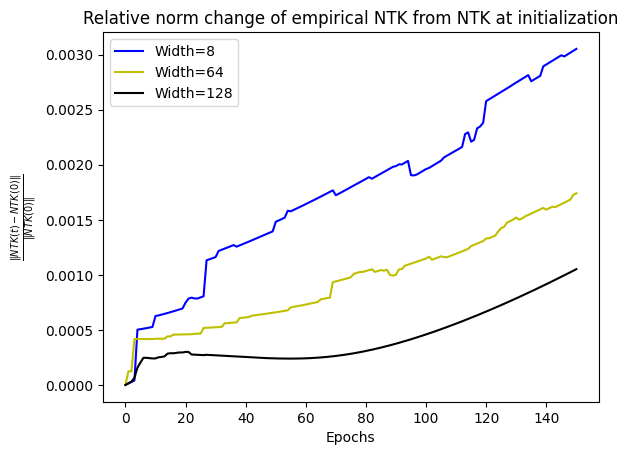}
        \end{subfigure}%
\caption{First row corresponds to a two layer FCN, second row to a two layer GNN with the Kipf \& Normalized Adjacency Matrix, third row is the Skip-GNN with the same Adjacency Matrix and fourth row a GAT with the number of Heads equal to the Width and the Attention Heads are summed over. All models are trained with learning rate of 0.001 and Gradient Descent.
For each row the left figures shows training loss for different widths, the middle figure shows weight change (difference to weights at initialization) during training and the right figures show the difference of the empirical NTK compared to its initialization during training. These simulations confirm that for increased width during training we can observe 1) the training loss approaches zero 2) the weights of the GNN stay close to its initialization and 3) the empirical  NTK stays close to its initialization during training.}
\label{fig:ntksim}
\end{figure*}

\begin{figure}[!htb]
 \begin{subfigure}{0.33\textwidth}
                \includegraphics[width=\linewidth]{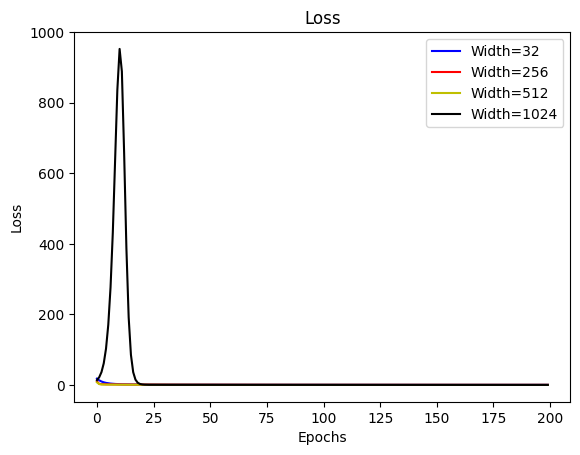}
        \end{subfigure}%
         \begin{subfigure}{0.33\textwidth}
                \includegraphics[width=\linewidth]{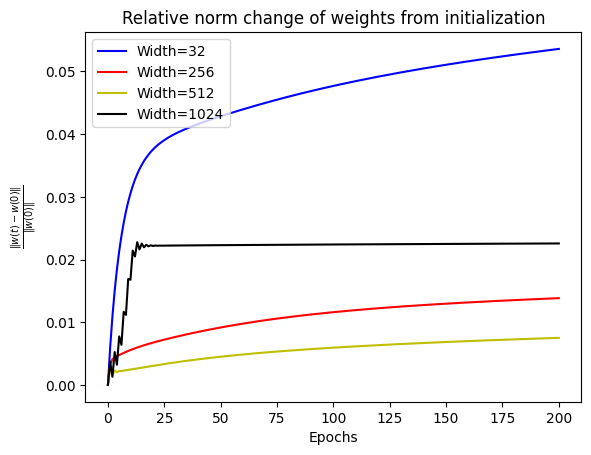}
        \end{subfigure}%
         \begin{subfigure}{0.335\textwidth}
                \includegraphics[width=\linewidth]{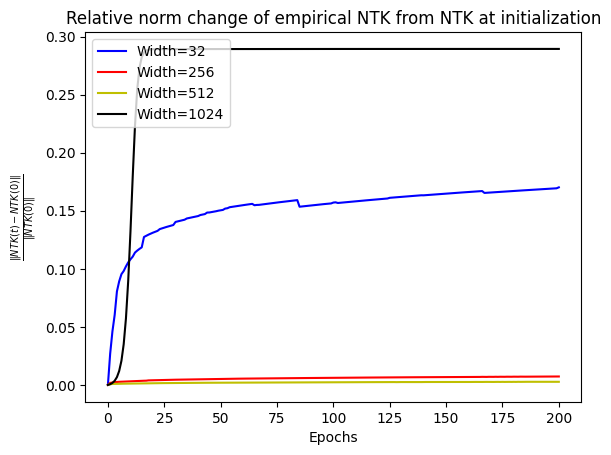}
        \end{subfigure}%
\caption{A GNN trained with the 0-1 Adjacency with self loops for a two layer GNN with a learning rate of 0.001 is not consistent with 1), 2) and 3).}
\label{fig:ntksimfails}
\end{figure}

\subsection{Datasets, Models \& Evaluation}
\subsubsection*{Datasets}
The following section contains a short description of each dataset:
\begin{enumerate}
    \item Classification: The citation network dataset ``Cora'', ``CiteSeer'' and ``PubMed'' from \cite{yang2016revisiting} and ``Wiki'' from \cite{wiki} are Node Classification Tasks. 
    The performance measure used is Accuracy.
    \item Regression:   are Node Regressions Tasks which are part of the Wikipedia networks Dataset \citep{multi}. Nodes represent web pages and edges represent hyperlinks between them. Node features represent several informative nouns in the Wikipedia pages. The task is to predict the average daily traffic of the web page. The performance measure used is the $R^{2}$-Score.
\end{enumerate}

\begin{table}[htb!]

  \label{tab:dataset}
  \begin{center}
  \begin{tabular}{ccccccc}
    \hline
    Dataset & Task  & Nodes & Edges & Features & Classes &Train/Val/Test Ratio \\
    \hline
    Cora  & Classification & 2,708  & 10,556& 1,433  &7 & 0.05/0.18/0.37 \\
    Citeseer  & Classification & 3,327  & 9,104& 3,703 &6 &  0.04/0.15/0.30 \\
    PubMed  & Classification & 19,717  & 88,648& 500  &3&  0.003/0.025/0.051 \\
    Wiki  & Classification & 2,405  & 17,981& 4,973& 17&  0.60/0.20/0.20 \\
    Chameleon & Regression & 2,277  & 62,742  & 3,132 &Regression & 0.48/0.32/0.20 \\
    Squirrel & Regression & 5,201  & 396,706  & 3,148 &Regression & 0.48/0.32/0.20 \\
    Crocodile & Regression & 11,631  & 341,546  & 13,183 & Regression& 0.48/0.32/0.20 \\
    \hline
  \end{tabular}

  \end{center}
      \caption{Dataset summary}
\end{table} \noindent
The datasets splits are exactly modelled after \citep{kipf2017semisupervised, veličković2018graph}. Note that the train/val/test split is fixed and not randomly sampled for each experiment. Not all graphs have labels for all nodes (e.g. Cora). 
\clearpage
\subsection*{Models}
Now we summarize the hyperparameters for the Models: 

\subsubsection*{Neural Nets}
The FCN, GNN, S-GNN, GAT and GAT* are all Neural Networks.
All but the S-GNN have two layers, as added depth did not increase the performance of the models.
The Skip-GNN has three layers, as the Skip-Connections only take effect after a minimum of three layers. The FCN is a Neural Net just trained on the Node features, i.e. a GNN with $I$ as the Adjacency Matrix. 
The models were trained for 300 epochs without early stopping. The features were normalized before training. All models used the Adam Optimizer with weight decay of 0.005. The GNN, S-GNN used a learning rate of 0.01 whereas the GAT and GAT* used a learning of 0.005.
Dropout of 0.5 was applied to the output of every layer during training. In the case of GAT and GAT* dropout of 0.5 was also applied to the Attention Adjacency Matrix. The Neural Nets were trained with Cross entropy Loss for Classification and Mean-Squared Error for Regression. The hyperparameters are borrowed from \cite{kipf2017semisupervised, veličković2018graph} where they were determined on the validation sets. Contrary to the mentioned papers, we did no use early stopping.
Regression was done similar as above but without feature normalization and for FCN, GNN, Skip-GCN with a learning rate of 0.1 without weight decay but with batch normalization after each layer except the last layer and for GAT and GAT* with a Learning Rate of 0.01 and no Batch Normalization. The FCN for Regression had Sigmoid nonlinearity instead of Relu, as it improved the performance from a negative $R^{2}$-Score to a positive one.

\subsubsection*{Kernels and Gaussian Processes}
The Kernels and GP were trained without feature normalization.
We used the validation set to perform a grid search for values between 0.001 and 10 to find the best regularization parameters. Regularization for GP (sometimes referred to as Noise Parameter) results in performing Kernel Ridge Regression, which results in adding $\lambda I$ to the Kernel/GP Covariance Matrix before inverting them (see e.g. \cite{rasm} for connections between GP and Kernels). 
 $\sigma_{w}^{2}$ and $\sigma_{c}^{2}$ are set to one and $\sigma_{b}^{2} $ is set to zeros for all Classification tasks. Regression was done with $\sigma_{b}^{2}= 0.1 $ similar to the setup in \cite{niu2023}. We conduct additional experiments (see \ref{roleofb}) to investigate the role of $\sigma_{b}^{2}$ as it seems to have a crucial role in the performance for the Regression tasks.
 In the case of GAT*-GP and GAT*-NTK, the nonlinearity for the Attention Matrix (e.g $\sigma_{1}$) was chosen to be the Identity Function. $\sigma_{2}$ was the LeakyRelu with a slope of 0.2 for the negative slope. In \ref{tab:gateval} we compared different nonlinearities for the GAT*GP and GAT*NTK. GAT*GP/GAT*NTK-LINEAR have no nonlinearity. GAT*GP/GAT*NTK-2 have two nonlinearities $\sigma_{2}$ was the LeakyRelu as above and $\sigma_{1}$ is the Error-function\footnote{\url{https://en.wikipedia.org/wiki/Error_function}}. We can see that adding two nonlinearities can improve the accuracy by 1\%-2\%. GAT*GP and GAT*NTK in general performed very bad for Regression tasks with a negative $R^{2}$-Squared Score.

\subsection*{Evaluation}
Due to the large memory and time requirements for GAT*GP and GAT*NTK we ``sparsified'' all graphs denoted with \# by removing 90\% of their edges  using a technique called Effective Resistance \citep{spielmanS11}. For details, see \ref{ef}).
The experiments clearly show that the NTK and GP are competitive with the Neural Net Counterpart even when the Neural Network is not a Vanilla Neural Net trained using the MSE Loss with SGD and extensions like Dropout turned off. In the case of FCNs it was shown that their NTK counterpart had a significant performance gap of between 6\% - 10\% when compared to their vanilla counterparts \citep{arora2019exact}, let alone having things like Dropout, Batch Normalization and so on.  Batch Normalization can be incorporated into the GP/NTK framework \cite{yang2}. Most models seem to be performing very close within each other performance without any one standing out.
The GNTK appears to be a strong choice, slightly outperforming the GNNGP in Regression.
In all cases a GP/NTK is able to achieve matching or even better performance than their Neural Net Counterparts.
The GAT Variants all seem to do very poorly for Regression tasks, with even negative $R^{2}$-Scores for the GP \& NTK counterparts. Overall using the Kernels and GP is preferable because it does not require tuning hyperparameters and the GNN are very sensitive for hyperparameters (e.g. number of hidden nodes, see \ref{tab:overf2}).
\begin{figure}[!htb]
\hspace{-0.5cm}\begin{minipage}[c]{0.5\textwidth}
\centering
\begin{tabular}{|c|c|c|c|c|} \hline 
         &  Cora  &  Citeseer &  Pubmed &  Wiki   \\ \hline 
         FCN&  0.61&  0.59&  0.73&  0.72\\ \hline 
         GNN     &  0.81&  0.71&  0.79& 0.66 \\ \hline 
         Skip-GNN   &  0.81&  0.71&  0.79&  0.75\\ \hline 
         GAT&  0.82&  0.71  &  0.77&  0.66\\ \hline 
         GAT*    &  0.82&  0.70&  0.77&  0.65\\ \hline 
         NNGP    &  0.60&  0.62&  0.73&  \textbf{0.81}\\ \hline 
         NTK     &  0.58&  0.62&  0.72&  \textbf{0.81}\\ \hline 
         GNNGP&  \textbf{0.83}&  0.71&  \textbf{0.80}&  0.78\\ \hline 
         GNTK    &  \textbf{0.83}&  \textbf{0.72}&  0.79&  0.79\\ \hline 
         SGNNGP&  \textbf{0.83}&  0.71&  \textbf{0.80}&  0.78\\ \hline 
         SGNTK    &  0.80&  \textbf{0.72}&  0.79&  0.78\\ \hline
        GAT*GP&  0.79&  0.71& 0.73\# & 0.78\\ \hline 
         GAT*NTK    &  0.79&  0.71& 0.73\# &  0.77\\ \hline
    \end{tabular}
    \captionof{table}{Classification}
    \label{tab:regression}
\end{minipage}
\begin{minipage}[c]{0.5\textwidth}
\hspace{0.5cm}\begin{tabular}{|c|c|c|c|} \hline 
         &  Chameleon  &  Squirrel &  Crocodile   \\ \hline 
         FCN&    0.52&  0.39&  0.75\\ \hline 
         GNN      &  0.48&  0.35& 0.64\\ \hline 
         Skip-GNN     &  0.38&  0.31&  0.58\\ \hline 
         GAT  &  0.43&  0.30&  0.66\\ \hline 
         GAT*    &  0.54&  0.30&  0.65\\ \hline 
         NNGP     &  0.63&  0.45&  0.79\\ \hline 
         NTK       &  0.67&  0.48&  \textbf{0.80}\\ \hline 
         GNNGP&    0.64&  0.48&  0.78\\ \hline 
         GNTK      &  \textbf{0.68}&  \textbf{0.51}&  0.79\\ \hline 
         SGNNGP&    0.59&  0.43&  0.64\\ \hline 
         SGNTK      &  0.57&  0.46&  0.55\\ \hline
        GAT*GP&    -12.14& -8.6\#& 16.94\#\\ \hline 
         GAT*NTK      &  -8.75& -8.2\#&  -17.69\#\\ \hline
    \end{tabular}
    \captionof{table}{Regression}
    \label{tab:classification}
\end{minipage}
\end{figure}

\begin{table}[!htb]
    \centering
    \begin{tabular}{|c|c|c|c|} \hline 
         &  Cora  &  Citeseer  &  Wiki  \\ \hline 
          GAT*GP-LINEAR&  0.79&  0.72&  0.74\\ \hline 
         GAT*NTK-LINEAR    &  0.79&   0.72& 0.74\\ \hline
          GAT*GP&  0.79&  0.71&  0.78\\ \hline 
         GAT*NTK    &  0.79&   0.71& 0.77\\ \hline
        GAT*GP-2&  0.79&  0.71&  0.78\\ \hline 
         GAT*NTK-2    &  0.8&   0.71& 0.80\\ \hline
    \end{tabular}
    \caption{Different GAT models with different nonlinearities.}
    \label{tab:gateval}
\end{table}

\subsection{Code \& Environment}
All Experiments were conducted on an AMD Ryzen 7 3800X 8-Core Processor with hyperthreading with 32GB Ram. Experiments with † were conducted on an Intel(R) Xeon(R) Gold 6148 CPU @ 2.40GHz with 20 Cores without hyperthreading with 362GB of Ram. All Experiments were conducted using the CPU. The measurements for GAT*GP and GAT*NTK Pubmed, Squirrel and Crocodile are the datasets with 90\% of edges removed via Effective Resistance.
The  GAT*GP and GAT*NTK code is highly optimized, making use of the Python Intel MKL Library which has an implementation of parallel Sparse Matrix multiplication whereas Pytorch and Scipy perform Sparse Matrix operations on one core only. 
In depth explanation and code is available at \url{https://github.com/yCobanoglu/infinite-width-gnns}.

\begin{table}[!htb]
\begin{tabular}{|l|l|l|l|l|l|}
\hline
                                                                              & Cora/Citeseer/Wiki                                                              & Pubmed                                                    & Chameleon                                                & Squirrel                                                 & Crocodile                                                \\ \hline
Neural Nets                                                                   & \textless{}1min                                                                 & \textless{}2min                                           & \textless{}2min                                          & \textless{}3min                                          & \textless{}3min                                          \\ \hline
\begin{tabular}[c]{@{}l@{}}NNGP,NTK,\\ GNNGP, GNTK\\ SGNN, SGNTK\end{tabular} & \textless{}2s                                                                   & 2min                                                      & 1.15min                                                  & \textless{}3min                                          & \textless{}3min                                          \\ \hline
\begin{tabular}[c]{@{}l@{}}GAT*GP,\\  GAT*NTK\end{tabular}                    & \begin{tabular}[c]{@{}l@{}}10min, 15min, 1h\\  \textless{}16GB Ram\end{tabular} & \begin{tabular}[c]{@{}l@{}}25h, \\ 60GB Ram†\end{tabular} & \begin{tabular}[c]{@{}l@{}}4h, \\ 32GB Ram†\end{tabular} & \begin{tabular}[c]{@{}l@{}}1h, \\ 32GB Ram†\end{tabular} & \begin{tabular}[c]{@{}l@{}}1h, \\ 32GB Ram†\end{tabular} \\ \hline
\end{tabular}
\end{table}

\subsection{Effective Resistance Spectral Graph Sparsification}\label{ef}
Recall that the GAT*GP and GAT*NTK require calculating the expression:
\begin{gather*}
   \psi(\Lambda^{l}) := \Exp{\sigma_{1}(v)\sigma_{1}(v^{T})} 
    \text{ with  $v \sim \text{GP}\left(0, \sigma_{c}^{2} \gamma_{A}(\Lambda^{l-1}) \right)
      \text{and } \gamma_{A}(\Omega) := J_{A}
    \begin{psmallmatrix}
    \Omega & \Omega \\
    \Omega & \Omega \\
    \end{psmallmatrix}   J_{A}^{T} $}
\end{gather*}
 and $J_{A} = \diag(\vect(A))  \concat \left( (\mathbf{1}_{n} \otimes I_{n}) , (I_{n} \otimes \mathbf{1}_{n}) \right)$ which comes from \ref{thmgattt}. \\
The expression  $ \gamma_{A}(\Omega):= J_{A}
    \begin{psmallmatrix}
    \Lambda^{l} & \Lambda^{l} \\
    \Lambda^{l} & \Lambda^{l}
    \end{psmallmatrix}
    J_{A}^{T}$  will be $\inR{n^{2} \times n^{2}}$ where $A \inR{n \times n}$.  For Pubmed this will be a sparse $400,000 \times 400,000$ matrix. Calculating $\psi(\Lambda^{l}) := \Exp{\sigma_{1}(v)\sigma_{1}(v^{T})} $ will again need multiple operations on the same matrix.
Note that in the case of $\sigma_{1}$ being the Identitiy Function $\psi(\Lambda^{l}) = \gamma_{A}(\Omega)$. To still incorporate  $\sigma_{1}$ in an efficient manner recall the definition of a two layer GAT Model:
\begin{gather*}
    G^{1} = W^{1}X \\
    F^{2} = \sigma_{2}(G^{1} \sigma_{1}(A \odot C)) \text{ with } C_{ij}= c_{1}^{T}G^{1}_{\dot i} + c_{2}^{T}G^{1}_{\dot j}
\end{gather*}
with A being the 0-1 adjacency and with added self loops and $\sigma_{1}(0) = 0$ (.e.g. Relu, Sigmoid), results in $ F^{2} = \sigma_{2}(G^{1} ( A \odot \sigma_{1}(C)))$.
The corresponding Kernel would be
\begin{gather*}
   \psi(\Lambda^{l}) := \gamma_{A} \biggl( \Exp{\sigma_{1}(v)\sigma_{1}(v^{T})} \biggr)
    \text{ with  $v \sim \text{GP}\left(0, \sigma_{c}^{2} \Lambda^{l-1} \right)$}
\end{gather*}
This way we $\Exp{\sigma_{1}(v)\sigma_{1}(v^{T})}$ will now be much more efficient as the resulting matrix will be $\inR{n \times n}$ instead $\inR{n^{2} \times n^{2}}$.
Calculating $\gamma_{A}(\Omega)$ for large graphs/matrices is still difficult and that is why we used Spectral Graph Sparsification method, namely Effective Resistance on PubMed, Squirrel and Crocodile for GAT*GP and GAT*NTK. EffectiveResistence \citep{spielmanS11} 
was successfully applied for GNNs and GATs and one could show that it is possible to remove as much as 90\% of the edges of a graph without sacrifice much performance and improving memory and runtime \citep{fast_graph}. Effective Resistance takes an unweighted 0-1 adjacency matrix and returns a sparsified (i.e. graph with fewer edges) weighted adjacency matrix such that the Laplacians of these adjacency matrices are close. In our use case, we apply Effective Resistance to the 0-1 adjacency matrix and then replace each edge weight with the value one, so it is again a 0-1 adjacency matrix.
This way we could improve speed especially for Pubmed considering it took around 30 hours and 60GB Ram despite the fact that 90\% of the edges had been removed. The following tables demonstrate Effective Resistance on GNNGP and GNTK Models. We run experiments with 50\% and 90 \% of the edges removed. There is a performance penalty for removing edges, but the difference in Cora, Citeseer, and Pubmed between removing 90\% or 50\% is negligible.

\begin{table}[htb!]
    \centering
    \begin{tabular}{|c|c|c|c|c|c|c|c|} \hline 
    &  Cora  &  Citeseer &  Pubmed &  Wiki & Chameleon & Squirrel & Crocodile   \\ \hline 
    GNNGP &  0.83&  0.71&  0.80&  0.78&  0.64&  0.48& 0.78\\ \hline 
         GNTK &  0.83&  0.72&  0.79&  0.79&  0.68&  0.51& 0.79\\ \hline 
         GNNGP 0.5&  0.77&  0.66&  0.75&  0.70&  0.52&  0.4& 0.45\\ \hline 
         GNTK 0.5&  0.76&  0.67&  0.75&  0.71&  0.52&  0.46& 0.15\\ \hline 
         GNNGP 0.9&  0.77&  0.69&  0.77&  0.40&  0.45&  0.28& 0.42\\ \hline 
         GNTK 0.9&  0.78&  0.69&  0.77&  0.40&  0.45&  0.27& 0.38\\ \hline 
    \end{tabular}
    \caption{Performane with Graph Spectral Sparsification known as Effective Resistance}
    \label{tab:effective}
\end{table}

\subsection{Role of bias\ $\sigma_{b}^{2}$  for Regression}\label{roleofb}
Regression experiments with GPs and Kernels we set $\sigma_{b}^{2}=0.1$ and for the GNNs for Regression we added batch normalization, so we could reproduce the results from \cite{niu2023}.
We are going to explore the performance for 1) Having no bias parameter for the GNNs 2) Setting $\sigma_{b}^{2}=0$ for the GP and NTK with the Sigmoid nonlinearity 3) Setting $\sigma_{b}^{2}=0$ for the GP and NTK with the Relu nonlinearity 4) $\sigma_{b}^{2}=0$ for GNNGP and GNTK with the Relu nonlinearity.
For Classification, trying different  $\sigma_{b}^{2}$ or having the models without bias term did not have the same effect as for Regression (i.e. no large decrease in performance).
For Regression, the bias term for the Neural Nets as well as the $\sigma_{b}^{2}$ seems to play a crucial role and by leaving it out the performance worsens in practically every example (including negative $R^{2}$-Scores).

\begin{table}[!htb]
    \centering
    \begin{tabular}{|c|c|c|c|} \hline 
              &  Chameleon&  Squirrel&  Crocodile\\ \hline
        FCN (No Bias) &  0.54&  0.38&  0.75\\ \hline
        GNN (No Bias) &  -0.13&  -1.51&  0.17\\ \hline
        NNGP (Sigmoid) ($\sigma_{b}=0$) &  0.46&  0.40&  0.75\\ \hline
        NTK (Sigmoid) ($\sigma_{b}=0$) &  0.46&   0.40&  0.75\\ \hline
        NNGP (Relu)($\sigma_{b}=0$) &  -1.59&  -7.917&  0.726\\ \hline
        NTK (Relu) ($\sigma_{b}=0$) &  -1.481&   -7.314&  0.725\\ \hline
        GNNGP ($\sigma_{b}=0$) &  0.175&  -0.817&  0.299\\ \hline
        GNN ($\sigma_{b}=0$) &  0.202&  -0.757&  0.34\\ \hline
    \end{tabular}
    \caption{Role of bias\ $\sigma_{b}^{2}$  for Regression}
    \label{tab:roleofb}
\end{table} \noindent

\section{Discussion \& Conclusion}
Developing closed form expression for different GNN architectures is an important step in paving the way to generalize  results from NTK Theory for GNNs. As discussed in \ref{th:gntk}, the spectrum of the NTK plays a crucial role and needs to be investigated further. The GAT models are particularly interesting because the adjacency matrices are learned, adding to the complexity. Nevertheless, NTK Theory has to be considered with care as distinctions between GNNs and FCNs arises. GNNs appear more susceptible to overfitting when overparametrized and employing optimization algorithms like Adam, as opposed to GD/SGD, becomes essential for achieving good results (see \ref{tab:overf1}). \\
\\
We derived new Kernel and Gaussian Processes by using their connection to infinite Width Neural Networks.
We could successfully show that their performance is competitive with recent Graph Neural Networks architectures, makes them a valid alternative for Machine Learning tasks on graph structured data. Gaussian Processes and Kernels are easy to implement and allow for uncertainty estimation. 
Unlike Graph Neural Networks, which are highly sensitive to various hyperparameter settings, GP and NTK counterparts practically require no hyperparameters. This makes them valuable for establishing robust performance benchmarks across different tasks.
We hope for this work to serve as a source of inspiration for the development of innovative Neural Network architectures and Kernel designs.

\appendix

\section{Appendix}
\subsection{Fully-connected Deep Nets}\label{a1}
We copied the definitions of the Graph Neural Network from the main section.

\deffcn*

\subsubsection*{Derivatives}
The derivatives will be necessary for the closed form of the NTK. \\
The derivative of $F(\theta, X)$ with respect $W^{l}$ and $b^{l}$ can be derived using the chain rule.
\begin{align*}
\p{\vect(F(\theta, X))}{\vect(W^{l})} = \p{\vect(F^{L})}{\vect(G^{L-1})} \p{\vect(G^{L-1})}{\vect(F^{L-1)}} \cdots  \p{\vect(F^{l+1})}{\vect(G^{l})}
\p{\vect(G^{l})}{\vect(F^{l})} \p{\vect(F^{l})}{\vect(W^{l})}
\end{align*}
Using 
\begin{align*}
\p{\vect(F^{h})}{\vect(G^{h-1})} &= \frac{\sigma_{w}}{\sqrt{d_{h-1}}}(I_{n} \otimes  W^{h})  \\
\Sigma^{h} := \p{\vect(G^{h})}{\vect(F^{h})} &=  \diag(\vect(\dot{\sigma}(F^{h})) \\
\p{\vect(F^{h})}{\vect(W^{h})} &=   \frac{\sigma_{w}}{\sqrt{d_{h-1}}}(G^{h-1^{T}} \otimes I_{d_{h}}) \\
\p{\vect(F^{h})}{b^{h}} &=  \sigma_{b} (\mathbf{1}_{n} \otimes I_{d_{h}})
\end{align*}
Combining the derivatives with the chain rule the final expression is:
\begin{align*}
\p{\vect(F(X))}{\vect(W^{l})} &=  \frac{\sigma_{w}}{\sqrt{d_{h-1}}}(I_{n} \otimes W^{L}) \Sigma^{L-1}  \cdots \frac{\sigma_{w}}{\sqrt{d_{h-1}}}(I_{n} \otimes W^{l})\Sigma^{l} \frac{\sigma_{w}}{\sqrt{d_{h-1}}}( G^{l-1^{T}} \otimes I_{d_{h}} ) \\
\intertext{and similarly}
\p{\vect(F(X))}{b^{l}} &=  \frac{\sigma_{w}}{\sqrt{d_{h-1}}}(I_{n} \otimes W^{L}) \Sigma^{L-1}  \cdots \frac{\sigma_{w}}{\sqrt{d_{h-1}}}(I_{n} \otimes W^{l})\Sigma^{l} \sigma_{b} (\mathbf{1}_{n} \otimes I_{d_{h}})
\end{align*}

\subsubsection{Gaussian Process (Theorem~\ref{th:nngp})}  \label{th:nngpproof}
\thnngp*
\begin{proof}
The matrix $\mathrm{Cov}(\text{Y})_{IJ} \in \mathbb{R}^{d_{1} \times d_{1}}$ will correspond to the matrix $\mathrm{Cov}(\text{Y}_{\cdot i}, \text{Y}_{\cdot j})$. And similarly $\mathbb{E}(Y)_{I} \in \mathbb{R}^{d_{1}}$  will be $\mathbb{E}(Y_{ \cdot i})$. We will first derive the $\mathbb{E}(x)$ and $\mathrm{Cov}(x, x')$ with respect to a single, respectively two data samples and then derive the Expectation and Covariance for the whole dataset.
Base Case for single datasample, respectively two data samples:
\begin{align*}
\mathbb{E}(\text{Y})_{I} &= \frac{\sigma_{w}}{\sqrt{d_{0}}}W^{1}X_{\cdot i} + \sigma_{b} b^{1} = 0 \\
\mathrm{Cov}(\text{Y})_{IJ} &= \mathbb{E} \biggl[ \left( \frac{\sigma_{w}}{\sqrt{d_{0}}}W^{1}X_{\cdot i} + \sigma_{b} b^{1} \right) \left(\frac{\sigma_{w}}{\sqrt{d_{0}}}W^{1}X_{\cdot j} + \sigma_{b}b^{1} \right)^{T} \biggl] \\
&= \frac{\sigma_{w}^{2}}{d_{0}} \mathbb{E} \biggl[ W^{1} X_{\cdot i} X_{\cdot j}^{T} W^{1^{T}} \biggr] + \sigma^{2}_{b} \mathbb{E} \biggl[b^{1} b^{1^{T}} \biggr] \\
&= \frac{\sigma_{w}^{2}}{d_{0}} I_{d_{1}}   \left< X_{\cdot i}, X_{\cdot j} \right> + I_{d_{1}}\sigma^{2}_{b} \\
&=I_{d_{1}} \left(  \frac{\sigma_{w}^{2}}{d_{0}} \left< X_{\cdot i}, X_{\cdot j} \right> + \sigma^{2}_{b} \right)
\intertext{Base case for the complete dataset:}
\mathbb{E}(\vect(Y^{1})) &= \frac{\sigma_{w}}{\sqrt{d_{0}}} \vect(W^{1}X) + \sigma_{b} \vect(B^{1}) = 0 \\
\mathrm{Cov}(\vect(Y^{1})) &= \mathbb{E} \biggl[ \biggl( \frac{\sigma_{w}}{\sqrt{d_{0}}}\vect(W^{1}X) + \sigma_{b} \vect(B^{1} \biggr)   \biggl( \frac{\sigma_{w}}{\sqrt{d_{0}}}\vect(W^{1}X) + \sigma_{b} \vect(B^{1}))^{T} \biggr) \biggr] \\
&= \frac{\sigma_{w}^{2}}{d_{0}} \mathbb{E} \biggl[ \vect(W^{1}X) \vect(W^{1}X)^{T} \biggr] + \sigma^{2}_{b}\mathbb{E} \biggl[ \vect(B^{1}) \vect(B^{1})^{T} \biggr] \\
&= \frac{\sigma_{w}^{2}}{d_{0}}  \mathbb{E} \biggl[ (I_{n} \otimes W^{1}) \vect(X) \vect(X)^{T}(I_{n} \otimes W^{1^{T}}) \biggr] + \sigma^{2}_{b} (\mathds{1}_{n} \otimes I_{d_{1}}) \\
&= \biggl( \frac{\sigma^{2}_{w}}{d_{0}} X^{T}X +  \sigma^{2}_{b} \biggr) \otimes I_{d_{1}}
\intertext{The induction step for one, respectively two data samples.}
\mathbb{E}(\text{Y}^{l+1})_{I} &= 0 \\
\mathrm{Cov}(\text{Y}^{l+1})_{IJ} &= \mathbb{E}
\bigg[ \biggl( \frac{\sigma_{w}}{\sqrt{d_{l}}}W^{l+1} G^{l}_{\cdot i}) + \sigma_{b} b^{l+1} \biggr) \biggl( \frac{\sigma_{w}}{\sqrt{d_{l}}}W^{l+1} \sigma(Y^{l}_{\cdot i}) + \sigma_{b} b^{l+1}) \biggr)^{T} \biggr] \\
&= \frac{\sigma_{w}^{2}}{d_{l}} \mathbb{E} \biggr[ W^{l+1} \sigma(Y^{l}_{\cdot i})  \sigma(Y^{l}_{\cdot j})^{T} W^{l+1^{T}} \biggr] + \sigma^{2}_{b}I_{d_{l}} \\
 \frac{\sigma_{w}^{2}}{d_{l}}\mathbb{E}(W^{l+1} &\sigma(Y^{l}_{\cdot i})  \sigma(Y^{l}_{\cdot j} W^{l+1^{T}})^{T})_{uv} = \frac{\sigma^{2}_{w}}{d_{l}} \tr(\sigma(Y^{l}_{\cdot i}) \sigma(Y^{l}_{\cdot j})^{T}) \delta_{uv} \tag{\theequation.1}\label{myeq1} \\
 & \delta_{uv} \frac{\sigma^{2}_{w}}{d_{l}} \sum_{n=1}^{d_{l}}  \sigma(Y^{l}_{nj}) \sigma(Y^{l}_{ni}) \overset{\overset{P}{d_{l} \rightarrow \infty}}{\longrightarrow}   \sigma^{2}_{w}\delta_{uv} \underset{s,t \sim N(0, \biggl( \begin{smallmatrix} 
\Lambda^{l-1}_{ii} & \Lambda^{l-1}_{ij} \\
\Lambda^{l-1}_{ji} & \Lambda^{l-1}_{jj} 
\end{smallmatrix}  \biggr)) }{\mathbb{E}} \bigl[\sigma(s)\sigma(t) \bigr]\tag{\theequation.2}\label{myeq2}
\end{align*}
In ~\eqref{myeq1} we have used the quadratic form for random variables.
In ~\eqref{myeq2} we have used the fact that $Y^{l}_{ni}$ and $Y^{l}_{mi}$ with $n \neq m$ are i.i.d. random variables. That also becomes apparent by observing that $\mathrm{Cov}(Y^{l}_{\cdot i}, Y^{l}_{ \cdot j}$) is diagonal.
And finally for the whole dataset:
\begin{align*}
\mathbb{E}(\vect(Y^{l+1})) &= \frac{\sigma_{w}}{\sqrt{d_{l}}} \vect(W^{l+1}G^{l}) + \sigma_{b} \vect(B^{l+1}) = 0 \\
\mathrm{Cov}(\vect(Y^{l+1})) &= \mathbb{E} \biggr[ \biggl( \frac{\sigma_{w}}{\sqrt{d_{l}}}\vect(W^{l+1}G^{l}) + \sigma_{b} \vect(B^{l+1} \biggl) \biggr( \frac{\sigma_{w}}{\sqrt{d_{l}}}\vect(W^{l+1}G^{l}) + \sigma_{b} \vect(B^{l+1})^{T} \biggl) \biggr] \\
&= \frac{\sigma_{w}^{2}}{d_{l}} \mathbb{E} \bigl[ \vect(W^{l+1}G^{l}) \vect(G^{l^{T}}W^{l+1^{T}} \bigl] + \sigma^{2}_{b} (\mathds{1}_{n} \otimes I_{d_{l}}) \\
&= \frac{\sigma_{w}^{2}}{d_{l}}  \mathbb{E} \bigl[ (I_{n} \otimes W^{l+1}) \vect(G^{l}) \vect(G^{l})^{T}(I_{n} \otimes W^{l+1^{T}} ) \bigr] + \sigma^{2}_{b} (\mathds{1}_{n} \otimes I_{d_{l+1}})\tag{\theequation.2}\label{nngp-eq2} \\
&\frac{\sigma_{w}^{2}}{d_{l}}  \mathbb{E} \bigl[ (I_{n} \otimes W^{l+1}) \vect(G^{l}) \vect(G^{l})^{T}(I_{n} \otimes W^{l+1^{T}}) \bigr] + \sigma^{2}_{b} (\mathds{1}_{n} \otimes I_{d_{l}}) \\
&=  \left( \sigma_{w}^{2} \begin{pmatrix} 
    \mathbb{E} \bigl[ \sigma(u_{1}) \sigma(u_{1}) \bigr] & \mathbb{E} \bigl[ \sigma(u_{1}) \sigma(u_{2}) \bigr] \dots  & \mathbb{E} \bigl[ \sigma(u_{1}) \sigma(u_{n}) \bigr]\\
    \vdots & \ddots & \vdots\\
    \mathbb{E} \bigl[ \sigma(u_{n}) \sigma(u_{1}) \bigr] & \dots  & \mathbb{E} \bigl[ \sigma(u_{n}) \sigma(u_{n}) \bigr]
    \end{pmatrix} + \sigma^{2}_{b} \right) \otimes I_{d_{l+1}} \\[14pt]
    &\text{ with } u \sim N(0, \Lambda^{L-1})
\end{align*}

In ~\eqref{nngp-eq2} we have used the property of the Kronecker Product, namely $\vect(AXB) = (B^{T} \otimes A)\vect(X)$.
\end{proof}

\subsubsection{Neural Tangent Kernel (Lemma \ref{ntk:lemma} and Theorem~\ref{th:nngp})}\label{ntk1}
The following lemma describes how the output of a Fully-Connected Neural Net trained with Gradient Descent with an infinitesimally small learning rate can be interpreted as Kernel Regression with the Neural Tangent Kernel.
\begin{lemma}[From \citep{arora2019exact}]\label{ntk:lemma}
Consider minimizing the squared loss $\ell(\theta)$ by Gradient Descent with infinitesimally small learning rate\footnote{\url{https://en.wikipedia.org/wiki/Euler_method}}: 
$\frac{d\theta(t)}{dt} = - \nabla \ell(\theta(t))$. Then the output of the Network $F(\theta, X)$ evolves as:
\begin{align*}
    \frac{d u(t)}{dt} &= - H(t)\cdot \vect(u(t) - Y)  \\
    \text{with } u(t) &= F(\theta(t), X) \\
    \text{and } H(t) &=\p{\vect(F(\theta(t), X)}{\theta}  \left( \p{\vect(F(\theta(t), X)}{\theta}  \right)^{T}
\end{align*}
\begin{proof}
    The parameters $\theta$ evolve according to the differential equation 
    \begin{align*}
        \frac{d\theta(t)}{dt} &= - \nabla \ell(\theta(t)) = \left(\p{\vect(F(\theta(t), X)}{\theta}\right)^{T}  \vect(F(\theta(t), X) - Y)^{T} 
        \intertext{where $t>0$ is a continous time index. Using this equation, the evolution of the Network output $F(\theta(l), X)$ can be written as} 
        \frac{dF(\theta(t), X}{dt} &=   \p{\vect(F(\theta(t), X)}{\theta} \left(\p{\vect(F(\theta(t), X)}{\theta}\right)^{T}  \vect(F(\theta(t), X) - Y)^{T} 
        \intertext{Rewriting it using $\emph{u}(t) = \vect(F(\theta(t), X))$}
        \frac{d\emph{u}(t)}{dt} &= - \emph{H}(t)\cdot \vect(\emph{u}(t) - Y) 
    \end{align*}
\end{proof}
\end{lemma}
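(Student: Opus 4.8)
The plan is to treat this as a direct chain-rule computation layered on top of the prescribed gradient-flow dynamics of the parameters. The starting point is the squared loss $\ell(\theta) = \frac{1}{2}\|\vect(F(\theta, X)) - \vect(Y)\|_2^2$, whose gradient with respect to $\theta$ factors cleanly thanks to its quadratic structure. Writing $u(t) = \vect(F(\theta(t), X))$, I would first establish that $\nabla_\theta \ell = \left(\p{\vect(F(\theta, X))}{\theta}\right)^{T} (u - \vect(Y))$. This is the only place where the form of the loss enters, and it is precisely the step that produces the residual factor $(u - \vect(Y))$ appearing in the final equation.

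First I would substitute this gradient into the prescribed dynamics $\frac{d\theta(t)}{dt} = -\nabla \ell(\theta(t))$, obtaining an ODE for $\theta$ driven by the Jacobian $\p{u}{\theta}$ and the current residual. Next, to pass from the evolution of the parameters to the evolution of the network output, I would apply the chain rule in time: $\frac{du(t)}{dt} = \p{\vect(F(\theta(t), X))}{\theta}\,\frac{d\theta(t)}{dt}$. This is the conceptual core of the argument, since it changes the object being tracked from $\theta$ to $u$, and it requires nothing beyond differentiability of $F$ in its parameters.

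Combining the two displays, the parameter-Jacobian and its transpose compose into the Gram matrix $H(t) = \p{u}{\theta}\left(\p{u}{\theta}\right)^{T}$, yielding $\frac{du(t)}{dt} = -H(t)\,(u(t) - \vect(Y))$, which is exactly the claimed kernel-regression flow with $H(t)$ the empirical NTK. I expect no genuine obstacle here: the content is one application of the chain rule together with the quadratic-loss gradient. The only points demanding care are bookkeeping ones, namely keeping the vectorization $\vect(\cdot)$ and the transpose conventions consistent so that the dimensions of $\p{u}{\theta}$, its transpose, and the residual line up, and verifying that the same Jacobian factor appearing in $\nabla_\theta \ell$ is the one that reappears when differentiating $u$ in time, since that is what makes the symmetric matrix $H(t)$ materialize.
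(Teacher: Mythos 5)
Your proposal is correct and follows essentially the same route as the paper's own proof: compute $\nabla_\theta \ell$ from the quadratic loss, substitute into the gradient-flow ODE for $\theta$, and apply the chain rule in time so the Jacobian and its transpose compose into $H(t)$. If anything, your version is slightly more careful than the paper's write-up, which drops a minus sign and has stray transposes in the intermediate displays before the correct final equation appears.
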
 \noindent
When the width of the network is allowed to go to infinity, it can be shown that the matrix $H(t)$ remains \emph{constant} during training, i.e. equal to $H(0)$. Moreover, under the random initialization of the parameters defined in \ref{fcn-ef}, the matrix $H(0)$ converges in probability to a certain deterministic Kernel Matrix $H^{*}$ which is called the \emph{Neural Tangent Kernel} $\Theta$ evaluated on the training data. If $H(t) = H^{*}$ for all $t$, then equation becomes 
\begin{align*}
\frac{d u(t)}{dt} &= - H^{*} \cdot \vect(u(t) - Y) 
\end{align*} 
which is identical to the dynamics of \emph{Kernel Regression} under Gradient Glow, for which at time $t \rightarrow \infty$ the final prediction function is 
\begin{align*}
    F^{*}(X_{test}) = H^{*}_{test, train} (H_{train, train}^{*})^{-1}y
\end{align*}
In the transductive setting $H_{train, train}^{*}$ implicitly accounts for incorporating the test data during training. 
To calculate the closed form expression for the NTK of a FCN of depth $l$ we have to simplify:
\begin{align*}
    \Theta^{l} =  \p{\vect(F^{l}(\theta, X))}{\theta} \left( \p{\vect(F^{l}(\theta, X))}{\theta} \right)^{T}
\end{align*}
Using our model definition, this results in calculating
\begin{align}
\Theta^{L} =  \sum_{h=1}^{H} \p{\vect(F^{L}(\theta, X))}{\vect(W^{h})} \left( \p{\vect(F(\theta, X)}{\vect(W^{h})} \right) ^{T}  +
\p{\vect(F(\theta, X))}{\vect(b^{h})} \left( \p{\vect(F^{L}(\theta, X)}{\vect(b^{h})} \right) ^{T}
\end{align}

\subsubsection*{Neural Tangent Kernel (Theorem~\ref{th:ntk})}\label{th:ntkproof}
\thnntk*
\begin{proof}
Proof is by induction, similar to the NNGP derivation.\\
Base Case:
\begin{align*}
    \Theta^{1} &= \Lambda{^1} = \p{\vect(F^{1}(\theta, X))}{\vect(W^{1})} \left( \p{\vect(F^{1}(\theta, X)}{\vect(W^{1})} \right) ^{T} + \p{\vect(F^{1}(\theta, X))}{\vect(b^{1})} \left( \p{\vect(F^{1}(\theta, X)}{\vect(b^{1})} \right) ^{T} \\
   &= (\frac{\sigma^{2}_{w}}{d_{0}}X^{T}X + {\sigma^{2}_{b}}) \otimes I_{d_{1}} \\
\intertext{Induction Step:}
\Theta^{l+1} &= \sum_{h=1}^{l+1} \p{\vect(F^{l+1}(\theta, X))}{\vect(W^{h})} \left( \p{\vect(F^{l+1}(\theta, X)}{\vect(W^{h})} \right)^{T} \\ 
& = \underbrace{\p{\vect(F^{l+1}(\theta, X))}{\vect(W^{l+1})} \left( \p{\vect(F^{l+1}(\theta, X))} {\vect(W^{l+1})} \right) ^{T} + \p{\vect(F^{l+1}(\theta, X))}{\vect(b^{l+1})} \left( \p{\vect(F^{l+1}(\theta, X))} {\vect(b^{l+1})} \right) ^{T}}_{B} \\
& + \underbrace{\sum_{h=1}^{l}\p{\vect(F^{l+1}(\theta, X))}{\vect(W^{h})} \left( \p{\vect(F^{l+1}(\theta, X))} {\vect(W^{h})} \right) ^{T} + \p{\vect(F^{l+1}(\theta, X))}{\vect(b^{h})} \left( \p{\vect(F^{l+1}(\theta, X))} {\vect(b^{h})} \right) ^{T}}_{\Gamma} \\
    B &= \frac{\sigma^{2}_{w}}{d_{l}} (G^{l^{T}}G \otimes I_{d_{l+1}}) + \sigma^{2}_{b}(\mathds{1}_{n} \otimes I_{d_{l+1}}) = (\frac{\sigma^{2}_{w}}{d_{l}}G^{l^T}G^{l} + \sigma^{2}_{b}) \otimes I_{d_{l+1}} \\
    &(\frac{\sigma^{2}_{w}}{d_{l}}G^{l^T}G^{l})_{ij} = \frac{\sigma^{2}_{w}}{d_{l}} \sum_{m}^{d_{l}}  G^{l}_{im} G^{l}_{jm} \overset{\overset{P}{d_{l} \rightarrow \infty}}{\longrightarrow}  \sigma^{2}_{w} \underset{u,v \sim N(0, \biggl( \begin{smallmatrix} 
\Lambda^{l-1}_{ii} & \Lambda^{l-1}_{ij} \\
\Lambda^{l-1}_{ji} & \Lambda^{l-1}_{jj} 
\end{smallmatrix}  \biggr)) }{\mathbb{E}} \bigl[ \sigma(u)\sigma(v) \bigr]
\end{align*}
Where we used the fact that the output of the previous layer goes to infinity (so $d_{l} \rightarrow \infty$)
\begin{align*}
    \implies B = \Lambda^{L} \otimes I_{d_{l+1}}
\end{align*}
\begin{align*}
\Gamma &= 
\sum_{h=1}^{l} \left( \p{\vect(F^{l+1}(\theta, X))}{\vect(G^{l})} \p{\vect(G^{l})}{\vect(F^{l})} \p{\vect(F^{l}(\theta, X))}{\vect(W^{l})}   \right) \left( \p{\vect(F^{l+1}(\theta, X))}{\vect(G^{l})} \p{\vect(G^{l})}{\vect(F^{l})}  \p{\vect(F^{l}(\theta, X))}{\vect(W^{l})}  \right)^{T}
 \\
&+ 
\left( \p{\vect(F^{l+1}(\theta, X))}{\vect(G^{l})} \p{\vect(G^{l})}{\vect(F^{l})} \p{\vect(F^{l}(\theta, X))}{\vect(b^{l})}   \right) \left( \p{\vect(F^{l+1}(\theta, X))}{\vect(G^{l})} \p{\vect(G^{l})}{\vect(F^{l})}  \p{\vect(F^{l}(\theta, X))}{\vect(b^{l})}  \right)^{T}
 \\
&= \p{\vect(F^{l+1}(\theta, X))}{\vect(G^{l})} \p{\vect(G^{l})}{\vect(F^{l})} (\Theta^{l} \otimes I_{d_{l}})  \left( \p{\vect(F^{l+1}(\theta, X))}{\vect(G^{l})} \p{\vect(G^{l})}{\vect(F^{l})}   \right)^{T}\\
&= \frac{\sigma^{2_{w}}}{d_{l}}(I_{n} \otimes W^{l+1}) \Sigma^{l} (\Theta^{l} \otimes I_{d}) \Sigma^{l}  (I_{n} \otimes W^{l+1^{T}}) \\\\
T_{IJ} &=  \left(\frac{\sigma_{w}^{2}}{d_{l}} (I_{n} \otimes W^{l+1}) \Sigma^{l} (\Theta^{l} \otimes I_{d}) \Sigma^{l} (I_{n} \otimes W^{l+1^{T}}) \right)_{IJ} = \frac{\sigma_{w}^{2}}{d_{l}} W^{l+1} \Sigma^{l}_{II} \Theta^{l}_{ij} \Sigma^{l}_{JJ} W^{l+1^{T}} \\\\
&= \Theta^{l}_{ij} \frac{\sigma_{w}^{2}}{d_{l}}  W^{l+1} \Sigma^{l}_{II} \Sigma^{l}_{JJ} W^{l+1^{T}} \\
(\frac{\sigma_{w}^{2}}{d_{l}} W&^{l+1} \Sigma^{l}_{II} \Sigma^{l}_{JJ} W^{l+1^{T}})_{uv} = \frac{\sigma_{w}^{2}}{d_{l}} \sum_{st}^{d_{l}}\Sigma^{l}_{II_{ss}}\Sigma^{l}_{JJ_{tt}} W_{us} W_{vt} \overset{\overset{P}{d_{l} \rightarrow \infty}}{\longrightarrow}  \delta_{uv} \sigma^{2}_{w} \underset{s,t \sim N(0, \biggl( \begin{smallmatrix} 
\Lambda^{l-1}_{ii} & \Lambda^{l-1}_{ij} \\
\Lambda^{l-1}_{ji} & \Lambda^{l-1}_{jj} \\
\end{smallmatrix}  \biggr)) }{\mathbb{E}} \bigl[\dot{\sigma}(s) \dot{\sigma}(t) \bigr]
\end{align*}
Convergence in probability follows from the Law of Large Numbers. The righthandside follows from the definition of $\Sigma^{l}$ and the fact that the output of layer $l-1$ in its infinite width is a Gaussian Process.
\begin{align*}
    \implies T &= (\Theta \odot \dot{\Lambda}^{L}) \otimes I_{d_{l+1}} \\
     \implies \Theta^{l+1}  &= B + \Gamma  =  \Lambda^{l} +  \left( \dot{\Lambda}^{l}  \odot \Theta^{l} \right)  
\end{align*}
\end{proof}

\subsection{NTK \& GP for Graph Neural Network}\label{gnn123}
We copied the definitions of the Graph Neural Network from the main section.

\defgnn*

\subsubsection*{Derivatives} \label{gnnder}
The derivative of $F(\theta, X)$ with respect to $W^{l}$ is
\begin{align*}
\p{\vect(F(\theta, X))}{\vect(W^{l})} = \p{\vect(F^{L})}{\vect(G^{L-1})} \p{\vect(G^{L-1})}{\vect(F^{L-1)}} \cdots \p{\vect(F^{l+1})}{\vect(G^{l})}
\p{\vect(G^{l})}{\vect(F^{l})} \p{\vect(F^{l})}{\vect(W^{l})}  
\end{align*}
Using
\begin{align*}
\p{\vect(F^{h})}{\vect(G^{h-1})} &=  \frac{\sigma_{w}}{\sqrt{d_{h-1}}} (A \otimes W^{h}) \\
\Sigma^{h} := \p{\vect(G^{h})}{\vect(F^{h})} &= \diag(\vect(\dot{\sigma}(F^{h})) \\
\p{\vect(F^{h})}{\vect(W^{h})} &= \frac{\sigma_{w}}{\sqrt{d_{h-1}}}( A G^{h-1^{T}} \otimes I_{d_{h}}) \\
\p{\vect(F^{h})}{b^{h}} &= \sigma_{b} (A \mathbf{1}_{n} \otimes I_{d_{h}})
\end{align*}

\subsubsection{Graph Neural Network Gaussian Process (Theorem \ref{th:gnngp})} \label{th:gnngp1}
\thngnngp*
\begin{proof}
Proof is by induction. We define $\text{Y}^{L}= F^{L}(X)$.
\begin{align*}
\mathbb{E}(\vect(Y^{1})) &= \frac{\sigma_{w}}{\sqrt{d_{0}}} \vect \left( \biggl( \frac{\sigma_{w}}{\sqrt{d_{0}}}W^{1}X + \sigma_{b}B^{1} \biggr) A^{T} \right) = 0 \\
\mathrm{Cov}(\vect(Y^{1})) &= \mathrm{Cov} \bigl[ (A \otimes I_{d_{1}})\vect(\frac{\sigma_{w}}{\sqrt{d_{0}}}W^{1}X + \sigma_{b}B^{1}) \bigr] \\
&= (A \otimes I_{d_{1}})  \mathbb{E} \bigl[ \vect(\frac{\sigma_{w}}{\sqrt{d_{0}}}W^{1}XA^{T} + \sigma_{b}B^{1}) \bigr] (A^{T} \otimes I_{d_{1}})  \\
&= (A \Lambda^{1} A^{T}) \otimes I_{d_{1}}
\end{align*}
The Base Case is reduced to the Base Case of NNGP using a property of the Kronecker Product, namely $(A \otimes B)(C \otimes D) = (AC \otimes BD)$.\\
Induction Step is again just an application of the Kronecker Property and follows the NNGP proof in \ref{th:nngp}.
\begin{align*}
\mathbb{E}(\vect(Y^{l+1})) &=  0 \\
\mathrm{Cov}(\vect(Y^{l+1})) &= (A \Lambda^{l+1} A^{T}) \otimes I_{d_{l+1}}
\end{align*}
\end{proof}

\subsubsection{Graph Neural Tangent Kernel (Theorem~\ref{th:gntk})} \label{th:gntkproof}
\thngntk*
\begin{proof}
Proof is by induction, similar to the NNGP derivation.\\
Base Case:
\begin{align*}
   & \Theta^{1} = \Lambda{^1} = \p{\vect(F^{1}(\theta, X))}{\vect(W^{1})} \left( \p{\vect(F^{1}(\theta, X)}{\vect(W^{1})} \right) ^{T} + \p{\vect(F^{1}(\theta, X))}{\vect(b^{1})} \left( \p{\vect(F^{1}(\theta, X)}{\vect(b^{1})} \right) ^{T} \\
   &= \bigl( A \bigl( \frac{\sigma^{2}_{w}}{d_{0}}X^{T}X + {\sigma^{2}_{b}} \bigr) A^{T} \bigr) \otimes I_{d_{1}}
\end{align*}
Induction Step:
\begin{align*}
\Theta^{l+1} &= \sum_{h=1}^{l+1} \p{\vect(F^{l+1}(\theta, X))}{\vect(W^{h})} \left( \p{\vect(F^{l+1}(\theta, X)}{\vect(W^{h})} \right)^{T} \\ 
& = \underbrace{\p{\vect(F^{l+1}(\theta, X))}{\vect(W^{l+1})} \left( \p{\vect(F^{l+1}(\theta, X))} {\vect(W^{l+1})} \right) ^{T} + \p{\vect(F^{l+1}(\theta, X))}{\vect(b^{l+1})} \left( \p{\vect(F^{l+1}(\theta, X))} {\vect(b^{l+1})} \right) ^{T}}_{B} \\
& + \underbrace{\sum_{h=1}^{l}\p{\vect(F^{l+1}(\theta, X))}{\vect(W^{h})} \left( \p{\vect(F^{l+1}(\theta, X))} {\vect(W^{h})} \right) ^{T} + \p{\vect(F^{l+1}(\theta, X))}{\vect(b^{h})} \left( \p{\vect(F^{l+1}(\theta, X))} {\vect(b^{h})} \right) ^{T}}_{\Gamma} \\
B &= \biggl( A \biggl( \frac{\sigma^2_{w}}{d_{l}}G^{l^T}G^{l} + \sigma^{2}_{b}  \biggr) A^{T} \biggr) \otimes I_{d_{l+1}} \\
 \end{align*}
We can now proceed similar to the previous NTK derivations (see \ref{th:ntk}) and therefore will skip the parts which stay the same.
\begin{align*}
    \implies B = A \left( \Lambda^{L} \otimes I_{d_{l+1}} \right) A^{T}
\end{align*}

\begin{align*}
 \Gamma &=
\sum_{h=1}^{l} \left( \p{\vect(F^{l+1}(\theta, X))}{\vect(G^{l})} \p{\vect(G^{l})}{\vect(F^{l})} \p{\vect(F^{l}(\theta, X))}{\vect(W^{l})}   \right) \left( \p{\vect(F^{l+1}(\theta, X))}{\vect(G^{l})} \p{\vect(G^{l})}{\vect(F^{l})}  \p{\vect(F^{l}(\theta, X))}{\vect(W^{l})}  \right)^{T}
 \\
&+ 
\left( \p{\vect(F^{l+1}(\theta, X))}{\vect(G^{l})} \p{\vect(G^{l})}{\vect(F^{l})} \p{\vect(F^{l}(\theta, X))}{\vect(b^{l})}   \right) \left( \p{\vect(F^{l+1}(\theta, X))}{\vect(G^{l})} \p{\vect(G^{l})}{\vect(F^{l})}  \p{\vect(F^{l}(\theta, X))}{\vect(b^{l})}  \right)^{T}
 \\
&= \frac{\sigma^{2_{w}}}{d_{l}}(A \otimes W^{l+1}) \Sigma^{l} (\Theta^{l} \otimes I_{d}) \Sigma^{l}  (A^{T} \otimes W^{l+1^{T}}) \\
&= \frac{\sigma^{2_{w}}}{d_{l}}  (A \otimes I_{d_{l}})(I_{n} \otimes W^{l+1}) \Sigma^{l} (\Theta^{l} \otimes I_{d}) \Sigma^{l}  (I_{n} \otimes W^{l+1^{T}} (A^{T} \otimes I_{d_{l}}))
\end{align*}

We reduced the proof to the NTK proof using property of the Kronecker Product in the last line.
\begin{align*}
    \implies T = ( A(\Theta \odot \dot{\Lambda}^{L})A^{T}) \otimes I_{d_{l+1}} 
\end{align*}
\end{proof}
\subsection{NTK \& GP for Graph Neural Network with Skip-Concatenate Connections} \label{sgnn123}
The model definition is copied from Section \ref{sgnn-def} for the readers convenience.

\defsgnn*

The derivatives are similar to the Graph Neural Network derivatives (see \ref{gnnder}).
Only $\Sigma^{h}$ is different.
\begin{align*}
&\Sigma^{h} := \p{\vect(G^{h})}{\vect(F^{h})} = \begin{pmatrix}
    \diag(\vect\dot{\sigma}(F^{h}_{ \cdot 1})) & 0 & 0 & \dots & 0\\
    I_{d_{h}} & 0 & 0 & \dots & 0 \\ 
    0 &  \diag(\vect\dot{\sigma}(F^{h}_{ \cdot 2})) & 0 & \dots & 0 \\
    0 & I_{d_{h}} & 0 & \dots  \\
     0 & 0 & \ddots \\
     \vdots  & \vdots & \ddots \\
     0 & 0 & &  \dots & \diag(\vect\dot{\sigma}(F^{h}_{ \cdot n})) \\
      0 & 0 & 0 & \dots & I_{d_{h}}   \\
\end{pmatrix}  \in \mathbb{R}^{2d_{l}n \times 2d_{l}n} \\
&\Sigma^{h} \text{ is block-diagonal, with each block } \Sigma^{h}_{II} = \begin{pmatrix} 
\diag(\vect\dot{\sigma} (F^{h}_{ \cdot i})) \\
 I_{d_{h}}
\end{pmatrix}
\end{align*}

\subsubsection{Gaussian Process (Theorem~\ref{th:sgnngp})} \label{th:sgnn-proof}
\thnsgnnp*
\begin{proof}
Prove is by induction. We define $\text{Y}^{L}= F^{L}(X)$.
The Base Case doesn't change, so
\begin{gather*}
    \mathbb{E}(\vect(Y^{1})) =  0 \\
\mathrm{Cov}(\vect(Y^{1})) = (A \Lambda^{1} A^{T}) \otimes I_{d_{1}}
\intertext{The Induction Step:}
\mathbb{E}(\vect(Y^{l+1})) = 0 \\
\mathrm{Cov}(\vect(Y^{l+1})) = \mathrm{Cov} \bigl[ (A \otimes I_{d_{l+1}})\vect(\frac{\sigma_{w}}{\sqrt{d_{0}}}W^{l+1}G^{l}A^{T} + \sigma_{b}B^{l+1}) \bigr] \\
= (A \otimes I_{d_{l+1}})  \mathrm{Cov}  \bigl[ \vect(\frac{\sigma_{w}}{\sqrt{d_{0}}}W^{l+1}G^{l} + \sigma_{b}B^{1}) \bigr] (A^{T} \otimes I_{d_{l+1}})  \\
= (A \otimes I_{d_{l+1}}) \left(  \frac{\sigma_{w}^{2}}{d_{l}} \mathbb{E} \biggl[ (I_{n} \otimes W^{l+1}) \vect(G^{l}) \vect(G^{l})^{T} (I_{n} \otimes W^{l+1^{T}}) \biggr]  + \sigma_{b}^{2} (\mathds{1}_{n} \otimes \mathbf{1}_{d_{l+1}}) \right) (A^{T} \otimes I_{d_{l+1}})
\end{gather*}
\begin{gather*}
\frac{\sigma_{w}^{2}}{2d_{l}} \mathbb{E} \biggl[ (I_{n} \otimes W^{l+1}) \vect(G^{l}) \vect(G^{l})^{T} (I_{n} \otimes W^{l+1^{T}}) \biggr]_{IJ_{uv}} = \frac{\sigma_{w}^{2}}{2d_{l}}\mathbb{E} \biggl[ W_{u} (\vect(G^{l}) \vect(G^{l})^{T})_{IJ} W_{v}^{T} \biggr] \\
\intertext{$Z := \left( \vect(G^{l}) \vect(G^{l})^{T} \right)_{IJ}$ and $Z_{11}$ is the upper left block matrix of size $\mathbb{R}^{d_{l} \times d_{l}}$.}
\intertext{
and $W^{l+1}_{u} = \biggl[ \WW{l+1}_{u},  \WWW{l+1}_{u} \biggr]$ with $\WW{l+1}_{u},  \WWW{l+1}_{u} \in \mathbb{R}^{1 \times d_{l}}$, so we split $W^{l+1}_{u}$ into two parts, each of length $d_{l}$.
}\\
\frac{\sigma_{w}^{2}}{2d_{l}}\mathbb{E} \biggl[ W_{u} (\vect(G^{l}) \vect(G^{l})^{T})_{IJ} W_{v}^{T} \biggr]\\
=\frac{\sigma_{w}^{2}}{2d_{l}} \mathbb{E} \biggl[ \WW{l+1}_{u}  Z_{11}    \WW{l+1}_{v}  +    \WW{l+1}_{u}  Z_{12}    \WWW{l+1}_{v}  +    \WWW{l+1}_{u}  Z_{21}    \WW{l+1}_{v} +  \WWW{l+1}_{u}  Z_{22}    \WWW{l+1}_{v}  \biggr] \\
=\delta_{uv}\frac{\sigma_{w}^{2}}{2d_{l}} \mathbb{E} \biggl[\WW{l+1}_{u}  Z_{11}    \WW{l+1}_{v} + \WWW{l+1}_{u}  Z_{22}    \WWW{l+1}_{v} \biggr]\\
= \delta_{uv}\frac{\sigma_{w}^{2}}{2d_{l}} \left(\sum_{s}^{d_{l}} \sigma(Y^{l})_{is} \sigma(Y^{l})_{js} + \sum_{s}^{d_{l}} (Y^{l}_{is} Y^{l}_{js}) \right)  =  \delta_{uv}\frac{\sigma_{w}^{2}}{2d_{l}} \left(\sum_{s}^{d_{l}} \sigma(Y^{l})_{is} \sigma(Y^{l})_{js} +  (Y^{l}_{is} Y^{l}_{js}) \right) \\
\overset{\overset{P}{d_{l} \rightarrow \infty}}{\longrightarrow}  \delta_{uv} \frac{1}{2}\biggl(\underset{s,t \sim N(0, \biggl( \begin{smallmatrix} 
\Lambda^{l-1}_{ii} & \Lambda^{l-1}_{ij} \\
\Lambda^{l-1}_{ji} & \Lambda^{l-1}_{jj} 
\end{smallmatrix}  \biggr)) }{\mathbb{E}} \bigl[\sigma(s)\sigma(t) \bigr] + \mathbb{E}\bigl[ st \bigr] \biggr)
\end{gather*}
Convergence in probability follows from the Law of Large Numbers. The righthandside follows from the definition of $\Sigma^{l}$ and the fact that the output of layer $l-1$ is a from a Gaussian Process.
\end{proof}

\subsubsection{Neural Tangent Kernel (Theorem~\ref{th:sgntk})} \label{th:sgnnntk-proof}
\thnsgntk*

\begin{proof}
Proof is by Induction, similar to the NNGP derivation. The Base Case is just a repetition of \ref{th:gntk} as $F^{1}$ has no Skip-Concatenate Connections. We will skip parts which are same as the NTK/ GNTK derivations.
Induction Step:
\begin{align*}
\Theta^{l+1} &= B + \Gamma \\
B &= \biggl( A \biggl( \frac{\sigma_{w}^{2}}{2d_{l}}G^{l^T}G^{l} + \sigma^{2}_{b} \mathds{1}_{n} \biggr) A^{T} \biggr) \otimes I_{d_{l+1}} \\
\hspace*{-1.5cm}\frac{\sigma_{w}^{2}}{2d_{l}}G^{l^T}G^{l} &\overset{\overset{P}{d_{l} \rightarrow \infty}}{\rightarrow} \sigma^{2}_{w} 
\begin{psmallmatrix} 
    \frac{1}{2 }\biggl( \mathbb{E}\bigl[ \sigma(u_{1}) \sigma(u_{1}) \bigr] + \mathbb{E} \bigl[ u_{1} u_{1} \bigr] \biggr) &   \frac{1}{2 }\biggl(\mathbb{E} \bigl[ \sigma(u_{1}) \sigma(u_{2}) \bigr] + \mathbb{E} \bigl[ u_{1} u_{2} \bigr]\biggr) \dots  &   \frac{1}{2 }\biggl(\mathbb{E} \bigl[ \sigma(u_{1}) \sigma(u_{n}) \bigr] + \mathbb{E} \bigl[ u_{1} u_{n} \bigr] \biggr)\\
    \vdots & \ddots & \vdots\\
     \frac{1}{2 }\biggl(  \mathbb{E} \bigl[ \sigma(u_{n}) \sigma(u_{1}) \bigr] + \mathbb{E} \bigl[ u_{n} u_{1} \bigr]\biggr)  & \dots  &  \frac{1}{2 }\biggl( \mathbb{E} \bigl[ \sigma(u_{n}) \sigma(u_{n}) \bigr] + \mathbb{E} \bigl[ u_{n} u_{n} \bigr]\biggr)
    \end{psmallmatrix}
\end{align*}
\begin{align*}
    \implies B = \Lambda^{L} \otimes I_{d_{l+1}}
\end{align*}

\begin{align*}
&\Gamma = \frac{\sigma^{2_{w}}}{2d_{l}}  (A \otimes I_{d_{l}})(I_{n} \otimes W^{l+1}) \Sigma^{l} (\Theta^{l} \otimes I_{d}) \Sigma^{l}  (I_{n} \otimes W^{l+1^{T}}) (A^{T} \otimes I_{d_{l}})) \\
&\frac{\sigma^{2}_{w}}{2d_{l}} (I_{n} \otimes W^{l+1}) \Sigma^{l} (\Theta^{l} \otimes I_{d}) \Sigma^{l}  (I_{n} \otimes W^{l+1^{T}})_{IJ_{uv}} = \Theta_{ij}  \frac{\sigma^{2}_{w}}{2d_{l}}    W^{l+1}_{u} \Sigma^{l}_{II} \Sigma^{l^{T}}_{JJ} (W^{l+1}_{v})^{T} \\
\intertext{$Z := \left( \vect(G^{l}) \vect(G^{l})^{T} \right)_{IJ}$ and $Z_{11}$ is the upper left block matrix of size $\mathbb{R}^{d_{l} \times d_{l}}$.}
\intertext{
and $W^{l+1}_{u} = \biggl[ \WW{l+1}_{u},  \WWW{l+1}_{u} \biggr]$ with $\WW{l+1}_{u},  \WWW{l+1}_{u} \in \mathbb{R}^{1 \times d_{l}}$, so we split $W^{l+1}_{u}$ into two parts, each of length $d_{l}$.
}\\
&= \Theta_{ij} \frac{\sigma_{w}^{2}}{2d_{l}} \left( \WW{l+1}_{u}  Z_{11}    \WW{l+1}_{v}  +    \WW{l+1}_{u}  Z_{12}    \WWW{l+1}_{v}  +    \WWW{l+1}_{u}  Z_{21}    \WW{l+1}_{v} +  \WWW{l+1}_{u}  Z_{22}    \WWW{l+1}_{v}  \right) \\
& \overset{\overset{P}{d_{l} \rightarrow \infty}}{\longrightarrow} \delta_{uv}\Theta_{ij} \sigma_{w}^{2} \frac{1}{2} \biggl(
\mathbb{E}\bigl[ \dot{\sigma}(s)\dot{\sigma}(t) \bigr] + 1 \biggr)
\end{align*}
Which follows the same idea for the proof of the NTK.
\end{proof}

\subsection{NTK \& GP for Graph Attention Neural Network}
The model definition is copied from Section 3.3 for the readers convenience.
\defgat*
\subsubsection*{Derivatives}
\begin{align*}
\SSSigma{l} &:= \pv{G^{l}}{F^{l}} = \diag(\vect(\dot{\sigma}_{2}(F^{l}))) \\
\p{\vect(F^{l})}{\vect(W^{l})} &=    \frac{\sigma_{w}}{\sqrt{d_{l-1}}} \left[F^{l1}, \cdots, F^{lH} \right]  \otimes I_{d_{l}}  \\
\p{\vect(F^{l})}{\vect(\sigma_{1}(L^{l,h}))} &=  I_{n} \otimes \left( \frac{\sigma_{w}}{\sqrt{d_{l-1}}} W^{l,h}G^{l-1} \right) \\
\SSigma{l,h} &:= \p{\vect(\sigma_{1}(L^{l,h}))}{\vect(L^{l,h})} = \diag(\vect(\dot{\sigma}_{1}(L^{l,h}))  \\
\p{\vect(L^{l,h})}{\vect(c^{l,h})} &= \frac{\sigma_{c}}{\sqrt{2d_{l-1}}} \hat{A} \concat((\mathbf{1}_{n} \otimes G^{T}), (G^{T} \otimes \mathbf{1}_{n}))  \text{ with $\hat{A}:=\diag(\vect(A^{T})$} \\
\intertext{derived using $\vect(L^{l,h})=\frac{\sigma_{c}}{\sqrt{2d_{l-1}}}\hat{A} \left((\mathbf{1}_{n} \otimes G^{T})c_{1}^{l,h}+ (G^{T} \otimes \mathbf{1}_{n}^{T})c_{2}^{l,h} \right)$}
\intertext{$\frac{\sigma_{c}}{\sqrt{2d_{l-1}}}  \hat{A} \left((\mathbf{1}_{n} \otimes G^{T}c_{1}^{l,h})+  (G^{T}c_{2}^{l,h}  \otimes \mathbf{1}_{n}) \right) =\frac{\sigma_{c}}{\sqrt{2d_{l-1}}}\hat{A} \concat(\mathbf{1}_{n} \otimes I_{n}, I_{n} \otimes  \mathbf{1}_{n}) \concat(G, G)^{T} $}
\p{\vect(L^{l, h})}{\vect(G^{h-1})} &=  \frac{\sigma_{c}}{\sqrt{2d_{l-1}}} \hat{A} \left((\mathbf{1}_{n} \otimes (I_{n} \otimes c_{1}^{l,h^{T}} ) + (I_{n} \otimes c_{2}^{l,h^{T}}) \otimes \mathbf{1}_{n} )  \right) \\
\p{\vect(F^{l})}{\vect(G^{l-1})} = &\frac{1}{\sqrt{H}}\sum_{h}^{H} 
 \biggr[ \left( I_{n} \otimes \left(\frac{\sigma_{w}}{\sqrt{d_{l-1}}}W^{l,h} G^{l-1} \right) \right)  \SSigma{l,h} \p{\vect(L^{l, h})}{\vect(G^{h-1})} + \left( \sigma_{1}(L^{l,h})^{T} \otimes \frac{\sigma_{w}}{\sqrt{d_{l-1}}}W^{l,h} \right)   \biggl]
\end{align*}
\subsubsection*{Proof Strategy for GAT* GP \& NTK}
The proof of the GAT* Gaussian Process follows from \citep[Theorem 1]{pmlr-v119-hron20a} which
is the Scaled Attention Theorem from \citep{yang1}. Theorem 1/ Scaled Attention Theorem is based on \citep[Mastertheorem]{yang1, yang2} which proves that the infinite width Neural Network connections to GP and NTK hold for a variety of architectures.
The authors of \cite{pmlr-v119-hron20a} generalizes Theorem 1 to a particular scaling, namely the $d^{-\frac{1}{2}}$ scaling for Attention Neural Networks.
This is made precise in Theorem 3 from the aforementioned authors.
Adapting Theorem 1 is enough to prove the GAT* Gaussian Process derivation because the GAT* Model does not require the $d^{-\frac{1}{2}}$ scaling. Contrary to the previous proofs in this work which derived the infinite Width Limit for $d_{l-1} \rightarrow \infty$ layer after layer, the proofs for the NTK and GP for the GAT* are conducted for
min$\{H,d_{l-1}\} \to \infty$ (i.e. the attention heads and weight widths go to infinity simultaneously for each layer). Similar techniques have been used by \citep{gp1} to demonstrate that infinite width Neural Nets are Gaussian Processes when all widths go to infinity simultaneously.
Before we start, we have to show that our GAT* model meets all assumptions from \cite[Theorem 1/ Theorem 3]{pmlr-v119-hron20a}.
We have to make sure that the output of the GAT* model
is bounded by a constant which is independent of $H,d_{l-1}$. This will make it possible to use Lemma 32 from \cite{pmlr-v119-hron20a}. This way, all assumptions of Theorem 3 will be met.
To prove boundedness of our GAT* Model it suffices to prove boundedness for the only component that is different to the Attention Neural Network (defined in \cite{pmlr-v119-hron20a}) namely $L$. Boundedness of the Model then follows by induction and Hölder Inequality (see \citep[Lemma 32]{pmlr-v119-hron20a} which is based on \cite{gp1}).
Boundedness of $L$ follows by direct application of  of \citep[Lemma 19]{gp1}. In words: $L_{ij}$ is just an inner product of a constant vector (the data) with a normally distributed vector. 
This inner product can be bounded by a constant independent of the size of the normally distributed vector. Each entry $L_{ij}$ is just the same vector $c$ which will establish the bound on all of $L$ using the mentioned Lemma 19. The proof of the NTK is conducted using similar techniques to the NTK proof from  \citep{pmlr-v119-hron20a}.

\subsubsection{GAT*GP Gaussian Process (Theorem \ref{thmgattt})}\label{gatgpproof}

\begin{lemma}\label{lemmagatgp}
If the output of a GAT Layer $l-1$ is a GP, so $\vect(F^{l-1}) \sim GP(0, \Omega \otimes I_{d_{l-1}})$ for a fixed $\Omega$ and the $d_{l-1} \to \infty$, then  $\vect(L^{l}) \sim  GP( 0,  \psi(\Omega))$, with
\begin{gather*}
    \psi(\Omega) :=  \Exp{\sigma_{1}(u)\sigma_{1}(u^{T})} \text{ with $u \sim GP\bigl(0, 
    \sigma_{c} J_{A}
    \begin{psmallmatrix}
    \Omega & \Omega \\
    \Omega & \Omega \\
    \end{psmallmatrix}
    J_{A}^{T} \otimes I_{d_{l-1}} \bigr))$ }\\
    \text{and $J_{A} := \diag(\vect(A^{T})\concat((\mathbf{1}_{n} \otimes I_{n}), (I_{n} \otimes  \mathbf{1}_{n}))$}
\end{gather*} 
\begin{proof}  \noindent
We will demonstrate the proof for $d_{l-1} \rightarrow \infty$ sequentially for each layer, utilizing Induction to conclude the proof. (The base case follows from using the Definition of $L^{1}$). This will make the proof much simpler.
In the case of min$\{d_{l-1} \} \rightarrow \infty$, (so all Widths going to infinity simultaneously) the proof can be conducted applying \cite[Mastertheorem/ Scaled Attention Neural Networks]{yang1} or \cite[Theorem 1/ Theorem 3, Part I)]{pmlr-v119-hron20a}.
The proof is conducted for $L^{l,h}$ so for every head $h$ with corresponding $c^{l,h}$ but we will omit $h$ and write $c^{l}$ instead $c^{l,h}$ and $L^{l}$ instead $L^{l}$.
\begin{align*}
\Exp{L^{l}_{ij}} &= 0 \\
\Exp{L^{l}_{lm}L^{l}_{st}} &= \frac{\sigma_{c}^{2}}{2d_{l-1}} A_{lm}A_{st} \Exp{\left(c_{1}^{l^{T}} (G_{\cdot l} + G_{\cdot m}) + c_{2}^{l^{T}} (G_{\cdot l} + G_{\cdot m})\right) +  \left(c_{1}^{l^{T}} (G_{\cdot s} + G_{\cdot t}) c_{2}^{l^{T}} (G_{\cdot s} + G_{\cdot t})\right)  } = \\
&= \frac{\sigma_{c}^{2}}{2d_{l-1}} A_{lm}A_{st} 
\biggl( \Exp{c_{1}^{l^{T}}(G_{\cdot l} + G_{\cdot m})c_{1}^{l^{T}}(G_{\cdot s} + G_{\cdot t})}
+ \Exp{c_{1}^{l^{T}}(G_{\cdot l} + G_{\cdot m})c_{2}^{l^{T}}(G_{\cdot s} + G_{\cdot t})} \\
&+ \Exp{c_{2}^{l^{T}}(G_{\cdot l} + G_{\cdot m})c_{1}^{l^{T}}(G_{\cdot s} + G_{\cdot t})}
+ \Exp{c_{2}^{l^{T}}(G_{\cdot l} + G_{\cdot m}) c_{2}^{l^{T}}G_{\cdot l}{s} + G_{\cdot t})} \biggr) \\
&=\frac{\sigma_{c}^{2}}{2d_{l-1}} A_{lm}A_{st} \biggl(
 \Exp{c_{1}^{l^{T}}(G_{\cdot l} + G_{\cdot m})c_{1}^{l^{T}}(G_{\cdot s} + G_{\cdot t})} 
 + \Exp{c_{2}^{l^{T}}(G_{\cdot l} + G_{\cdot m}) c_{2}^{l^{T}}G_{\cdot l}{s} + G_{\cdot t})}
\biggr) \\
&\hspace*{-1.5cm} =\frac{\sigma_{c}^{2}}{2d_{l-1}} A_{lm}A_{st} \biggl( 
\Exp{\sum_{uv}  c_{1u}^{l}c_{1v}^{l}G_{ul}G_{vs}} 
+\Exp{\sum_{uv}  c_{1u}^{l}c_{1v}^{l}G_{ul}G_{vt}}
+\Exp{\sum_{uv}  c_{1u}^{l}c_{1v}^{l}G_{um}G_{vs}} 
+\Exp{\sum_{uv}  c_{1u}^{l}c_{1v}^{l}G_{um}G_{vt}} \\
&\hspace*{-1.5cm}+\Exp{\sum_{uv}  c_{2u}^{l}c_{2v}^{l}G_{ul}G_{vs}}
+\Exp{\sum_{uv}  c_{2u}^{l}c_{2v}^{l}G_{ul}G_{vt}}
+\Exp{\sum_{uv}  c_{2u}^{l}c_{2v}^{l}G_{um}G_{vs}}
+\Exp{\sum_{uv}  c_{2u}^{l}c_{2v}^{l}G_{um}G_{vt}}
\biggr) \\
\intertext{\centering $\overset{\text{Converges in probability for $d_{l-1} \rightarrow \infty$}}{\longrightarrow}$}
&\sigma_{c}^{2} A_{lm}A_{st}\biggl( 
\Exp{\sigma_{1}(u_{l})\sigma_{1}(u_{s})}
+ \Exp{\sigma_{1}(u_{l})\sigma_{1}(u_{t})}
+\Exp{\sigma_{1}(u_{m})\sigma_{1}(u_{s})}
+\Exp{\sigma_{1}(u_{m})\sigma_{1}(u_{t})}
\biggr)
\end{align*}
with $u=\vect(F^{l-1}) \sim GP(0, \Omega \otimes I_{d_{l-1}})$
\end{proof}
\end{lemma}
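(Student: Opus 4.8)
The plan is to treat the attention logits $L^{l,h}$ as a linear functional of the Gaussian attention vector $c^{l,h} \sim N(0, I_{2d_{l-1}})$ once the previous layer's output $G^{l-1} = \sigma_2(F^{l-1})$ is held fixed, and then to let $d_{l-1}\to\infty$ so that the random conditional covariance concentrates to a deterministic kernel. First I would \textbf{condition on $G^{l-1}$}. Each entry $L^{l,h}_{ij} = A_{ij}\frac{\sigma_c}{\sqrt{2d_{l-1}}}\bigl((c_1^{l,h})^{T} G^{l-1}_{\cdot i} + (c_2^{l,h})^{T} G^{l-1}_{\cdot j}\bigr)$ is, conditionally, a fixed linear combination of the coordinates of $c^{l,h}$; hence for every finite collection of index pairs the corresponding entries of $L$ form a jointly zero-mean Gaussian vector. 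This reduces the lemma to identifying the limiting covariance, since a process all of whose finite marginals are Gaussian is a Gaussian process.

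Next I would \textbf{compute the conditional covariance}. Expanding the bilinear form and using the second-moment structure $\Exp{c c^T}=I$, the conditional covariance of $L^{l,h}_{ij}$ and $L^{l,h}_{st}$ becomes a sum, weighted by $A_{ij}A_{st}$ and scaled by $\tfrac{\sigma_c^2}{2d_{l-1}}$, of inner products of the columns $G^{l-1}_{\cdot i},G^{l-1}_{\cdot j},G^{l-1}_{\cdot s},G^{l-1}_{\cdot t}$. I would then package these weighted inner products into the matrix form $\sigma_c^2\, J_A \begin{psmallmatrix}\Omega & \Omega\\ \Omega & \Omega\end{psmallmatrix} J_A^T$, reading off $J_A = \diag(\vect(A^T))\,\concat\!\bigl((\mathbf{1}_n \otimes I_n),(I_n \otimes \mathbf{1}_n)\bigr)$ from the fact that each row/column index of $L$ selects a prescribed column of $G^{l-1}$. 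This is the bookkeeping step that turns the index-level expansion into the claimed closed form.

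Then comes the \textbf{infinite-width limit}. Each normalized inner product $\frac{1}{d_{l-1}}\sum_{p}\sigma_2(F^{l-1}_{pi})\,\sigma_2(F^{l-1}_{pj})$ is an average over the feature index $p$ of i.i.d.\ terms, because the inductive hypothesis $\vect(F^{l-1}) \sim \mathrm{GP}(0, \Omega \otimes I_{d_{l-1}})$ makes the columns independent and identically distributed across the feature dimension. By the Law of Large Numbers each such average converges in probability to $\Exp{\sigma_2(f_i)\sigma_2(f_j)}$, where $(f_i,f_j)$ carries the corresponding $2\times 2$ marginal of $\Omega$; polynomial boundedness of $\sigma_2$ supplies the required integrability. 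Collecting these limits yields a deterministic covariance, so the conditional Gaussianity upgrades to an unconditional Gaussian process for $\vect(L^l)$ with the stated covariance $\psi(\Omega)$. I would organize the whole argument as an induction on depth, the base case $L^1$ following directly from its definition since $G^0 = X$ is deterministic.

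The step I expect to be the main obstacle is making the interchange of limit and Gaussianity rigorous, i.e.\ justifying that conditional-Gaussian-plus-converging-covariance produces a genuine GP rather than merely matching first and second moments. For the sequential limit ($d_{l-1}\to\infty$ one layer at a time) this is exactly where the Law of Large Numbers and the preserved i.i.d.\ column structure do the work, and the induction closes cleanly. For the simultaneous regime $\min\{H,d_{l-1}\}\to\infty$ one cannot argue layer by layer and must instead invoke the Master Theorem / Tensor Programs framework and the scaled-attention results, which requires first verifying the boundedness hypotheses; as noted in the proof-strategy discussion, the only new ingredient beyond the plain attention network is a uniform bound on $L$, which follows from Hölder's inequality together with Lemma~19 of the simultaneous-limit reference, since each $L_{ij}$ is an inner product of the fixed data against a Gaussian vector.
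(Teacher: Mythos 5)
Your proposal follows essentially the same route as the paper's proof: conditional Gaussianity of $L^{l}$ given $G^{l-1}$ (each entry being linear in the Gaussian vector $c$), covariance expansion in which the $c_{1}$--$c_{2}$ cross terms vanish by independence, the Law of Large Numbers collapsing the normalized column inner products of $G^{l-1}$ to the deterministic kernel via the inductive GP hypothesis, induction on depth with base case $L^{1}$, and deferral to the Master Theorem / scaled-attention results (with boundedness checked via Lemma 19 and H\"older) for the simultaneous limit $\min\{H, d_{l-1}\}\to\infty$. The only divergence is in your favor: you correctly state the limiting second moments with $\sigma_{2}$, since the averaged entries are $G^{l-1}_{ui} = \sigma_{2}(F^{l-1}_{ui})$, whereas the paper's displayed limit writes $\sigma_{1}$, which is evidently a typo.
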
 \noindent

\thmgattt*
\begin{sproof}
Theorem \ref{GAT-NTK123} can be derived by just applying \cite[Part II) Theorem 1 (which itself is from \cite{yang2} or Part II) Theorem 3]{pmlr-v119-hron20a}. Notice that for proving Part II) Theorem 3) at no point is the actual definition of the Attention Neural Networks used, therefore the proves applies to our model without any adaption.
\begin{align*}
        \Exp{\vect(F^{l})} &= 0 \\
         \Exp{\vect(F^{l})\vect(F^{l})^{T}} &= 
         \Exp{ \left( \sigma_{1}(L^{l,h}) \otimes \frac{\sigma_{w}}{\sqrt{d_{l}}} W^{l,h} \right)
    G^{l-1^{T}}G^{l-1} \left( \sigma_{1}(L^{l,h}) \otimes \frac{\sigma_{w}}{\sqrt{d_{l}}}W^{l,h} \right)^{T}} \\
    \intertext{for block indices $I,J$}
    \Exp{\vect(F^{l})\vect(F^{l})^{T}}_{IJ} &= \\
\Exp{\frac{1}{H}\sum_{h,h'}^{H}\sum_{ST} \left( \sigma_{1}(L^{l,h}) \otimes I_{d_{l}} \right)_{IS}& \left[ \left( I_{n} \otimes \frac{\sigma_{w}}{\sqrt{d_{l}}} W^{l,h} \right)   G^{l-1^{T}}G^{l-1}
\left( I_{d_{l}} \otimes \frac{\sigma_{w}}{\sqrt{d_{l}}}W^{l,h^{T}} \right) \right]_{ST}
\left(\sigma_{1}(L^{l,h}) \otimes I_{d_{l}}\right)_{TJ}
}\\
=\Exp{\frac{1}{H}\sum_{h,h'}^{H}\sum_{ST}  \sigma_{1}(L^{l,h})_{is} \frac{\sigma_{w}^{2}}{d_{l}} W^{l,h} & (G^{l-1^{T}}G^{l-1})_{ST} W^{l,h'^{T}}  \sigma_{1}(L^{l,h'})_{tj} }
\intertext{for indices $u,v$ } 
 \Exp{\vect(F^{l})\vect(F^{l})^{T}}_{IJ_{uv}} &=\Exp{\frac{1}{H}\sum_{h,h'}^{H}\sum_{ST} \sum_{lm}  \sigma_{1}(L^{l,h})_{is} \sigma_{1}(L^{l,h'})_{tj} \frac{\sigma_{w}^{2}}{d_{l}} W^{l,h}_{ul}W^{l,h'}_{vm}  (G^{l-1^{T}}G^{l-1})_{ST_{lm}}} \\
 &=\frac{1}{H}\sum_{h,h'}^{H}\sum_{ST} \sum_{lm}  \sigma_{1}(L^{l,h})_{is} \sigma_{1}(L^{l,h'})_{tj} \Exp{\frac{\sigma_{w}^{2}}{d_{l}} W^{l,h}_{ul}W^{l,h'}_{vm}  (G^{l-1^{T}}G^{l-1})_{ST_{lm}}}
 \end{align*}
 \begin{gather*}
 \intertext{Now by a hand wavy argument we first let $d_{l-1} \rightarrow \infty$ to show,}
\Exp{\frac{\sigma_{w}^{2}}{d_{l}} W^{l,h}_{ul}W^{l,h'}_{vm}  (G^{l-1^{T}}G^{l-1})_{ST_{lm}}} \overset{P}{\rightarrow} \sigma_{w}^{2}\Lambda^{l-1}_{ST_{lm}}
\intertext{and then $H \rightarrow \infty$ and by the Law of Large Numbers }
\frac{\sigma_{w}^{2}}{H}\sum_{h,h'}^{H}\sum_{ST} \sum_{lm}  \sigma_{1}(L^{l,h})_{is} \sigma_{1}(L^{l,h'})_{tj} \Lambda_{ST_{lm}}
\overset{P}{\rightarrow}
\sigma_{w}^{2}\Exp{\sum_{ST} \sum_{lm} \sigma_{1}(L^{l,h})_{is} \sigma_{1}(L^{l,h'})_{tj}\Lambda_{ST_{lm}}} \\
\intertext{the righthandside is nothing else than}
\sigma_{w}^{2}\Exp{\sum_{ST} \sum_{lm} \sigma_{1}(L^{l,h})_{is} \sigma_{1}(L^{l,h'})_{tj}\Lambda_{ST_{lm}}}
=
 \sigma_{w}^{2}\sum_{st}^{n}\Exp{\sigma_{1}(L^{l,h})_{is} \sigma_{1}(L^{(l,h')})_{tj}}  \Lambda^{l-1}_{st} 
=
\sigma_{w}^{2} \sum_{st}^{n} \Lambda^{l-1}_{st} \psi(\Lambda^{l-1})_{IJ_{st}} \\
\implies \Exp{\vect(F^{l})\vect(F^{l})^{T}} \rightarrow \bm( \sigma_{w}^{2} \Lambda^{l-1}, \psi(\Lambda^{l-1}))
\end{gather*}
For a rigourous treatement and the case of min$\{H, d_{l-1} \} \rightarrow \infty$ the proof can be concluded using \citep[Mastertheorem]{yang1} or \citep[Theorem 3, Part II)]{pmlr-v119-hron20a}.
\end{sproof}

\subsubsection{Neural Tangent Kernel (Theorem~\ref{thm:gat2})} \label{gatntkproof}
\thmgatt*
\begin{proof}\renewcommand{\qedsymbol}{}
As a reminder the NTK of depth $l$ is
\begin{gather*}
    \p{\vect(F^{l}(\theta, X))}{\theta} \left( \p{\vect(F^{l}(\theta, X))}{\theta} \right)^{T}
\end{gather*}
for parameters $\theta$. We  when the width of the hidden layers goes to infinity this expression converges in probability to $\Theta^{l} \otimes I_{d_{l}}$. Now to the proof, first realize that:
\begin{gather*}
   \p{\vect(F^{l}(\theta, X))}{\theta} \left( \p{\vect(F^{l}(\theta, X))}{\theta} \right)^{T} =   \pv{F^{l}}{W^{l}} \left( \pv{F^{l}}{W^{l}} \right)^{T} + \frac{1}{H} \sum_{h}^{H}
\pv{F^{l}}{c^{l,h}} \left( \pv{F^{l}}{c^{l,h}}\right)^{T} \\ 
+ \pv{F^{l}}{G^{l-1}}\pv{G^{l-1}}{F^{l-1}} \p{\vect(F^{l-1}(\theta, X))}{\theta} \left( \p{\vect(F^{l-1}(\theta, X))}{\theta} \right)^{T}  \left(\pv{F^{l}}{G^{l-1}}   \pv{G^{l-1}}{F^{l-1}} \right)^{T} 
\end{gather*}
Each of the terms is simplified and probability in convergence is proven for min$\{d_{l-1}, H\} \to \infty$.
\end{proof}
\begin{lemma}
\begin{align*}
        \pv{F^{l}}{W^{l}} \left( \pv{F^{l}}{W^{l}} \right)^{T} \overset{P}{\longrightarrow} \bm\left(\sigma_{w}^{2} \Lambda^{l-1}, \psi(\Lambda^{l-1})  \right) \otimes I_{d_{l}}
\end{align*}

\begin{proof}
\begin{gather*}
        \pv{F^{l}}{W^{l}} \left( \pv{F^{l}}{W^{l}} \right)^{T} = \frac{\sigma_{w}^{2}}{d_{l-1}} \left[F^{l1}, ...,  F^{lH} \right] \left[F^{l1}, ...,  F^{lH} \right]^{T} \otimes I_{d_{l}} \\
        \frac{\sigma_{w}^{2}}{Hd_{l-1}} \left( \left[F^{l1}, ...,  F^{lH} \right] \left[F^{l1}, ...,  F^{lH} \right]^{T} \right)= \sigma_{w}^{2} \sum_{h}^{H} \frac{1}{Hd_{l-1}} \sigma_{1}(L^{l,h}) G^{l-1^{T}}G^{l-1}  \sigma_{1}(L^{l,h}) \\
        \intertext{We are going to focus on elements $i,j$ and rewrite it}
 \left(\pv{F^{l}}{W^{l}} \left( \pv{F^{l}}{W^{l}} \right)^{T} \right)_{ij} =       
 \frac{\sigma_{w}^{2}}{H} \sum_{h}^{H} \sum_{l,s}^{n} \sigma_{1}(L^{l,h})_{is} \sigma_{1}(L^{l,h})_{tj}  \frac{\left(G^{l-1^{T}}G^{l-1} \right)_{st} }{d_{l-1}} \\
 \overset{P}{\longrightarrow} \sigma_{w}^{2} \sum_{st}^{n} \Exp{ \sigma_{1}(L^{l,h})_{is} \sigma_{1}(L^{l,h})_{tj}} \Lambda^{l-1}_{st} = \sigma_{w}^{2} \sum_{st}^{n}  \Lambda^{l-1}_{st} \psi(\Lambda^{l-1})_{IJ_{st}}\\
\end{gather*}
To prove convergence in probability (last step). for for min$\{d_{l-1}, H\} \rightarrow \infty$ one can apply \cite[Lemma 19]{pmlr-v119-hron20a} or \cite[Mastertheorem]{yang2}.
\end{proof}
\end{lemma}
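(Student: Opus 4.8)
The statement isolates the contribution of the top-layer weight matrix $W^{l}$ to the Neural Tangent Kernel of the GAT$^{*}$; it is the first of the four batchmultiply terms assembled in Theorem~\ref{thm:gat2}. The plan is to substitute the Jacobian of $F^{l}$ with respect to $W^{l}$, collapse the resulting Kronecker structure, reduce to an $n\times n$ Gram-type object, and then pass to the double limit $\min\{H,d_{l-1}\}\to\infty$ by two applications of the Law of Large Numbers, matching the outcome against the definition of $\bm(\cdot,\cdot)$.

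First I would use that $F^{l}$ is \emph{linear} in $W^{l}$: writing $F^{l}=\frac{\sigma_{w}}{\sqrt{Hd_{l-1}}}W^{l}\,[F^{l,1},\dots,F^{l,H}]^{T}$ and applying $\vect(W^{l}M)=(M^{T}\otimes I_{d_{l}})\vect(W^{l})$, the Jacobian $\pv{F^{l}}{W^{l}}$ factors as $\frac{\sigma_{w}}{\sqrt{Hd_{l-1}}}$ times a Kronecker product with $I_{d_{l}}$. Multiplying by its transpose and using $(M\otimes I_{d_{l}})(M\otimes I_{d_{l}})^{T}=MM^{T}\otimes I_{d_{l}}$ immediately produces the claimed $\otimes I_{d_{l}}$ factor and leaves $\frac{\sigma_{w}^{2}}{Hd_{l-1}}\sum_{h}^{H}(F^{l,h})^{T}F^{l,h}$, an $n\times n$ matrix. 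Unfolding $F^{l,h}=G^{l-1}\sigma_{1}(L^{l,h})$ and reading off the entry indexed by nodes $i,j$ gives $\frac{\sigma_{w}^{2}}{H}\sum_{h}^{H}\sum_{s,t}^{n}\sigma_{1}(L^{l,h})_{is}\,\sigma_{1}(L^{l,h})_{tj}\,\frac{(G^{l-1^{T}}G^{l-1})_{st}}{d_{l-1}}$. I would then take the limit factor-by-factor: by the induction hypothesis $\vect(G^{l-1})$ is a Gaussian Process with covariance $\Lambda^{l-1}\otimes I_{d_{l-1}}$, so $\frac{1}{d_{l-1}}(G^{l-1^{T}}G^{l-1})_{st}\overset{P}{\longrightarrow}\Lambda^{l-1}_{st}$; and since the heads are i.i.d.\ in their parameters, the head-average $\frac{1}{H}\sum_{h}\sigma_{1}(L^{l,h})_{is}\sigma_{1}(L^{l,h})_{tj}$ converges to $\Exp{\sigma_{1}(L^{l,h})_{is}\sigma_{1}(L^{l,h})_{tj}}$, which by Lemma~\ref{lemmagatgp} (where $\vect(L^{l,h})$ is identified as a zero-mean GP with covariance $\sigma_{c}^{2}\gamma_{A}(\Lambda^{l-1})$) equals the block entry $\psi(\Lambda^{l-1})_{IJ_{st}}$. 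Collecting the two limits yields $\sigma_{w}^{2}\sum_{s,t}^{n}\Lambda^{l-1}_{st}\,\psi(\Lambda^{l-1})_{IJ_{st}}$, which is exactly the $(i,j)$ block of $\bm\left(\sigma_{w}^{2}\Lambda^{l-1},\psi(\Lambda^{l-1})\right)$, so the full kernel converges to $\bm\left(\sigma_{w}^{2}\Lambda^{l-1},\psi(\Lambda^{l-1})\right)\otimes I_{d_{l}}$.

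The hard part will be justifying that the two limits may be taken \emph{simultaneously} as $\min\{H,d_{l-1}\}\to\infty$ rather than as a nested iterated limit. The factors $\sigma_{1}(L^{l,h})$ and the Gram matrix $G^{l-1^{T}}G^{l-1}/d_{l-1}$ are assembled from the same lower-layer randomness, so they are not independent and a naive ``first $d_{l-1}$, then $H$'' argument is only heuristic. To make the convergence $\overset{P}{\longrightarrow}$ rigorous I would invoke the tensor-program Master Theorem of \cite{yang1} (equivalently Theorem~3 of \cite{pmlr-v119-hron20a}), which yields the joint convergence of such sums to their Gaussian-Process moments in a single step. Its hypotheses require uniform moment/boundedness control on the network quantities, in particular on $L^{l,h}$; this is precisely what the Proof Strategy subsection secures through \cite[Lemma~32]{pmlr-v119-hron20a} and \cite[Lemma~19]{gp1}, after which the stated limit follows directly.
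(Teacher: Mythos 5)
Your proposal follows essentially the same route as the paper's own proof: the same Kronecker-product factorization of the Jacobian $\pv{F^{l}}{W^{l}}$, the same reduction to the entrywise sum $\frac{\sigma_{w}^{2}}{H}\sum_{h}\sum_{s,t}\sigma_{1}(L^{l,h})_{is}\sigma_{1}(L^{l,h})_{tj}(G^{l-1^{T}}G^{l-1})_{st}/d_{l-1}$, and the same appeal to the Master Theorem of \cite{yang2} or the lemmas of \cite{pmlr-v119-hron20a} to justify convergence in probability under $\min\{H,d_{l-1}\}\to\infty$. Your explicit flagging of the joint-versus-iterated-limit issue (and of the dependence between $\sigma_{1}(L^{l,h})$ and the Gram matrix) is in fact slightly more careful than the paper's one-line citation, but the substance of the argument is identical.
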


\begin{lemma}\label{ref0}
\begin{gather*}
 \sum_{h}^{H}\pv{F^{l}}{c^{l,h}} \left( \pv{F^{l}}{c^{l,h}}\right)^{T}  \overset{P}{\longrightarrow}
 \bm \left[\sigma_{w}^{2} \Lambda^{l-1} ,   \sigma_{c}^{2} \left( \gamma_{A}(\Lambda^{l-1}) \odot  \dot{\psi}(\Lambda^{l-1}) \right)  \right] \otimes I_{d_{l}}
 \end{gather*}
 \end{lemma}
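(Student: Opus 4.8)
The plan is to mirror the structure of the preceding lemma on $\pv{F^{l}}{W^{l}}$, but now carry the differentiation all the way through the attention scores $L^{l,h}$ down to the parameter $c^{l,h}$. First I would expand $\pv{F^{l}}{c^{l,h}}$ by the chain rule using the three Jacobians recorded in the Derivatives section, giving
\[
\pv{F^{l}}{c^{l,h}} = \left( I_{n} \otimes \tfrac{\sigma_{w}}{\sqrt{d_{l-1}}}W^{l,h}G^{l-1} \right) \SSigma{l,h}\, \tfrac{\sigma_{c}}{\sqrt{2d_{l-1}}}\hat{A}\,\concat\!\left( \mathbf{1}_{n} \otimes G^{l-1^{T}},\; G^{l-1^{T}} \otimes \mathbf{1}_{n} \right),
\]
where $\SSigma{l,h} = \diag(\vect(\dot{\sigma}_{1}(L^{l,h})))$ and $\hat{A}=\diag(\vect(A^{T}))$. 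Forming the outer product $\pv{F^{l}}{c^{l,h}}\left(\pv{F^{l}}{c^{l,h}}\right)^{T}$ then factors into three groups that I will treat separately: the outer $W^{l,h}G^{l-1}$ blocks, the central Jacobian outer product $\hat{A}\,\concat(\cdot)\concat(\cdot)^{T}\hat{A}^{T}$, and the diagonal $\SSigma{l,h}$ factors sandwiched between them.

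Second, I would identify the limit of the central piece. Expanding $\tfrac{\sigma_{c}^{2}}{2d_{l-1}}\hat{A}\,\concat(\mathbf{1}_{n}\otimes G^{l-1^{T}},\, G^{l-1^{T}}\otimes\mathbf{1}_{n})\concat(\cdot)^{T}\hat{A}^{T}$ produces a factor $\tfrac{1}{d_{l-1}}G^{l-1^{T}}G^{l-1}$, which converges in probability to $\Lambda^{l-1}$ because layer $l-1$ is a Gaussian Process (Theorem \ref{thmgattt}) and by the Law of Large Numbers. Matching $\hat{A}$ and the two concat blocks $(\mathbf{1}_{n}\otimes I_{n})$, $(I_{n}\otimes\mathbf{1}_{n})$ against the definitions $J_{A}=\diag(\vect(A))\concat((\mathbf{1}_{n}\otimes I_{n}),(I_{n}\otimes\mathbf{1}_{n}))$ and $\gamma_{A}(\Omega)=J_{A}\begin{psmallmatrix}\Omega&\Omega\\\Omega&\Omega\end{psmallmatrix}J_{A}^{T}$, this central piece converges to $\sigma_{c}^{2}\gamma_{A}(\Lambda^{l-1})$; the four blocks of $\begin{psmallmatrix}\Omega&\Omega\\\Omega&\Omega\end{psmallmatrix}$ are precisely the four $c_{1},c_{2}$ combinations already met in the GP computation of Lemma \ref{lemmagatgp}. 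The diagonal $\SSigma{l,h}$ factors carry the $\dot{\sigma}_{1}(L^{l,h})$ entries; since $\vect(L^{l,h})$ is a centered Gaussian with covariance $\sigma_{c}^{2}\gamma_{A}(\Lambda^{l-1})$ (again Lemma \ref{lemmagatgp}), an entrywise Law of Large Numbers turns $\dot{\sigma}_{1}(L^{l,h})\dot{\sigma}_{1}(L^{l,h})^{T}$ into $\dot{\psi}(\Lambda^{l-1})$, and because these factors act diagonally on the same index set as $\gamma_{A}$, their contribution enters as the Hadamard product $\gamma_{A}(\Lambda^{l-1})\odot\dot{\psi}(\Lambda^{l-1})$.

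Third, I would handle the outer $W^{l,h}G^{l-1}$ factors exactly as in the $\pv{F^{l}}{W^{l}}$ lemma: restricting to block indices $I,J$ and entries $u,v$, independence of the Gaussian weights forces a $\delta_{uv}$, while the per-head $\tfrac{1}{\sqrt{H}}$ scaling in the definition of $F^{l}$ together with the sum over $H$ heads and $\tfrac{1}{d_{l-1}}G^{l-1^{T}}G^{l-1}\overset{P}{\longrightarrow}\Lambda^{l-1}$ produces the outer factor $\sigma_{w}^{2}\Lambda^{l-1}$. Taking the Frobenius inner product of this $n\times n$ matrix against the $n^{2}\times n^{2}$ blocks of $\sigma_{c}^{2}\left(\gamma_{A}(\Lambda^{l-1})\odot\dot{\psi}(\Lambda^{l-1})\right)$ assembles exactly $\bm\left[\sigma_{w}^{2}\Lambda^{l-1},\; \sigma_{c}^{2}\left(\gamma_{A}(\Lambda^{l-1})\odot\dot{\psi}(\Lambda^{l-1})\right)\right]\otimes I_{d_{l}}$, with the $\delta_{uv}$ supplying the $I_{d_{l}}$ factor. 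Rigorous simultaneous convergence for $\min\{d_{l-1},H\}\to\infty$ then follows from the Master Theorem of \citep{yang1} or Lemma~19 / Theorem~3 of \citep{pmlr-v119-hron20a}, whose hypotheses (in particular the boundedness of $L$) were verified in the GAT* proof-strategy discussion.

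I expect the main obstacle to be disentangling the three factor groups cleanly: specifically, verifying that the diagonal $\SSigma{l,h}$ factors contribute exactly as the Hadamard product $\gamma_{A}(\Lambda^{l-1})\odot\dot{\psi}(\Lambda^{l-1})$ rather than coupling nontrivially with the $\gamma_{A}$ indices, and checking that the cross terms vanish, namely the $c_{1}$--$c_{2}$ cross terms (by independence, as in Lemma \ref{lemmagatgp}) and the $h\neq h'$ head cross terms, so that the batchmultiply structure emerges. The Kronecker-product bookkeeping needed to align $\hat{A}$ and the two concat blocks with the definition of $J_{A}$ is routine but error-prone, and that is where most of the care will be required.
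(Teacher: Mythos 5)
Your proposal matches the paper's own proof in both structure and substance: the same chain-rule factorization into the $W^{l,h}G^{l-1}$ blocks, the central $\hat{A}\concat(\cdot)$ Jacobian (converging to $\sigma_{c}^{2}\gamma_{A}(\Lambda^{l-1})$), and the $\SSigma{l,h}$ factors (giving $\dot{\psi}(\Lambda^{l-1})$ as a Hadamard factor), followed by the $\delta_{uv}$ from Gaussian weight independence, the Frobenius-inner-product assembly into the batchmultiply form, and the appeal to the Master Theorem of \cite{yang2} or the lemmas of \cite{pmlr-v119-hron20a} for rigorous convergence as $\min\{d_{l-1},H\}\to\infty$. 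The only cosmetic deviation is attributing the absence of $c_{1}$--$c_{2}$ cross terms to independence (in fact $\partial L^{l,h}/\partial c^{l,h}$ does not involve $c$ at all, so they vanish by the block structure of the concatenation) and worrying about $h\neq h'$ cross terms that do not arise in this single-head-per-summand lemma; neither affects correctness.
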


\begin{proof}
\begin{gather*}
    \frac{1}{H}\sum_{h}^{H}\left(
    \pv{F^{l}}{\sigma_{1}(L^{l, h})}
    \pv{\sigma_{1}(L^{l, h}}{L^{l, h}}  
    \pv{L^{l, h}}{c^{l, h}} 
\right) 
\left(
    \pv{F^{l}}{\sigma_{1}(L^{l, h})}  
    \pv{\sigma_{1}(L^{l, h}}{L^{l, h}}  
    \pv{L^{l, h}}{c^{l, h}}  
\right)^{T} = \\
\frac{1}{H}\sum_{h}^{H} \left(
    \left( 
        I_{n} \otimes \frac{\sigma_{w}}{\sqrt{d_{l-1}}} W^{l,h}G^{l} \right) \SSigma{l,h} \pv{L^{l, h}}{c^{l, h}}  \left(\pv{L^{l, h}}{c^{l, h}} \right)^{T} \SSigma{l,h}
    \left(I_{n} \otimes \frac{\sigma_{w}}{\sqrt{d_{l-1}}} G^{l^{T}}W^{l,h^{T}} 
\right)
\right) \\
\intertext{Focusing on block indices $I,J$}
 \left(  \pv{F^{l}}{c^{l,h}} \left( \pv{F^{l}}{c^{l,h}}\right)^{T} \right)_{IJ} =  \frac{\sigma_{w}^{2}}{Hd_{l-1}} \sum_{h}^{H} 
      W^{l,h}G^{l}  \SSigma{l,h}_{II} \biggl[ \pv{L^{l, h}}{c^{l, h}}  \left(\pv{L^{l, h}}{c^{l, h}} \right)^{T} \biggr]_{IJ}  \SSigma{l,h}_{JJ}
     G^{l-1^{T}}W^{l,h^{T}} \\
\intertext{Focusing on elements with indices $u,v$ of $I,J$}  
\hspace*{-1.5cm} \left( \pv{F^{l}}{c^{l,h}} \left( \pv{F^{l}}{c^{l,h}}\right)^{T} \right)_{IJ_{uv}} = 
\sigma^{2}_{w}  \frac{1}{Hd_{l-1}}\sum_{h}^{H} \sum_{st}^{n} \biggr[\pv{L^{l, h}}{c^{l, h}}  \left(\pv{L^{l, h}}{c^{l, h}} \right)^{T} \biggl]_{IJ_{st}}  \langle W^{l,h}_{u}, G^{l-1}_{ \cdot s} \rangle \langle W^{l,h}_{v}, G^{l-1}_{ \cdot t} \rangle \SSigma{l,h}_{II_{ss}} \SSigma{l,h}_{JJ_{tt}} \\
 =\sigma^{2}_{w} \frac{1}{H}\sum_{h}^{H} \sum_{st}^{n}  \biggr[\pv{L^{l, h}}{c^{l, h}}  \left(\pv{L^{l, h}}{c^{l, h}} \right)^{T} \biggl]_{IJ_{st}} \SSigma{l,h}_{II_{ss}} \SSigma{l,h}_{JJ_{tt}} \frac{1}{d_{l-1}}\sum_{ab}^{d_{l-1}} W^{l,h}_{ua}W^{l,h}_{vb} G^{l-1}_{as} G^{l-1}_{bt} 
 \intertext{as a reminder}
 \biggr[\pv{L^{l, h}}{c^{l, h}}  \left(\pv{L^{l, h}}{c^{l, h}} \right)^{T} \biggl]_{IJ_{st}} = \frac{\sigma_{c}^{2}}{2d_{l-1}}\langle \hat{A} \concat((\mathbf{1}_{n} \otimes G^{T}), (G^{T} \otimes \mathbf{1}_{n}))_{I_{s}}, \hat{A} \concat((\mathbf{1}_{n} \otimes G^{T}), (G^{T} \otimes \mathbf{1}_{n}))_{J_{t}}   \rangle \\
 = \sigma_{c}^{2}\hat{A}_{II_{ss}} \hat{A}_{JJ_{tt}} 
 \frac{\langle G_{\cdot I_{s}}, G_{\cdot J_{t}} \rangle +   \langle G_{\cdot I_{s}}, G_{\cdot J_{t}} \rangle}{2d_{l-1}} 
 \\
 \\
 = \sigma^{2}_{w}\sigma_{c}^{2} \frac{1}{H}\sum_{h}^{H} \sum_{st}^{n} 
\hat{A}_{II_{ss}} \hat{A}_{JJ_{tt}} 
 \frac{\langle G_{\cdot I_{s}}, G_{\cdot J_{t}} \rangle +   \langle G_{\cdot I_{s}}, G_{\cdot J_{t}} \rangle}{2d_{l-1}} \SSigma{l,h}_{II_{ss}} \SSigma{l,h}_{JJ_{tt}} \frac{1}{d_{l-1}}\sum_{ab}^{d_{l-1}} W^{l,h}_{ua}W^{l,h}_{vb} G^{l-1}_{as} G^{l-1}_{bt}
 \intertext{converging in probability for $\text{min}\{d_{l-1},H\} \rightarrow \infty$}
= \delta_{uv} \sigma^{2}_{w} \sum_{st}^{n}   (J_{A}  \begin{psmallmatrix}
    \Lambda^{l-1} & \Lambda^{l-1} \\
    \Lambda^{l-1} & \Lambda^{l-1} \\
\end{psmallmatrix}  J_{A}^{T})_{IJ_{st}}  \dot{\psi}(\Lambda^{l-1})_{IJ_{st}} \dot{\Lambda}^{l-1}_{st} 
\intertext{using our previously defined shorthand definition $\gamma_{A}(\Omega) := J_{A}  \begin{psmallmatrix}
    \Omega & \Omega \\
    \Omega & \Omega \\
\end{psmallmatrix}  J_{A}^{T}$ we end up with}
= \delta_{uv} \sigma^{2}_{w} \sum_{st}^{n}   \gamma_{A}(\Lambda^{l-1})_{IJ_{st}}  \dot{\psi}(\Lambda^{l-1})_{IJ_{st}} \Lambda^{l-1}_{st} 
\end{gather*}
Similar to the previous lemma to conclude convergence in probability for  $\text{min}\{d_{l-1},H\} \to \infty $ one can use \cite[Lemma 21]{pmlr-v119-hron20a} or \cite[Master Theorem]{yang2}.
\end{proof}

\begin{lemma} \label{ref1}
    \begin{gather*}
\pv{F^{l}}{G^{l-1}}  \pv{G^{l-1}}{F^{l-1}}  \left( \Theta^{l-1}  \otimes I_{d_{l-1}} \right) \left(\pv{F^{l}}{G^{l-1}}   \pv{G^{l-1}}{F^{l-1}} \right)^{T}  \\
\overset{P}{\longrightarrow}  \\
\bm \left[ \sigma_{w}^{2} \Lambda^{l-1}, \sigma^{2}_{c} \left(\gamma_{A} (\Theta^{l-1} \odot \dot{\Lambda}^{l-1}) \odot  \dot{\psi}(\Lambda^{l-1}) \right) \right]
+ \bm \left[ \sigma_{w}^{2} \left(
            \Theta^{l-1} \odot \dot{\Lambda}^{l-1}\right), \psi(\Lambda^{l-1}) \right]
\end{gather*}
\end{lemma}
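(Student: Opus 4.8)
The plan is to follow the template of the two preceding lemmas, using that $\pv{F^{l}}{G^{l-1}}$, as recorded in the derivatives list, splits additively into an \emph{attention path} (the summand carrying $\pv{L^{l,h}}{G^{l-1}}$ through the factor $\SSigma{l,h}$) and a \emph{direct path} (the summand $\sigma_{1}(L^{l,h})^{T}\otimes\frac{\sigma_{w}}{\sqrt{d_{l-1}}}W^{l,h}$). First I would substitute this decomposition together with $\pv{G^{l-1}}{F^{l-1}}=\SSSigma{l-1}=\diag(\vect(\dot{\sigma}_{2}(F^{l-1})))$, so that the quadratic form becomes a head double-sum $\frac{1}{H}\sum_{h,h'}$ of four families of products — attention--attention, direct--direct, and two mixed attention--direct terms — each sandwiching the middle factor $\SSSigma{l-1}(\Theta^{l-1}\otimes I_{d_{l-1}})\SSSigma{l-1}$.

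Next I would dispose of the two mixed families. They pair a factor that flows through the attention logits $L^{l,h}$ (and therefore carries the independent centred weights $c^{l,h}$) with one that does not, so their cross-covariance vanishes as $\min\{d_{l-1},H\}\to\infty$; this is exactly the decorrelation already exploited in Lemma~\ref{ref0}, controlled by \cite[Lemma 21]{pmlr-v119-hron20a} or \cite[Master Theorem]{yang2}. That only the two diagonal families survive is consistent with Theorem~\ref{thm:gat2}, where precisely two of its four batchmultiply terms originate in this recursive contribution.

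It then remains to identify the surviving families. For the direct--direct family the factors $\sigma_{1}(L^{l,h})$ contract to $\psi(\Lambda^{l-1})$ exactly as in the $W^{l}$-lemma, while the two copies of $\frac{\sigma_{w}}{\sqrt{d_{l-1}}}W^{l,h}$ enclosing $\SSSigma{l-1}(\Theta^{l-1}\otimes I_{d_{l-1}})\SSSigma{l-1}$ average, by the Law of Large Numbers over the width $d_{l-1}$, to $\sigma_{w}^{2}(\Theta^{l-1}\odot\dot{\Lambda}^{l-1})$ — the two $\SSSigma{l-1}$ factors supplying the $\dot{\Lambda}^{l-1}$ and the middle matrix the $\Theta^{l-1}$ — giving $\bm\left[\sigma_{w}^{2}(\Theta^{l-1}\odot\dot{\Lambda}^{l-1}),\psi(\Lambda^{l-1})\right]$. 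For the attention--attention family the factor $\frac{\sigma_{w}}{\sqrt{d_{l-1}}}W^{l,h}G^{l-1}$ produces $\sigma_{w}^{2}\Lambda^{l-1}$ in the $n\times n$ batchmultiply slot and the factors $\SSigma{l,h}$ produce $\dot{\psi}(\Lambda^{l-1})$, exactly as in Lemma~\ref{ref0}; the sole difference is that the feature pairs fed into $\gamma_{A}$ via $\pv{L^{l,h}}{G^{l-1}}$ are now propagated through $\SSSigma{l-1}(\Theta^{l-1}\otimes I_{d_{l-1}})\SSSigma{l-1}$, so that $\gamma_{A}(\Lambda^{l-1})$ is replaced by $\gamma_{A}(\Theta^{l-1}\odot\dot{\Lambda}^{l-1})$, yielding $\bm\left[\sigma_{w}^{2}\Lambda^{l-1},\sigma_{c}^{2}\left(\gamma_{A}(\Theta^{l-1}\odot\dot{\Lambda}^{l-1})\odot\dot{\psi}(\Lambda^{l-1})\right)\right]$. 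Summing the two families proves the claim.

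The hard part will be justifying the simultaneous limit $\min\{d_{l-1},H\}\to\infty$: both the collapse of the weight sums to Kronecker-delta ($\delta_{uv}$) terms and the reassembly of the index sums $\sum_{st}$ into the batchmultiply operator must be established jointly rather than in sequence, which is precisely where the Tensor Programs Master Theorem \cite{yang2} or the moment estimates of \cite{pmlr-v119-hron20a} are indispensable. The remaining obstacle is purely mechanical: tracking the nested Kronecker and block indices through the attention--attention family without sign or transpose errors, already the most error-prone feature of the preceding two lemmas.
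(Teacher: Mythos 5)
Your proposal is correct and follows essentially the same route as the paper's proof: the same splitting of $\pv{F^{l}}{G^{l-1}}$ into the attention path and the direct path, producing the attention--attention, direct--direct and mixed families under the head double-sum, with the first two converging to the two batchmultiply terms exactly as you identify and the mixed terms vanishing in probability. The only difference is bookkeeping: where you cite the decorrelation lemmas of \cite{pmlr-v119-hron20a}/\cite{yang2} for the mixed terms, the paper disposes of them in a dedicated lemma via Chebyshev's inequality, using that their expectation is zero (the attention-path factor carries an explicit linear factor of the centred $c^{l,h}$ through $\partial\vect(L^{l,h})/\partial\vect(G^{l-1})$ --- note both paths depend on the attention weights through $\sigma_{1}(L)$ resp.\ $\dot{\sigma}_{1}(L)$, so the vanishing is not mere independence of one factor from $c$) together with a variance bound that scales away as $\min\{H,d_{l-1}\}\to\infty$.
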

\begin{proof}
\begin{gather*}
\left( \pv{F^{l}}{G^{l-1}}  \pv{G^{l-1}}{F^{l-1}} \right) \left( \Theta^{l-1}  \otimes I_{d_{l-1}} \right)               \left(\pv{F^{l}}{G^{l-1}}   \pv{G^{l-1}}{F^{l-1}} \right)^{T} = \\
\pv{F^{l}}{G^{l-1}}   \hat{\Theta}^{l}   \pv{F^{l}}{G^{l-1}}^{T}
\intertext{$\hat{\Theta}^{l}_{IJ} := \Theta^{l-1}_{ij}  \SSigma{l,h}_{II} \SSigma{l,h'}_{JJ}$, similar to \cite[Section B2.2 Indirect Contributions]{pmlr-v119-hron20a}. Also note that for $d_{l-1}\rightarrow \infty $, 
$\hat{\Theta}^{l} \overset{P}{\rightarrow}  (\Theta^{l - 1} \odot \dot{\Lambda}^{l-1}) \otimes I_{d_{l-1}}$.
(see for example \cite[Master Theorem]{yang1}). Now continuing with the simplification.}
 \frac{1}{H}\sum_{h, h'}^{H} \biggr[ \left( I_{n} \otimes \left(\frac{\sigma_{w}}{\sqrt{d_{l}}}W^{l,h} G^{l-1} \right) \right)  \SSigma{l,h} \p{\vect(L^{l, h})}{\vect(G^{h-1})} + \left( \sigma_{1}(L^{l,h}) \otimes \frac{\sigma_{w}}{\sqrt{d_{l}}}W^{l,h} \right)   \biggl]  \\ 
 \cdot \  \hat{\Theta}^{l} 
 \biggr[ \left( I_{n} \otimes \left(\frac{\sigma_{w}}{\sqrt{d_{l}}}W^{l,h'} G^{l-1} \right) \right)  \SSigma{l,h'} \p{\vect(L^{l, h})}{\vect(G^{h-1})} + \left( \sigma_{1}(L^{l,h'}) \otimes \frac{\sigma_{w}}{\sqrt{d_{l-1}}}W^{l,h'} \right)   \biggl]^{T} \\
=\frac{1}{H}\sum_{h, h'}^{H} \left( A^{l,h,h'} + B^{l,h,h'} + C^{l,h,h'} + C^{l,h,h'^{T}}\right) \\
\end{gather*}
Now we will simplify this final expression using the following lemmas to conclude this proof.
\end{proof}

\begin{lemma}
\begin{gather*}
        \frac{1}{H} \sum_{h,h'}^{H} A^{l,h,h'} \overset{P}{\rightarrow} \bm \left[ \sigma_{w}^{2} \Lambda^{l-1}, \sigma^{2}_{c} \left( \gamma_{A}(\Theta^{l-1} \odot \dot{\Lambda}^{l-1}) \odot  \dot{\psi}(\Lambda^{l-1}) \right) \right] \otimes I_{d_{l}}
\end{gather*}
\end{lemma}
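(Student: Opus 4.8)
The plan is to reduce the evaluation of $\frac{1}{H}\sum_{h,h'}A^{l,h,h'}$ to the computation already performed in the proof of Lemma~\ref{ref0}, the only genuinely new ingredient being the matrix $\hat{\Theta}^{l}$ inserted between the two attention-path Jacobians. Recall from the proof of Lemma~\ref{ref1} that $A^{l,h,h'}$ is the summand in which both copies of $\pv{F^{l}}{G^{l-1}}$ enter through their attention branch, i.e.
\begin{gather*}
A^{l,h,h'} = \left( I_{n} \otimes \tfrac{\sigma_{w}}{\sqrt{d_{l-1}}} W^{l,h} G^{l-1} \right)\SSigma{l,h}\,\p{\vect(L^{l,h})}{\vect(G^{h-1})}\;\hat{\Theta}^{l}\;\left(\p{\vect(L^{l,h'})}{\vect(G^{h-1})}\right)^{T}\SSigma{l,h'}\left( I_{n} \otimes \tfrac{\sigma_{w}}{\sqrt{d_{l-1}}} G^{l-1^{T}} W^{l,h'^{T}} \right).
\end{gather*}
First I would substitute the explicit form of $\p{\vect(L^{l,h})}{\vect(G^{h-1})}$ from the Derivatives subsection and pass to a fixed block $(I,J)$ and fixed scalar entries $(u,v)$, exactly as in Lemma~\ref{ref0}. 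The Kronecker factor $\hat{A}\,\concat(\mathbf{1}_{n}\otimes I_{n}, I_{n}\otimes\mathbf{1}_{n})$ carried by $\p{\vect(L^{l,h})}{\vect(G^{h-1})}$ is precisely the object whose bilinear form produced the operator $\gamma_{A}(\,\cdot\,)$ in Lemma~\ref{ref0}; the difference here is only that the two such factors are separated by $\hat{\Theta}^{l}$ rather than contracted directly.

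Next I would invoke the convergence $\hat{\Theta}^{l}\overset{P}{\longrightarrow}(\Theta^{l-1}\odot\dot{\Lambda}^{l-1})\otimes I_{d_{l-1}}$ already recorded in the proof of Lemma~\ref{ref1}. Substituting this deterministic limit in place of $\hat{\Theta}^{l}$ replaces the normalized inner products $\tfrac{1}{d_{l-1}}\langle G_{\cdot s}, G_{\cdot t}\rangle$ that appear inside $\p{\vect(L^{l,h})}{\vect(G^{h-1})}\,\hat{\Theta}^{l}\,(\p{\vect(L^{l,h'})}{\vect(G^{h-1})})^{T}$ — and which in Lemma~\ref{ref0} converged to $\Lambda^{l-1}_{st}$ — by weighted versions converging to $(\Theta^{l-1}\odot\dot{\Lambda}^{l-1})_{st}$. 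Consequently the $\gamma_{A}$-structure is now evaluated at $\Theta^{l-1}\odot\dot{\Lambda}^{l-1}$ instead of at $\Lambda^{l-1}$. The two attention Jacobians $\SSigma{l,h}$ and $\SSigma{l,h'}$, being $\diag(\vect(\dot{\sigma}_{1}(L^{l,h})))$, contribute the factor $\dot{\psi}(\Lambda^{l-1})$ upon taking expectations over the joint Gaussian limit of $\vect(L^{l,h})$ from Lemma~\ref{lemmagatgp}, exactly as in Lemma~\ref{ref0}; the outer factors $W^{l,h}G^{l-1}$ together with the average $\tfrac{1}{H}\sum_{h,h'}$ reproduce the batch-multiply wrapper $\bm[\sigma_{w}^{2}\Lambda^{l-1},\,\cdot\,]$ and force the Kronecker symbol $\delta_{uv}$, hence the factor $I_{d_{l}}$. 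The off-diagonal head pairs $h\neq h'$ vanish in probability because $W^{l,h}$ and $W^{l,h'}$ are then independent and centered, so only the diagonal $h=h'$ survives the $H\to\infty$ average. Collecting these pieces yields $\bm[\sigma_{w}^{2}\Lambda^{l-1},\,\sigma_{c}^{2}(\gamma_{A}(\Theta^{l-1}\odot\dot{\Lambda}^{l-1})\odot\dot{\psi}(\Lambda^{l-1}))]\otimes I_{d_{l}}$, as claimed.

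For rigorous convergence in probability under the simultaneous limit $\min\{d_{l-1},H\}\to\infty$ (rather than the iterated limits suggested above), I would appeal to the same machinery used in the surrounding lemmas, namely \cite[Lemma~21]{pmlr-v119-hron20a} or \cite[Master Theorem]{yang2}, together with the boundedness estimates on $L$ established in the Proof Strategy subsection, which license replacing each normalized sum by its Gaussian expectation jointly across all widths. The step I expect to be the main obstacle is the bookkeeping of the triple contraction $\p{\vect(L^{l,h})}{\vect(G^{h-1})}\,\hat{\Theta}^{l}\,(\p{\vect(L^{l,h'})}{\vect(G^{h-1})})^{T}$: one must verify that, after inserting the limit of $\hat{\Theta}^{l}$, it produces exactly $\gamma_{A}(\Theta^{l-1}\odot\dot{\Lambda}^{l-1})$ with no additional cross terms, and that the two independent sources of randomness — $d_{l-1}$ acting through $W^{l,h}$ and the inner products, and $H$ acting through the head average — concentrate together rather than only sequentially.
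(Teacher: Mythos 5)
Your proposal follows essentially the same route as the paper's proof: expand $A^{l,h,h'}$ entrywise over blocks $(I,J)$ and indices $(u,v)$ using the explicit Jacobians, insert the limit $\hat{\Theta}^{l}\overset{P}{\rightarrow}(\Theta^{l-1}\odot\dot{\Lambda}^{l-1})\otimes I_{d_{l-1}}$ so the middle sandwich becomes $\sigma_{c}^{2}\gamma_{A}(\Theta^{l-1}\odot\dot{\Lambda}^{l-1})$, extract $\dot{\psi}(\Lambda^{l-1})$ from the $\dot{\sigma}_{1}$ factors and the $\bm[\sigma_{w}^{2}\Lambda^{l-1},\cdot]\otimes I_{d_{l}}$ wrapper (with $\delta_{uv}$, $\delta_{hh'}$) from the outer $W^{l,h}G^{l-1}$ contractions and head average, then appeal to the Hron et al./Yang machinery for the simultaneous limit $\min\{d_{l-1},H\}\to\infty$. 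The one point you flag as the main obstacle — that the middle contraction yields $\gamma_{A}(\Theta^{l-1}\odot\dot{\Lambda}^{l-1})$ cleanly — is resolved exactly as you suspect: the cross terms $c_{1}^{T}(\cdot)c_{2}$ vanish since $c_{1},c_{2}$ are independent and centered, and the diagonal quadratic forms $c_{i}^{T}c_{i}/d_{l-1}$ concentrate at one, which is also how the paper's computation (implicitly) proceeds.
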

\begin{proof}
    \begin{gather*}
A^{l,h,h'} = \biggl[ \left( I_{n} \otimes \left(\frac{\sigma_{w}}{\sqrt{d_{l}}}W^{l,h} G^{l-1} \right) \right)  \SSigma{l,h} \p{\vect(L^{l, h})}{\vect(G^{h-1})} \biggr] \hat{\Theta}^{l}   
\biggl[ \left( I_{n} \otimes \left(\frac{\sigma_{w}}{\sqrt{d_{l-1}}}W^{l,h'} G^{l-1} \right) \right)  \SSigma{l,h'} \p{\vect(L^{l, h})}{\vect(G^{h-1})} \biggr]^{T} \\
= \left( I_{n} \otimes \left(\frac{\sigma_{w}}{\sqrt{d_{l}}}W^{l,h} G^{l-1} \right) \right)\SSigma{l,h}
\p{\vect(L^{l, h})}{\vect(G^{h-1})} \hat{\Theta}^{l}
\left( \p{\vect(L^{l, h})}{\vect(G^{h-1})} \right)^{T} \SSigma{l,h'} \left( I_{n} \otimes \left(\frac{\sigma_{w}}{\sqrt{d_{l-1}}} G^{l-1^{T}} W^{l,h'^{T}} \right) \right)
\intertext{focusing on block indices $I,J$}
A^{l,h,h'}_{IJ} = \frac{\sigma_{w}^{2}}{d_{l-1}}W^{l,h} G^{l-1} \SSigma{l,h}_{II} \left[\p{\vect(L^{l, h})}{\vect(G^{h-1})} \hat{\Theta}^{l}\left( \p{\vect(L^{l, h})}{\vect(G^{h-1})} \right)^{T}\right]_{IJ}\SSigma{l,h'}_{JJ}G^{l-1^{T}} W^{l,h'^{T}} \\
 \left[\p{\vect(L^{l, h})}{\vect(G^{h-1})} \hat{\Theta}^{l}\left( \p{\vect(L^{l, h})}{\vect(G^{h-1})} \right)^{T}\right]_{IJ_{st}} = \frac{\sigma_{c}^{2}}{2d_{l-1}}
 \Tilde{A}_{II_{ss}}  \Tilde{A}_{JJ_{tt}}  \left( c_{1}^{T} \hat{\Theta}^{l}_{st} c_{1} +  c_{1}^{T} \hat{\Theta}^{l}_{sJ} c_{2}  +               c_{2}^{T} \hat{\Theta}^{l}_{It} c_{1} +  c_{2}^{T} \hat{\Theta}^{l}_{IJ} c_{2} \right) 
 \intertext{with $\hat{\Theta}^{l}_{st} \inR{d_{l-1} \times d_{l-1}}$.Focusing on element indices $u,v$ }
A^{l,h,h'}_{IJ_{uv}} = \frac{\sigma^{2}_{w}}{d_{l-1}} \sum_{st} \left[\p{\vect(L^{l, h})}{\vect(G^{h-1})} \hat{\Theta}^{l}\left( \p{\vect(L^{l, h})}{\vect(G^{h-1})} \right)^{T}\right]_{IJ_{st}}   \SSigma{l,h}_{II_{ss}} \SSigma{l,h'}_{JJ_{tt}} \langle W_{u}, G^{l-1}_{ \cdot s} \rangle \langle W_{v}, G^{l-1}_{ \cdot t} \rangle \\
A^{l,h,h'}_{IJ_{uv}} =\frac{\sigma^{2}_{w}\sigma^{2}_{c}}{2d_{l-1}^{2}} \sum_{st}  \Tilde{A}_{II_{ss}}  \Tilde{A}_{JJ_{tt}}  \left( c_{1}^{T} \hat{\Theta}^{l}_{st} c_{1} +  c_{1}^{T} \hat{\Theta}^{l}_{sJ} c_{2}  +               c_{2}^{T} \hat{\Theta}^{l}_{It} c_{1} +  c_{2}^{T} \hat{\Theta}^{l}_{IJ} c_{2} \right) 
\SSigma{l,h}_{II_{ss}} \SSigma{l,h'}_{JJ_{tt}} \langle W_{u}, G^{l-1}_{ \cdot s} \rangle \langle W_{v}, G^{l-1}_{ \cdot t} \rangle 
\intertext{follows the previous ideas and the the fact that $\hat{\Theta}^{l} \overset{P}{\rightarrow}  (\Theta^{l - 1} \odot \dot{\Lambda}^{l-1}) \otimes I_{d_{l-1}}$ as in \ref{ref1}}
    \intertext{Convergencence in probability for  $\text{min}\{d_{l-1},H\} \rightarrow \infty$}
    \frac{1}{H} \sum_{h,h'}A^{l,h,h'}_{IJ_{uv}} \overset{P}{\longrightarrow}  \delta_{uv} \delta_{hh'}  \sum_{st}   \Tilde{A}_{II_{ss}}  \Tilde{A}_{JJ_{tt}} 
\dot{\Lambda}^{l-1}_{IJ_{st}} \Theta^{l-1}_{st}\Theta^{l-1}_{ij} \Lambda^{l-1}_{st}
\end{gather*}
which can be conducted using \cite[Master Theorem]{yang2}  or \cite[Lemma 23]{pmlr-v119-hron20a}.
\end{proof}

\begin{lemma}
\begin{gather*}
\frac{1}{H} \sum_{h,h'}^{H} B^{l,h,h'} \overset{P}{\rightarrow} \bm \left[ \sigma_{w}^{2} \left(
            \Theta^{l-1} \odot \dot{\Lambda}^{l-1}\right), \psi(\Lambda^{l-1}) \right]   \otimes I_{d_{l-1}}
\end{gather*}

\begin{proof}
\begin{gather*}
    B^{l,h,h'} = \left( \sigma_{1}(L^{l,h}) \otimes \frac{\sigma_{w}}{\sqrt{d_{l}}} W^{l,h} \right)  
    \hat{\Theta}^{l} \left( \sigma_{1}(L^{l,h'}) \otimes \frac{\sigma_{w}}{\sqrt{d_{l}}}W^{l,h'} \right)^{T} \\
B^{l,h,h'}_{IJ} = \sum_{ST} \left( \sigma_{1}(L^{l,h}) \otimes I_{d_{l}} \right)_{IS} \left[ \left( I_{n} \otimes \frac{\sigma_{w}}{\sqrt{d_{l}}} W^{l,h} \right)   
    \hat{\Theta}^{l} \left( I_{d_{n}} \otimes \frac{\sigma_{w}}{\sqrt{d_{l}}}W^{l,h'^{T}} \right) \right]_{ST}
\left( \sigma_{1}(L^{l,h'}) \otimes I_{d_{l}} \right)_{TJ} \\
B^{l,h,h'}_{IJ} = \sum_{ST}  \sigma_{1}(L^{l,h})_{is}  \frac{\sigma_{w}^{2}}{d_{l}} W^{l,h'}  \hat{\Theta}^{l}_{ST} W^{l,h'^{T}} \sigma_{1}(L^{l,h'})_{tj} \\
\intertext{finally proof in convergence}
\frac{1}{H}\sum_{h}^{H} B^{l,h,h'}_{IJ} \overset{P}{\rightarrow}  \sum_{st}^{n} \Exp{\sigma_{1}(L^{l,h})_{is} \sigma_{1}(L^{l,h'})_{tj}}  \Theta^{l-1}_{st} \dot{\Lambda}^{l-1}_{st} = \sum_{st}^{n} \psi(\Lambda^{l-1})_{IJ_{st}} \Theta^{l-1}_{st} \dot{\Lambda}^{l-1}_{st}
\end{gather*}
To proof convergence in probability for min$\{d_{l-1},H\} \to \infty$
on can use \cite[Master Theorem]{yang2}  or \cite[Lemma 22]{pmlr-v119-hron20a}.
\end{proof}
\end{lemma}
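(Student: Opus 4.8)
The plan is to expand $B^{l,h,h'}$ with the mixed-product property of the Kronecker product, reduce it to a scalar double sum over node indices, and then discharge the joint limit $\min\{d_{l-1},H\}\to\infty$ with the same Tensor-Programs machinery used for Lemma \ref{lemmagatgp} and the two preceding lemmas. The key observation is that $B^{l,h,h'}$ is structurally the \emph{same} object as the covariance term in Theorem \ref{thmgattt}, with the inner factor $G^{l-1^{T}}G^{l-1}$ replaced by $\hat{\Theta}^l$; so the computation should parallel the GP proof almost line for line.

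First I would unfold the Kronecker structure. Writing $\sigma_1(L^{l,h})\otimes\frac{\sigma_w}{\sqrt{d_l}}W^{l,h}=(\sigma_1(L^{l,h})\otimes I_{d_l})\bigl(I_n\otimes\frac{\sigma_w}{\sqrt{d_l}}W^{l,h}\bigr)$ and selecting block indices $I,J$ (node indices $i,j$), the identity $(P\otimes Q)(R\otimes S)=(PR\otimes QS)$ collapses the outer attention factors onto the node axis, leaving
\[
B^{l,h,h'}_{IJ}=\sum_{ST}\sigma_1(L^{l,h})_{is}\,\frac{\sigma_w^2}{d_l}\,W^{l,h}\hat{\Theta}^l_{ST}W^{l,h'^{T}}\,\sigma_1(L^{l,h'})_{tj}.
\]
I would then substitute the two facts already recorded in Lemma \ref{ref1}: the block form $\hat{\Theta}^l_{IJ}=\Theta^{l-1}_{ij}\SSigma{l,h}_{II}\SSigma{l,h'}_{JJ}$, and its limit $\hat{\Theta}^l\overset{P}{\longrightarrow}(\Theta^{l-1}\odot\dot{\Lambda}^{l-1})\otimes I_{d_{l-1}}$ as $d_{l-1}\to\infty$.

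Next, passing to element indices $u,v$, the inner quadratic form $\frac{\sigma_w^2}{d_l}W^{l,h}\hat{\Theta}^l_{ST}W^{l,h'^{T}}$ is averaged over the weights. Since distinct heads carry independent $W^{l,h}$, the Gaussian expectation yields $\delta_{uv}\delta_{hh'}$: only the diagonal heads survive, and each contributes $\sigma_w^2(\Theta^{l-1}\odot\dot{\Lambda}^{l-1})_{st}$. Averaging $\frac1H\sum_h$ and applying the Law of Large Numbers decouples this weight factor from the attention factor, so that the pair $\sigma_1(L^{l,h})_{is}\sigma_1(L^{l,h'})_{tj}$ is replaced by its expectation $\psi(\Lambda^{l-1})_{IJ_{st}}$. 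Collecting terms gives the $(I,J)$ block
\[
\frac1H\sum_{h,h'}^H B^{l,h,h'}_{IJ}\overset{P}{\longrightarrow}\sum_{st}^n\psi(\Lambda^{l-1})_{IJ_{st}}\,\Theta^{l-1}_{st}\,\dot{\Lambda}^{l-1}_{st},
\]
a scalar multiple of the identity, which is exactly the $(I,J)$ block of $\bm\left[\sigma_w^2\bigl(\Theta^{l-1}\odot\dot{\Lambda}^{l-1}\bigr),\psi(\Lambda^{l-1})\right]$ by the definition of batchmultiply.

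The main obstacle is making the limit \emph{joint} rather than iterated. The sketch above first sends $d_{l-1}\to\infty$ (to linearize $\hat{\Theta}^l$ and factor the weight average) and then $H\to\infty$ (for the head-wise Law of Large Numbers), but $L^{l,h}$, $\SSigma{l,h}$, and $W^{l,h}$ are all measurable with respect to the \emph{same} Gaussian features, so they are dependent and the two limits cannot be naively exchanged. I would close this gap exactly as the surrounding lemmas do: invoke the Master Theorem of \cite{yang2} (equivalently \cite[Lemma 22]{pmlr-v119-hron20a}), which certifies that every polynomially-bounded functional of the program concentrates on its deterministic limit simultaneously as $\min\{d_{l-1},H\}\to\infty$, using the boundedness of $L$ verified earlier via \cite[Lemma 19]{gp1} and \cite[Lemma 32]{pmlr-v119-hron20a}. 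This is the only step requiring external machinery; the algebra is routine once the block form and limit of $\hat{\Theta}^l$ are in hand.
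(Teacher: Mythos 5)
Your proposal follows the paper's own route essentially line for line: the same mixed-product Kronecker expansion to the block form $B^{l,h,h'}_{IJ}=\sum_{ST}\sigma_{1}(L^{l,h})_{is}\,\frac{\sigma_{w}^{2}}{d_{l}}\,W^{l,h}\hat{\Theta}^{l}_{ST}W^{l,h'^{T}}\sigma_{1}(L^{l,h'})_{tj}$, the same substitution of the limit $\hat{\Theta}^{l}\overset{P}{\rightarrow}(\Theta^{l-1}\odot\dot{\Lambda}^{l-1})\otimes I_{d_{l-1}}$, and the same appeal to the Master Theorem of \cite{yang2} (equivalently \cite[Lemma 22]{pmlr-v119-hron20a}) to make the limit joint in $\min\{d_{l-1},H\}\to\infty$. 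Your added detail on the weight average producing $\delta_{uv}\delta_{hh'}$ (so only diagonal head pairs survive) and your explicit flagging of the iterated-versus-joint limit issue are correct refinements of what the paper leaves implicit, not a different argument.
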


\begin{lemma}
\begin{gather*}
    \frac{1}{H} \sum_{h,h'}^{H} C^{l,h,h'} \overset{P}{\rightarrow} 0
\end{gather*}
\end{lemma}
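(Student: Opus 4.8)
The plan is to show that the attention–value cross term is negligible in the limit, so that only $A^{l,h,h'}$ and $B^{l,h,h'}$ survive in Lemma \ref{ref1}. First I would write $C^{l,h,h'}$ out explicitly as the product of the two gradient paths appearing in $\pv{F^{l}}{G^{l-1}}$, the \emph{attention path} $\left(I_{n}\otimes\frac{\sigma_{w}}{\sqrt{d_{l-1}}}W^{l,h}G^{l-1}\right)\SSigma{l,h}\pv{L^{l,h}}{G^{l-1}}$ and the \emph{value path} $\sigma_{1}(L^{l,h})\otimes\frac{\sigma_{w}}{\sqrt{d_{l-1}}}W^{l,h}$, giving
\begin{align*}
C^{l,h,h'} = \left(I_{n}\otimes\tfrac{\sigma_{w}}{\sqrt{d_{l-1}}}W^{l,h}G^{l-1}\right)\SSigma{l,h}\,\pv{L^{l,h}}{G^{l-1}}\;\hat{\Theta}^{l}\;\left(\sigma_{1}(L^{l,h'})\otimes\tfrac{\sigma_{w}}{\sqrt{d_{l-1}}}W^{l,h'}\right)^{T}.
\end{align*}
I would then pass to block indices $I,J$ and element indices $u,v$ exactly as in the proofs of the $A^{l,h,h'}$ and $B^{l,h,h'}$ lemmas, and take the expectation over the freshly sampled weights $W^{l,h},c^{l,h}$, which are independent of $G^{l-1}$, of $\Theta^{l-1}$, and of $\hat{\Theta}^{l}$ except through its $\SSigma$ factors (which depend on the $c$'s only).

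Next I would split the double sum into the off-diagonal part $h\neq h'$ and the diagonal part $h=h'$. For $h\neq h'$ the pairs $(W^{l,h},c^{l,h})$ and $(W^{l,h'},c^{l,h'})$ are independent, and the value weight $W^{l,h}$ enters the attention path exactly once while $W^{l,h'}$ enters the transposed value path exactly once; since $\hat{\Theta}^{l}$ does not depend on either value weight, $\Exp{C^{l,h,h'}}=0$ because each enters to the first power with mean zero. Conditional independence across disjoint head pairs then forces $\frac{1}{H}\sum_{h\neq h'}C^{l,h,h'}\overset{P}{\longrightarrow}0$ by the Law of Large Numbers.

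The crux is the diagonal $h=h'$. Contracting the two copies of $W^{l,h}$ (one inside the attention path's value $W^{l,h}G^{l-1}$, one in the value path) produces $\Exp{W^{l,h}_{ua}W^{l,h}_{vb}}=\delta_{uv}\delta_{ab}$ and pins the contraction index, leaving a single Gaussian integral over the attention weight $c^{l,h}$. Here is the asymmetry with $A^{l,h,h}$: the attention path carries exactly \emph{one} explicit linear factor of $c^{l,h}$, coming from $\pv{L^{l,h}}{G^{l-1}}$. In $A^{l,h,h}$ the two such factors Wick-pair with each other at $O(1)$; in $C^{l,h,h}$ the lone factor has no explicit partner and must instead be contracted, via Gaussian integration by parts, against the $c^{l,h}$ buried inside $\dot{\sigma}_{1}(L^{l,h})$ and $\sigma_{1}(L^{l,h})$. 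Each such cross-covariance carries the additional $\frac{\sigma_{c}}{\sqrt{2d_{l-1}}}$ from the definition of $L^{l,h}$, so the surviving diagonal contribution is suppressed by a factor $\tfrac{1}{\sqrt{d_{l-1}}}$ relative to $A^{l,h,h}$ and vanishes as $d_{l-1}\to\infty$. Hence $\frac{1}{H}\sum_{h}C^{l,h,h}\overset{P}{\longrightarrow}0$, and the transposed sum $\frac{1}{H}\sum_{h,h'}C^{l,h,h'^{T}}$ follows by transposition.

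The main obstacle is making this last step fully rigorous: controlling the mixed expectation over $W^{l,h}$ and $c^{l,h}$ simultaneously in the regime $\min\{d_{l-1},H\}\to\infty$, rather than sequentially, and verifying that the boundedness and polynomial-growth hypotheses justify interchanging limit and expectation. As with the $A^{l,h,h'}$ and $B^{l,h,h'}$ lemmas, I would discharge this by invoking the same Master Theorem machinery \citep{yang1,yang2} (equivalently the corresponding lemma of \citep{pmlr-v119-hron20a}), which packages precisely these Gaussian integration-by-parts estimates and guarantees the claimed convergence in probability.
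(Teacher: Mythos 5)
Your proposal reaches the right conclusion through a genuinely different organization of the same second-moment combinatorics. The paper never splits the double sum: it applies Chebyshev's inequality directly to $S=\frac{1}{H}\sum_{h,h'}C^{l,h,h'}_{IJ_{uv}}$, expands $\Exp{S^{2}}$ over $h_{1},h_{2},h'_{1},h'_{2}$ and all element indices, pulls the bounded factors out as a constant $\zeta$ via H\"older, discards the off-diagonal entries of $\hat{\Theta}^{l}$, and then enumerates the Wick pairings of heads ($h'_{1}=h'_{2},h_{1}=h_{2}$; $h'_{1}=h_{1},h'_{2}=h_{2}$; $h'_{1}=h_{2},h'_{2}=h_{1}$), observing in each case that at most four indices grow to infinity against the normalization $H^{2}d_{l-1}^{3}$. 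Your split into $h\neq h'$ (both value weights enter linearly and independently, so the term is mean zero) and $h=h'$ (contract the two copies of $W^{l,h}$, then integrate the lone explicit $c^{l,h}$-factor by parts against the $c$'s inside $\sigma_{1},\dot{\sigma}_{1}$, each pairing costing $\sigma_{c}/\sqrt{2d_{l-1}}$) is more structured and more explanatory: it isolates \emph{why} $C^{l,h,h'}$ dies while $A^{l,h,h'}$ survives, namely that the single explicit $c$-factor has no $O(1)$ partner. On one point you are more careful than the paper: the paper asserts $\Exp{S}=0$ outright, which implicitly treats the explicit $c$-factor as independent of the $c$ inside the nonlinearities; your integration-by-parts account correctly shows this mean is only $O\bigl(d_{l-1}^{-1/2}\bigr)$, which still suffices.

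There is, however, one imprecise step you should repair: attributing the off-diagonal convergence to the ``Law of Large Numbers across disjoint head pairs.'' The sum $\frac{1}{H}\sum_{h\neq h'}$ contains roughly $H^{2}$ terms against a single factor $H^{-1}$, so head-averaging alone cannot make it vanish. What actually saves the step is a variance computation: (i) only the $O(H^{2})$ head-paired covariances $\Exp{C^{l,h_{1},h'_{1}}_{IJ_{uv}}C^{l,h_{2},h'_{2}}_{IJ_{uv}}}$ are nonzero (this is where your conditional-independence observation enters), and (ii) each such covariance is $O(1/d_{l-1})$, because each individual $C^{l,h,h'}_{IJ_{uv}}$ carries the prefactor $d_{l-1}^{-3/2}$ against an effectively two-index sum of mean-zero terms once $\hat{\Theta}^{l}$ becomes diagonal, making it $O_{P}\bigl(d_{l-1}^{-1/2}\bigr)$. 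The resulting bound on the variance is $O(1/d_{l-1})$: the off-diagonal part vanishes because the width diverges, not because the number of heads does, and your argument as stated would fail if $d_{l-1}$ were fixed while $H\to\infty$. In the regime $\min\{H,d_{l-1}\}\to\infty$ the conclusion is unaffected, but this variance count — which is exactly what the paper's index-counting under $H^{2}d_{l-1}^{3}$ performs — is the step that must be written down rather than cited as an LLN.
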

    \begin{proof}
        \begin{gather*}
\frac{1}{H}\sum_{h, h'}^{H} C^{l,h,h'} = \frac{1}{H}\sum_{h, h'}^{H} \left( I_{n} \otimes \left(\frac{\sigma_{w}}{\sqrt{2d_{l}}}W^{l,h} G^{l-1} \right) \right)  \SSigma{l,h} \p{\vect(L^{l, h})}{\vect(G^{h-1})}    \hat{\Theta}^{l} 
 \left( \sigma_{1}(L^{l,h'}) \otimes \frac{\sigma_{w}}{\sqrt{d_{l-1}}}W^{l,h'} \right)^{T} \\
\intertext{focusing on block indices $IJ$}
 \frac{1}{H}\sum_{h, h'}^{H}C^{l,h,h'}_{IJ} = \frac{1}{H}\sum_{h, h'}^{H} \frac{\sigma_{w}}{\sqrt{2d_{l}}}W^{l,h} G^{l-1}\SSigma{l,h}_{II}  \left( \p{\vect(L^{l, h})}{\vect(G^{h-1})}  \hat{\Theta}^{l} \right)_{I}   \left( ( \sigma_{1}(L^{l,h'})_{j})^{T} \otimes \frac{\sigma_{w}}{\sqrt{d_{l-1}}}W^{l,h'^{T}} \right) \\
 \intertext{focusing on indices $uv$}
 \frac{1}{H}\sum_{h, h'}^{H}C^{l,h,h'}_{IJ_{uv}} = \frac{\sigma_{w}^{2}\sigma_{c}}{\sqrt{2}Hd_{l-1}^{\frac{3}{2}}}\sum_{h, h'}^{H}  \sum_{st} W_{u}^{l,h} G^{l-1}_{\cdot s} \SSigma{l,h}_{II_{ss}} \Tilde{A}_{II_{ss}}  ( c_{1}^{T} \hat{\Theta}^{l}_{s} + c_{2}^{T}\hat{\Theta}^{l}_{I} )  \left( ( \sigma_{1}(L^{l,h'})_{j})^{T} \otimes \frac{\sigma_{w}}{\sqrt{d_{l-1}}}(W^{l,h'}_{v})^{T} \right) \\
 =\frac{\sigma_{w}^{2}\sigma_{c}}{\sqrt{2}Hd_{l-1}^{\frac{3}{2}}}\sum_{h, h'}^{H} 
 \sum_{stz}  \sigma_{1}(L^{l,h'})_{[j, \lfloor t \bmod n \rfloor + 1]} W_{uz}^{l,h} G^{l-1}_{zs}
 \SSigma{l,h}_{II_{ss}} \Tilde{A}_{II_{ss}}
 ( c_{1}^{T} \hat{\Theta}^{l}_{st}+ c_{2}^{T}\hat{\Theta}^{l}_{It} )
 W^{l,h'^{T}}_{v}
 \intertext{Define $\nu(t):=\lfloor t \bmod n \rfloor + 1$}
 \frac{\sigma_{w}^{2}\sigma_{c}}{\sqrt{2}Hd_{l-1}^{\frac{3}{2}}}\sum_{h, h'}^{H} \sum_{stzlm}\sigma_{1}(L^{l,h'})_{j\nu(j)}W_{uz}^{l,h} G^{l-1}_{zs}
 \SSigma{l,h}_{II_{ss}} \Tilde{A}_{II_{ss}}
\left(c_{1l} W^{l,h'}_{vm} \hat{\Theta}^{l}_{st_{lm}} + c_{2l} W^{l,h'}_{vm} \hat{\Theta}^{l}_{It_{lm}} \right)
\end{gather*}
Using Chebyshevs Inequality 
 \begin{gather*}
    \mathbb{P} \left( |S - \mathbb{E}S| \geq \delta \right) \leq \frac{\Exp{S^{2}} -\Exp{S}^{2}}{\delta^{2}}
\end{gather*}
we  show that $\Exp{S^{2}} = \Exp{S^{2}}$ (and  $\Exp{S} =0$) for  min$\{H, d_{l-1}\} \to \infty$
to finish the proof.  \\
The cross terms where Expectation is taken of $c_{1}$ and $c_{2}$ are zero. \\
Defining $\Gamma_{h_{1}h_{2}h'_{1}h'_{2}s_{1}s_{2}} :=  \sigma_{1}(L^{l,h'_{1}})_{j\nu(t_{1})} 
\sigma_{1}(L^{l,h'_{2}})_{j\nu(t_{2})}
\SSigma{l,h_{1}}_{II_{s_{1}s_{1}}} 
\SSigma{l,h_{2}}_{II_{s_{2}s_{2}}}(L^{l,h'_{2}})_{j\nu(t_{2})} \Tilde{A}_{II_{s_{1}s_{1}}}
\Tilde{A}_{II_{s_{2}s_{2}}} $
\begin{gather*}
    \mathbb{E} \biggr[ \bigl(\frac{1}{H}\sum_{h, h'}^{H}C^{l,h,h'}_{IJ_{uv}} \bigr)^{2} \biggl] =\\
    \frac{\sigma_{w}^{4}\sigma_{c}^{2}}{2H^{2}d_{l-1}^{3}} \biggl( \mathbb{E} \biggl[
     \sum_{\substack{h_{1} h_{2} \\ h'_{1} h'_{2} \\  s_{1}  s_{2} \\ t_{1} t_{2} \\ z_{1} z_{2} \\  l_{1} l_{2} \\ m_{1} m_{2} }}
 \Gamma_{h_{1}h_{2}h'_{1}h'_{2}s_{1}s_{2}}
W_{uz_{1}}^{l,h_{1}}
G^{l-1}_{z_{1}s_{1}}
W_{uz_{2}}^{l,h_{2}}
G^{l-1}_{z_{2}s_{2}}
c_{1l_{1}} W^{l,h'_{1}}_{vm_{1}} \hat{\Theta}^{l}_{s_{1}t_{1_{l_{1}m_{1}}}}
c_{1l_{2}} W^{l,h'_{2}}_{vm_{2}} \hat{\Theta}^{l}_{s_{2}t_{2_{l_{2}m_{2}}}}
\biggr] \\
+ 
\mathbb{E} \biggl[
     \sum_{\substack{h_{1} h_{2} \\ h'_{1} h'_{2} \\  s_{1}  s_{2} \\ t_{1} t_{2} \\ z_{1} z_{2} \\  l_{1} l_{2} \\ m_{1} m_{2} }}
 \Gamma_{h_{1}h_{2}h'_{1}h'_{2}s_{1}s_{2}}
W_{uz_{1}}^{l,h_{1}}
G^{l-1}_{z_{1}s_{1}}
W_{uz_{2}}^{l,h_{2}}
G^{l-1}_{z_{2}s_{2}}
c_{2l_{1}} W^{l,h'_{1}}_{vm_{1}} \hat{\Theta}^{l}_{It_{1_{l_{1}m_{1}}}}
c_{2l_{2}} W^{l,h'_{2}}_{vm_{2}} \hat{\Theta}^{l}_{It_{2_{l_{2}m_{2}}}}
\biggr] \biggl)
\end{gather*}
By the boundedness of all components of $\Gamma$  (boundedness in the sense that for growing min$\{H,d_{l-1} \} \rightarrow \infty$ the expression is bounded by a constant not depending on $d_{l-1}$ or $H$) and Hölderlins Inequality it follows that  $\Gamma$ is bounded by a constant only depending polynomially on X. \\

\begin{gather*}
    \frac{\zeta\sigma_{w}^{4}\sigma_{c}^{2}}{2H^{2}d_{l-1}^{3}} \mathbb{E} \biggl[
     \sum_{\substack{h_{1} h_{2} \\ h'_{1} h'_{2} \\  s_{1}  s_{2} \\ t_{1} t_{2} \\ z_{1} z_{2} \\  l_{1} l_{2} \\ m_{1} m_{2} }}
W_{uz_{1}}^{l,h_{1}}
G^{l-1}_{z_{1}s_{1}}
W_{uz_{2}}^{l,h_{2}}
G^{l-1}_{z_{2}s_{2}}
c_{1l_{1}} W^{l,h'_{1}}_{vm_{1}} \hat{\Theta}^{l}_{s_{1}t_{1_{l_{1}m_{1}}}}
c_{1l_{2}} W^{l,h'_{2}}_{vm_{2}} \hat{\Theta}^{l}_{s_{2}t_{2_{l_{2}m_{2}}}}
\biggr] 
\end{gather*}
Note that for min$\{d_{l-1}, H\} \rightarrow \infty, \hat{\Theta}^{l}_{IJ_{lm}} \overset{P}{\longrightarrow} 0$ for $l \neq m$ (from Lemma \ref{ref1}). Therefore
\begin{gather*}
     \frac{\zeta \sigma_{w}^{4}\sigma_{c}^{2}}{2H^{2}d_{l-1}^{3}} \mathbb{E} \biggl[
     \sum_{\substack{h_{1} h_{2} \\ h'_{1} h'_{2} \\  s_{1}  s_{2} \\ t_{1} t_{2} \\ z_{1} z_{2} \\  l \\ m }}
W_{uz_{1}}^{l,h_{1}}
G^{l-1}_{z_{1}s_{1}}
W_{uz_{2}}^{l,h_{2}}
G^{l-1}_{z_{2}s_{2}}
c_{1l} W^{l,h'_{1}}_{vl} \hat{\Theta}^{l}_{s_{1}t_{1_{ll}}}
c_{1m} W^{l,h'_{2}}_{vm} \hat{\Theta}^{l}_{s_{2}t_{2_{mm}}}
\biggr] 
\end{gather*}
The case $u \neq v$
\begin{gather*}
    \frac{\zeta \sigma_{w}^{4}\sigma_{c}^{2}}{2H^{2}d_{l-1}^{3}} \mathbb{E} \biggl[
     \sum_{\substack{h \\ h' \\  s_{1}  s_{2} \\ t_{1} t_{2} \\ z \\  l  }}
W_{uz}^{l,h}
G^{l-1}_{zs_{1}}
W_{uz}^{l,h}
G^{l-1}_{zs_{2}}
c_{1l}c_{1l} W^{l,h'}_{vl}W^{l,h'}_{vl} \hat{\Theta}^{l}_{s_{1}t_{1_{ll}}}
  \hat{\Theta}^{l}_{s_{2}t_{2_{ll}}}
\biggr] \overset{\text{min$\{d_{l-1}, H \} \rightarrow \infty$}}{=} 0
\end{gather*}
For min$\{d_{l-1}, H \} \rightarrow \infty$, $h,h',z,l$ are the indices that go to infinity, but ${H^{2}d_{l-1}^{3}}$ is growing faster than $h,h',z,l$ and therefore the expression is zero. 
The argument can be repeated for the righthandside to conclude for the case $u \neq v$. \\
The case $u = v$
\begin{gather*}
     \frac{\zeta \sigma_{w}^{4}\sigma_{c}^{2}}{2H^{2}d_{l-1}^{3}} \mathbb{E} \biggl[
     \sum_{\substack{h_{1} h_{2} \\ h'_{1} h'_{2} \\  s_{1}  s_{2} \\ t_{1} t_{2} \\ z_{1} z_{2} \\  l \\ m }}
W_{uz_{1}}^{l,h_{1}}
G^{l-1}_{z_{1}s_{1}}
W_{uz_{2}}^{l,h_{2}}
G^{l-1}_{z_{2}s_{2}}
c_{1l} W^{l,h'_{1}}_{ul} \hat{\Theta}^{l}_{s_{1}t_{1_{ll}}}
c_{1m} W^{l,h'_{2}}_{um} \hat{\Theta}^{l}_{s_{2}t_{2_{mm}}}
\biggr] \\
=  \frac{\zeta \sigma_{w}^{4}\sigma_{c}^{2}}{2H^{2}d_{l-1}^{3}} \mathbb{E} \biggl[
     \sum_{\substack{h_{1} h_{2} \\ h'_{1} h'_{2} \\  s_{1}  s_{2} \\ t_{1} t_{2} \\ z_{1} z_{2} \\  l \\ m }}
W^{l,h'_{1}}_{ul}
W^{l,h'_{2}}_{um}
W_{uz_{1}}^{l,h_{1}}
W_{uz_{2}}^{l,h_{2}}
c_{1l}
c_{1m}
G^{l-1}_{z_{1}s_{1}}
G^{l-1}_{z_{2}s_{2}}
\hat{\Theta}^{l}_{s_{1}t_{1_{ll}}}
\hat{\Theta}^{l}_{s_{2}t_{2_{mm}}}
\biggr]
\end{gather*}
The sum over $h_{1}, h_{2}, h'_{1}, h'_{2}$ can be split over:
\begin{enumerate}
    \item $h'_{1}=h'_{2}$ and $h_{1}=h_{2}$ with $l=m$ and $z_{1}=z_{2}$
    \item $h'_{1}=h_{1}$ and $h'_{2}=h_{2}$ with $l=z_{1}$ and $m=z_{2}$
    \item $h'_{1}=h_{2}$ and $h'_{2}=h_{1}$ with $l=z_{2}$ and $m=z_{1}$
\end{enumerate}
With each case being a sum of which four indices grow to infinity. And therefore again the same argument holds.
\end{proof}
\clearpage

\bibliography{bibliography}

\end{document}